\newtheorem{thm}{Theorem}
\newtheorem{remark}{Remark}
\begin{document}
	\copyrightyear{2018}
	\acmYear{2018}
	\setcopyright{acmcopyright}
	\acmConference[MM '18]{2018 ACM Multimedia Conference}{October 22--26, 2018}{Seoul, Republic of Korea}
	\acmBooktitle{2018 ACM Multimedia Conference (MM '18), October 22--26, 2018, Seoul, Republic of Korea}
	\acmPrice{15.00}
	\acmDOI{10.1145/3240508.3240565}
	\acmISBN{978-1-4503-5665-7/18/10}
	
	\title{Learning Collaborative Generation Correction Modules \\for Blind Image Deblurring and Beyond}
	
	\author{Risheng Liu$^{1,2,*}$, Yi He$^{1,2}$, Shichao Cheng$^{2,3}$, Xin Fan$^{1,2}$, Zhongxuan Luo$^{1,2,3,4}$}
	
	\affiliation{
		\institution{$^1$DUT-RU International School of Information Science \& Engineering, Dalian University of Technology}
	}
	\affiliation{
		\institution{$^2$Key Laboratory for Ubiquitous Network and Service Software of Liaoning Province}
	}
	\affiliation{
		\institution{$^3$School of Mathematical Science, Dalian University of Technology}
	}
	\affiliation{
		\institution{$^4$Institute of Artificial Intelligence, Guilin University of Electronic Technology}
	}
	\email{{rsliu, xin.fan, zxluo}@dlut.edu.cn,{heyiking, shichao.cheng}@outlook.com}
	\thanks{$^*$Corresponding Author}
	\fancyhead{}
	\renewcommand{\shortauthors}{Risheng Liu, Yi He, Shichao Cheng, Xin Fan, Zhongxuan Luo}
	
	\begin{abstract}
		Blind image deblurring plays a very important role in many vision and multimedia applications. Most existing works tend to introduce complex priors to estimate the sharp image structures for blur kernel estimation. However, it has been verified that directly optimizing these models is challenging and easy to fall into degenerate solutions. Although several experience-based heuristic inference strategies, including trained networks and designed iterations, have been developed, it is still hard to obtain theoretically guaranteed accurate solutions. In this work, a collaborative learning framework is established to address the above issues. Specifically, we first design two modules, named Generator and Corrector, to extract the intrinsic image structures from the  data-driven and knowledge-based perspectives, respectively. By introducing a collaborative methodology to cascade these modules, we can strictly prove the convergence of our image propagations to a deblurring-related optimal solution.
		As a nontrivial byproduct, we also apply the proposed method to address other related tasks, such as image interpolation and edge-preserved smoothing. Plenty of experiments demonstrate that our method can outperform the state-of-the-art approaches on both synthetic and real datasets.
	\end{abstract}
	\begin{CCSXML}
		<ccs2012>
		<concept>
		<concept_id>10010147.10010178.10010224.10010226.10010236</concept_id>
		<concept_desc>Computing methodologies~Computational photography</concept_desc>
		<concept_significance>500</concept_significance>
		</concept>
		</ccs2012>
	\end{CCSXML}
	\ccsdesc[500]{Computing methodologies~Computational photography}
	
	\keywords{Blind image deblurring, collaborative learning, generator and corrector, theoretical convergence}

	\maketitle
	
\section{Introduction}
\label{sec:intro}
Blind image deblurring is a fundamental component in many multimedia and computer vision applications. This problem involves the estimation of latent sharp image and blur kernel from a blurry observation. The most commonly used formulation for the blurry phenomenon can be given as follows:
\begin{equation}
\mathbf{y}=\mathbf{u}\otimes \mathbf{k}+\mathbf{n},\label{eq:blur_model}
\end{equation}
where $\otimes$ denotes the convolution operator, $\mathbf{y}$, $\mathbf{u}$, $\mathbf{k}$, and $\mathbf{n}$ are the blurry observation, latent clear image, unknown blur kernel, and noises, respectively. 

\begin{figure}[tb]   
	\centering \begin{tabular}{c@{\extracolsep{0.5em}}c}							     \includegraphics[width=0.23\textwidth]{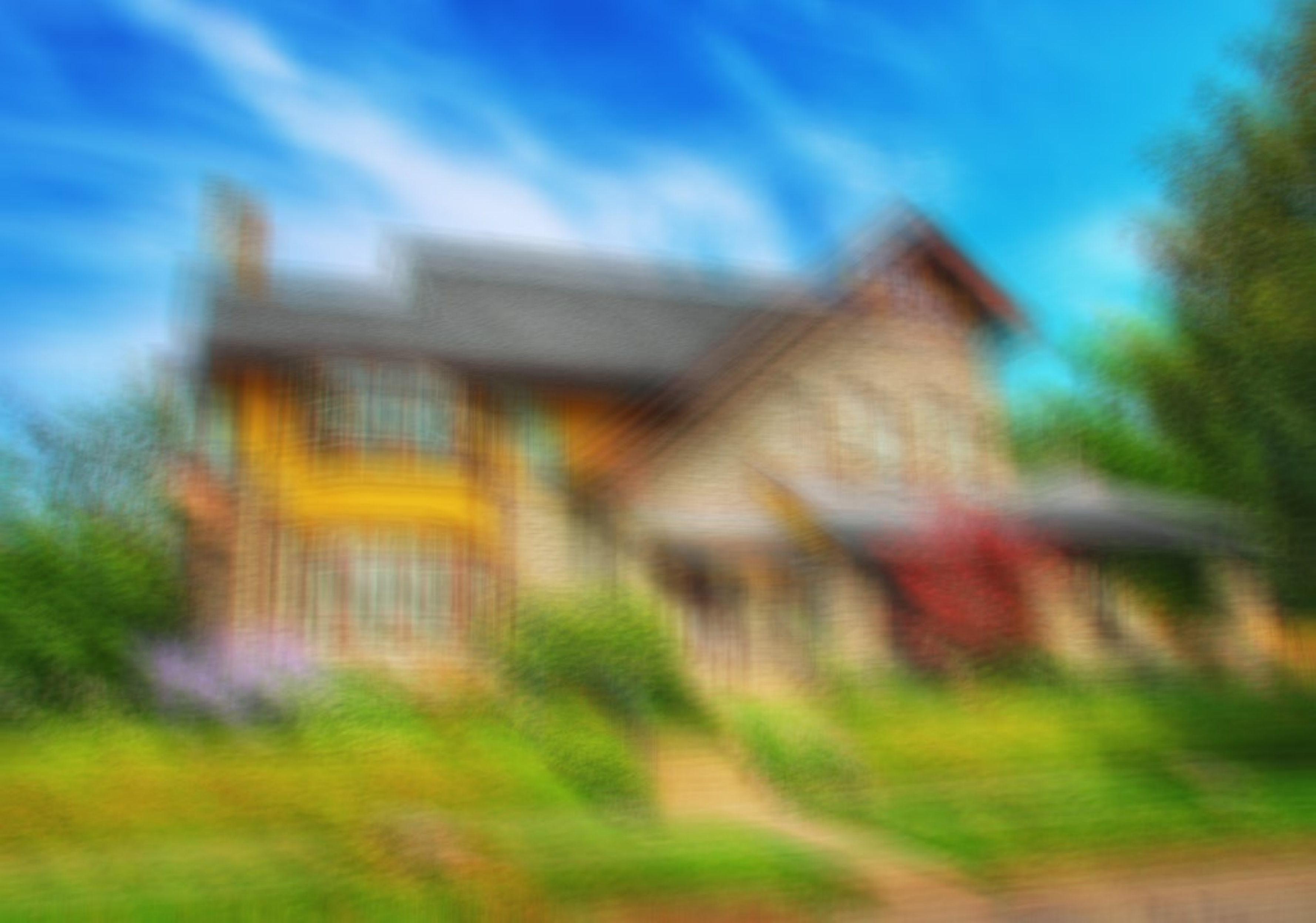}
		&\includegraphics[width=0.23\textwidth]{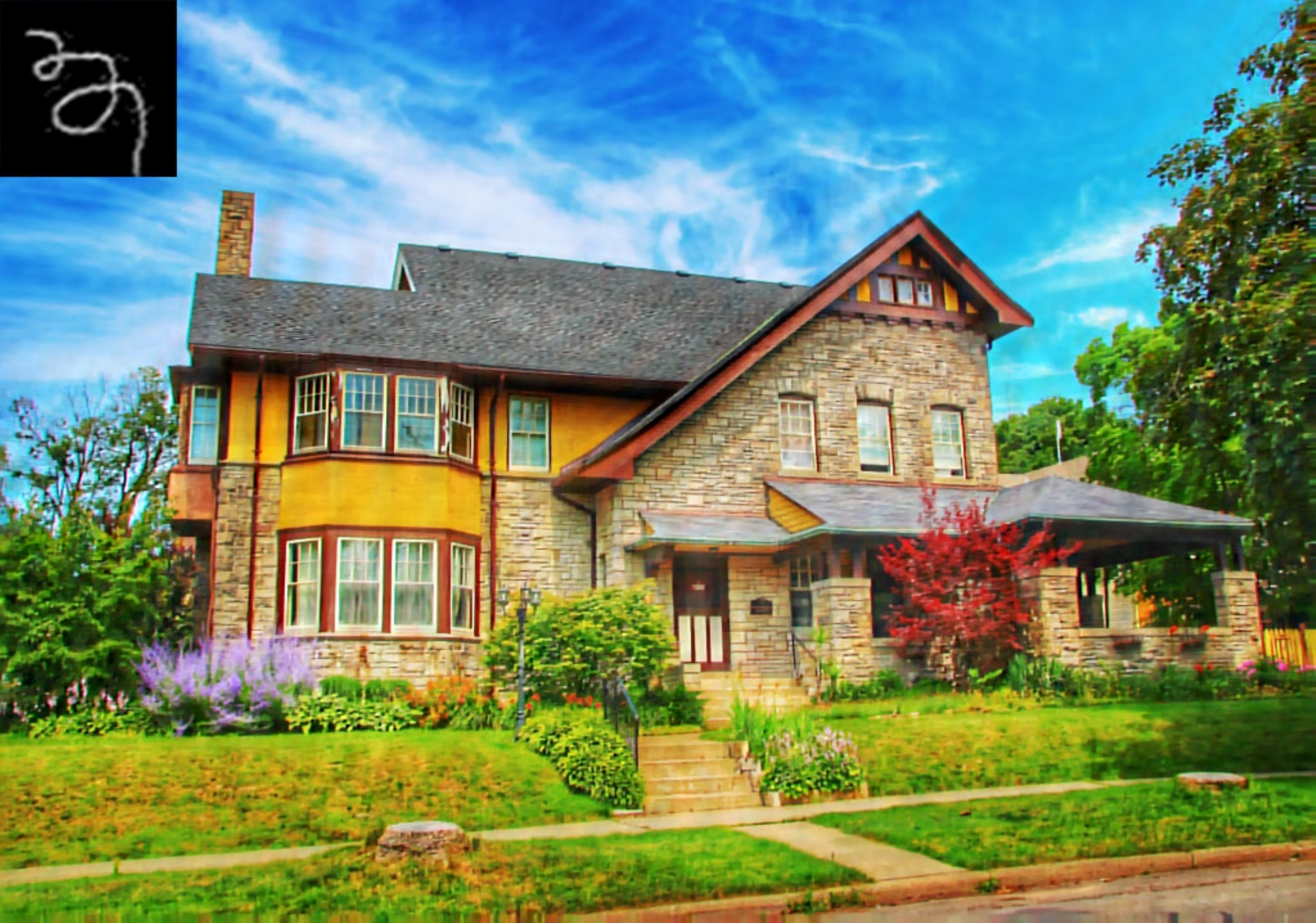}\\
		(a) Input & (b) Deblurring \\
		\multicolumn{2}{c}{\hspace{0em}\includegraphics[width=0.47\textwidth]{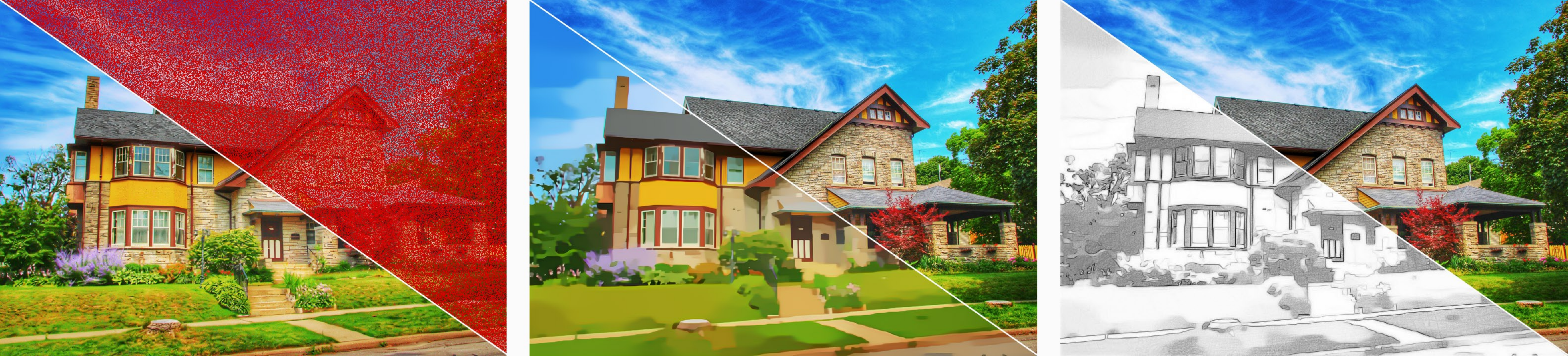}}\\
		\multicolumn{2}{c}{\hspace{0em}(c) Interpolation\hspace{1em}(d) Edge-preserved\hspace{1em}(e) Pencil sketch}\\
		\multicolumn{2}{c}{\hspace{9.5em} smoothing \hspace{4em}  rendering}
	\end{tabular}
	\caption{Illustrating the performance of GCM for blind image deblurring and other applications (e.g., image interpolation, edge-preserved smoothing and pencil sketch rending). On the bottom row, the regions above and below diagonal are the input and output of GCM for these problems, respectively.
	}
	\label{fig:first_figure}
\end{figure}

This problem is highly ill-posed, thus the main attention of most existing researches~\cite{levin2011efficient,sun2013edge,xu2010two,cheng2018designing} focuses on introducing various priors to regularize the solution space, which naturally suggests the Maximum a Posteriori (MAP)~\cite{perrone2014total,sun2013edge,pan2014deblurring} methodology for latent image estimation. Although straightforward, there are many problems with existing MAP-based deblurring approaches. For example, poor priors may lead to ineffective global minimum~\cite{fergus2006removing} and the standard optimization process can only obtain the suboptimal local solutions~\cite{xu2010two}. Therefore, generating useful solutions requires a delicate balancing of various factors such as dynamic noise levels, trade-off parameter values, and heuristic regularizations. Variational Bayesian (VB)~\cite{babacan2009variational,levin2011efficient,babacan2012bayesian} strategy that marginalizes over the whole image space can lead to more accurately focus on the kernel estimation process. Unfortunately, these VB models often involve integrals and hidden variables, thus their inferences are more challenging and time-consuming. 

Recently, some works try to train deep networks to directly estimate the clear images from blurred observations~\cite{Nah2016Deep,sun2015learning,nimisha2017blur}. It can be seen that these methods completely discard the physical principles from models. So they are completely dependent on the training data. However, it is indeed difficult to collect or generate sufficient clear/blur image training pairs in real-world scenarios. Besides, 
the current end-to-end network learning strategies can only be used to remove small blurs and are sensitive to corrupted observations.

\begin{figure*}[tb]
	\centering
	\includegraphics[width=0.9\textwidth]{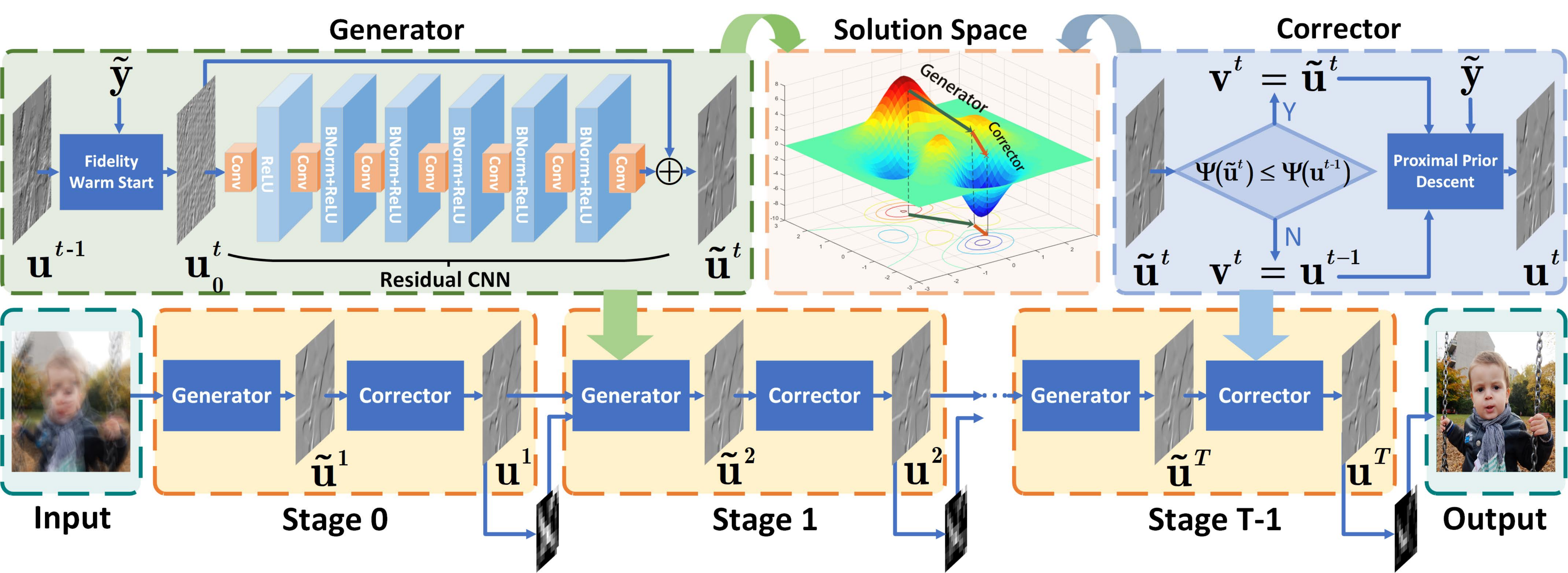}
	\caption{The pipeline of our collaborative GCM. We present the structures of the Generator and Corrector and illustrate how to navigate our model with collaborative effects to obtain the optimal solution on the top row. The main propagation for blind image deblurring is illustrated on the bottom row.}
	\label{fig:process}
\end{figure*}
In this work, we develop a novel collaborative learning framework, named Generation Correction Module (GCM), to integrate the advantages but avoid limitations of the MAP methodology and deep networks for blind image deblurring. Indeed, there are two fundamental building-blocks, (i.e., Generator and Corrector) in GCM. As for the Generator, we implement it with convolutional network architectures and learn parameters on collected training data to extract the latent sharp image structures. On the other hand, we also design a Corrector based on the mathematical image modeling to navigate our image estimation. By introducing a principled strategy to cascade these modules, we finally obtain a convergent image propagation for blind image deblurring. Thanks to the high flexibility of our framework, it is also feasible to apply GCM to other related vision and multimedia applications, such as image interpolation, edge-preserved smoothing, and pencil sketch rendering. Fig.~\ref{fig:first_figure} shows the performance of GCM on various applications.

Our contributions can be summarized as follows:
\begin{itemize}
	\item We establish two fundamental propagative modules (i.e., Generator and Corrector) to respectively learn latent image structures from training data and investigate principled mathematical rules for image propagation navigation.
	\item We provide a collaborative methodology to cascade our Generators and Correctors and prove in theorem that the image sequences generated by GCM can converge to our desired latent solution. We emphasize that our analysis actually also provides a new perspective to design feedforward propagations for deep models.
	\item Extensive experiments demonstrate that GCM not only obtains state-of-the-art results on the problem of blind image deblurring, but also achieves very good performance on a series of other related vision and multimedia applications.
\end{itemize}

\section{Related Works}
\label{sec:related-work}
In this section, we briefly review some related works on the prior models and the existing inference strategies for blind image deblurring. 
Specifically, the most commonly used deblurring formulation is the following regularized variational minimization model
\begin{equation}
\min\limits_{\mathbf{u}}\Psi(\mathbf{u})=f(\mathbf{u}) +  \phi(\mathbf{u}),\label{eq:model-u}
\end{equation}
where $f$ and $\phi$ are the fidelity and prior terms, respectively. In general, $f$ is defined based on the convolution model in Eq.~\eqref{eq:blur_model} as
$f(\mathbf{u})=\frac{1}{2}\|\mathbf{u}\otimes\mathbf{k}-\mathbf{y}\|^2$, 
where $\mathbf{y}$ denotes our observations\footnote{Please notice that in this work $\mathbf{y}$ could represent the image in either the original pixel or gradient domains.}.
Then to alleviate the intrinsic indeterminacy, some prior assumptions are necessary to constrain the space of our candidate solutions.

\textbf{Image Prior Models:} In existing literatures, different types of image priors (i.e., $\phi$ in Eq.~\eqref{eq:model-u}) have been developed to regularize the solution space.
Most MAP approaches focus on designing explicit prior formulations to fit the latent image distributions. 
For example, 
Fergus \emph{et al.}~\cite{fergus2006removing} introduced the heavy-tailed distribution prior on image gradient domain. 
Perrone \emph{et al.}~\cite{perrone2014total} used the Total Variation (TV) prior as the regularization. 
Sun \emph{et al.}~\cite{sun2013edge} aimed to learn patch priors from natural images which can choose the sharp image from blurry ones.
In~\cite{pan2014deblurring}, Pan \emph{et al.} adopted a simple $\ell_0$ prior on both intensity and gradient to handle text images. 
As for VB, the work in~\cite{levin2011efficient} took all possible latent images into consideration and tried to select the best kernel by marginalizing all of them. Babacan \emph{et al.}~\cite{babacan2012bayesian} used super-Gaussian sparse image prior to build a general and flexible method for blind image deblurring. In fact, these manually designed priors often require additional and delicate efforts to balance their trade-off parameters, correct the iteration errors and dynamic noise levels. Very recently, some plug-and-play and network-based priors \cite{Chan2017Plug,zhang2017learning,Zhang2016Learning,liu2017proximal,liu2018toward} have also been introduced to iteratively regularize the latent image estimation process. However, these existing implicit priors can only be used for non-blind image restoration tasks. 

\textbf{Heuristic Inference Strategies:} 
As mentioned above, due to the ill-posedness and complex regularization strategies, standard optimization schemes are indeed not efficient for blind image deconvolution problem. For example, it has been proved in \cite{levin2011efficient} that the exact optimization strategy with poor priors may lead to degenerate global solutions (a.k.a, no-blur solution). Therefore, different heuristic reformulations of subproblems and additional regularizers with turning parameters are introduced for the inference process. 
For example, some works adjust the trade-off parameters~\cite{pan2016blind,cho2009fast} or iteratively change the prior terms~\cite{zuo2016learning} based on experiences to manually control the optimization process to avoid trivial solutions.
Very recently, the learnable strategies~\cite{liu2016learning,chen2015learning,schmidt2016cascades,zhang2017learning,Zhang2016Learning,Kruse2017Learning,liu2018learningaggregated} have also been introduced to help estimate the sharp image structures. However, both the manually designed tricks and trained networks will break the convergence guarantees of the standard optimization schemes. Thus we cannot obtain any theoretical guarantees for existing blind image deblurring methods. Moreover, designing/training these heuristic strategies need extremely delicate skills and extensive experiences.

\textbf{Other Related Applications}: It should be noticed that the schematic variational energy in Eq.~\eqref{eq:model-u} can also be utilized to formulate other computer vision and multimedia tasks.
For example, by defining $f$ with physical rules of different problems and enforcing other task-related priors for $\phi$, a variety of applications, such as image interpolation and edge-preserved smoothing, can all be formulated by Eq.~\eqref{eq:model-u}.

\section{The Collaborative Modules}
In this section, we propose a collaborative framework to learn Generation and Correction Modules (GCM) for latent image propagation. The strict theoretical analysis on GCM is also established at the end of this section.

\subsection{Generator with Fidelity Warm Start}
Inspired by the success of deep networks in visual processing areas, we would like to first establish our Generator as a parameterized network architecture (denoted as $\mathcal{N}$), i.e., at $t$-th stage, we consider 
\begin{equation}
\tilde{\mathbf{u}}^{t+1}=\mathcal{N}^t(\mathbf{u}_0^{t+1};\bm{\omega}^t),
\end{equation} 
where $\bm{\omega}^t$ is the learnable parameters, $\mathbf{u}_0^{t+1}$ and $\tilde{\mathbf{u}}^{t+1}$ are the input and output of the $t$-th Generator (the left zone of the top row in Fig.~\ref{fig:process}), respectively. As for the structure of $\mathcal{N}$, we just adopt a residual CNN module, which consists of seven cascaded ``Convolution+ReLU" blocks.
Following each convolutional layer, a batch normalization trick is also introduced for a stable training process. 

Rather than directly considering the output of the last state (i.e., $\mathbf{u}^t$) as $\mathbf{u}_0^{t+1}$, here we design a fidelity based warm start technique to initialize it as follows
\begin{equation}
\mathbf{u}_0^{t+1}=\arg\min\limits_{\mathbf{u}}  f(\mathbf{u}) + \gamma\|\mathbf{u}-\mathbf{u}^t\|^2,\label{eq:warm-start}
\end{equation}
where $\gamma>0$ is a parameter. It is easy to understand that Eq.~\eqref{eq:warm-start} actually provides a trade-off between the last updated variable (i.e., $\mathbf{u}^t$) and the physical rules of the task (i.e., $f$), thus provide a nice guidance for image propagation. By calculating the closed-form solution of Eq.~\eqref{eq:warm-start} with Fast Fourier Transform~(FFT)~\cite{pan2014deblurring}, we can also consider the warm-start process as our first model-based layer of the Generator.

\subsection{Corrector by Proximal Prior Descent}
Since generating the latent image structure is a highly ill-posed problem, 
only performing Generator may not guarantee the exact recovery of our desired optimal solution. Moreover, no prior knowledge is enforced into the current scheme, thus it is natural to introduce another module to incorporate our prior assumptions of the latent image structure into the propagation. 
Thus we aim to design an architecture to correct the propagation error of the Generator. 

Specifically, our Corrector is designed based on the general variational energy in Eq.~\eqref{eq:model-u} and a monotony-based criterion on the propagated image sequence. That is, we first formulate our prior as $\phi$ in Eq.~\eqref{eq:model-u}\footnote{In general, $\phi$ can be defined to reveal our assumptions of the desired distribution for the latent images. In this work, we just adopt the hype-Laplacian prior~\cite{krishnan2009fast}, thus result to $\ell_{p}$-norm as $\phi$ to navigate image propagation. }. Then by checking the objectives of $\tilde{\mathbf{u}}^{t+1}$, we define a monitor variable $\mathbf{v}^{t+1}$ as $\mathbf{v}^{t+1}=\tilde{\mathbf{u}}^{t+1}$ if $\Psi(\tilde{\mathbf{u}}^{t+1})\leq \Psi(\mathbf{u}^t)$ and $\mathbf{v}^{t+1}=\mathbf{u}^{t}$ otherwise. Then the formal updating of $\mathbf{u}^{t+1}$ can be obtained by optimizing Eq.~\eqref{eq:model-u} with $\mathbf{v}^{t+1}$ by the following proximal-gradient scheme
\begin{equation}
\mathbf{u}^{t+1}\in\mathtt{prox}_{ \phi}^{\mu^t}\left(\mathbf{v}^{t+1}-\mu^t\nabla f(\mathbf{v}^{t+1})\right),\label{eq:pg}
\end{equation}
where $\mathtt{prox}_{ \phi}^{\mu^t}$ denotes the proximal operation\footnote{Please refer to ~\cite{Zuo2013A} for the calculations of the general $\ell_p$-norm related proximal operations.} of $\phi$ with step size $\mu^t>0$.

Intuitively, we first have that the proposed Corrector actually provides a simple methodology to navigate the image propagations to guarantee the monotony of our objectives. More importantly, we will demonstrate in the following that thanks to the proposed Corrector, we can obtain strictly proved nice convergence properties of our GCM.

\subsection{GCM with Theoretical Guarantee}
We first summarize the complete GCM framework in Alg.~\ref{alg:deconvolution} and express the pipeline of GCM in Fig.~\ref{fig:process}. Please notice that due to the CNN-based Generator, GCM is indeed not a standard optimization scheme. \emph{Thus existing convergence analysis is not available for the propagations generated by our GCM.} But fortunately, we will demonstrate in the following theorem that even with the inexact and learnable architectures, the convergence of GCM can still be strictly guaranteed.

\begin{thm}\label{theo:convergence}
	Let $\{\mathbf{u}^t\}$ be the image sequence generated by our GCM (i.e., Alg.~\ref{alg:deconvolution}). Then we have that the objectives (i.e., $\Psi(\mathbf{u}^t)$) are non-increasing, i.e., $\Psi(\mathbf{u}^{t+1})\leq\Psi(\mathbf{u}^{t})$ for $t=0,1,\cdots$. Moreover, any accumulation point of $\{\mathbf{u}^t\}$ is just the critical point of $\Psi$ (i.e., it satisfies the first-order necessary optimal condition of Eq.~\eqref{eq:model-u}). 
\end{thm}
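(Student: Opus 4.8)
The plan is to split the argument along the two assertions of the theorem: first the monotonicity of $\Psi(\mathbf{u}^t)$, and then the subsequential convergence to critical points. The guiding observation is that the Generator $\mathcal{N}^t$ (and hence also the warm start in Eq.~\eqref{eq:warm-start}) is treated as an \emph{arbitrary} map with no descent property whatsoever, so none of the decrease can be credited to it; the whole analysis rests on the monitor variable $\mathbf{v}^{t+1}$, which by construction never increases $\Psi$ above $\Psi(\mathbf{u}^t)$, together with the proximal-gradient Corrector step in Eq.~\eqref{eq:pg}, which supplies the actual sufficient decrease. Throughout I would use that $f(\mathbf{u})=\tfrac12\|\mathbf{u}\otimes\mathbf{k}-\mathbf{y}\|^2$ is smooth with a globally Lipschitz gradient (constant $L$ equal to the squared operator norm of convolution with $\mathbf{k}$), that $\Psi=f+\phi$ is continuous and bounded below (since $f\ge 0$ and the $\ell_p$ prior $\phi\ge 0$), and that the step sizes obey the standard restriction $\mu^t\in[\mu_{\min},\mu_{\max}]\subset(0,1/L)$.

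For monotonicity, I would first note that by the definition of $\mathbf{v}^{t+1}$ one has $\Psi(\mathbf{v}^{t+1})\le\Psi(\mathbf{u}^t)$ in either branch of the monotonicity test. It then remains to show $\Psi(\mathbf{u}^{t+1})\le\Psi(\mathbf{v}^{t+1})$, which is the one-step sufficient-decrease property of proximal-gradient descent evaluated at $\mathbf{v}^{t+1}$: combining the descent lemma for $f$ with the fact that $\mathbf{u}^{t+1}$ is a global minimizer of the proximal subproblem defining $\mathtt{prox}_{\phi}^{\mu^t}$ gives
\[
\Psi(\mathbf{u}^{t+1})\ \le\ \Psi(\mathbf{v}^{t+1})-\Big(\tfrac{1}{2\mu^t}-\tfrac{L}{2}\Big)\|\mathbf{u}^{t+1}-\mathbf{v}^{t+1}\|^2 .
\]
This inequality is valid even when $\phi$ is nonconvex (as for a hyper-Laplacian $\ell_p$ prior with $p<1$), because only the minimality of $\mathbf{u}^{t+1}$ in the subproblem and the smoothness of $f$ are used. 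Chaining the two bounds yields $\Psi(\mathbf{u}^{t+1})\le\Psi(\mathbf{v}^{t+1})\le\Psi(\mathbf{u}^t)$, and since $\Psi$ is bounded below, $\{\Psi(\mathbf{u}^t)\}$ converges to some $\Psi^\star$.

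From the displayed inequality I would then extract that the Corrector's input--output gap vanishes: telescoping and using $\tfrac{1}{2\mu^t}-\tfrac{L}{2}\ge c>0$ gives $\sum_t\|\mathbf{u}^{t+1}-\mathbf{v}^{t+1}\|^2<\infty$, hence $\|\mathbf{u}^{t+1}-\mathbf{v}^{t+1}\|\to 0$. Now let $\mathbf{u}^\star$ be an accumulation point, with $\mathbf{u}^{t_j}\to\mathbf{u}^\star$; then $\mathbf{v}^{t_j}\to\mathbf{u}^\star$ as well. Writing the first-order optimality condition of the proximal subproblem that produces $\mathbf{u}^{t_j}$ from $\mathbf{v}^{t_j}$ and rearranging, one gets
\[
\bm{\xi}^{t_j}:=\tfrac{1}{\mu^{t_j-1}}\big(\mathbf{v}^{t_j}-\mathbf{u}^{t_j}\big)-\nabla f(\mathbf{v}^{t_j})+\nabla f(\mathbf{u}^{t_j})\ \in\ \partial\Psi(\mathbf{u}^{t_j}),
\]
where $\partial$ denotes the limiting subdifferential (needed since $\phi$ may be nonconvex). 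Because $\|\mathbf{v}^{t_j}-\mathbf{u}^{t_j}\|\to0$, $\mu^{t_j-1}\ge\mu_{\min}$, and $\nabla f$ is Lipschitz, $\bm{\xi}^{t_j}\to\mathbf{0}$; moreover $\mathbf{u}^{t_j}\to\mathbf{u}^\star$ and, by continuity of $\Psi$, $\Psi(\mathbf{u}^{t_j})\to\Psi(\mathbf{u}^\star)$. Closedness of the graph of $\partial\Psi$ then gives $\mathbf{0}\in\partial\Psi(\mathbf{u}^\star)$, i.e., $\mathbf{u}^\star$ satisfies the first-order optimality condition of Eq.~\eqref{eq:model-u}.

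The step I expect to be the main obstacle --- or rather the point that departs from a textbook proof --- is the handling of the black-box Generator: since $\mathcal{N}^t$ can increase the objective arbitrarily, GCM is \emph{not} a descent method and standard proximal-gradient convergence theory does not apply directly. The resolution, which is also the crux of the design, is that the monitor variable $\mathbf{v}^{t+1}$ reinstates a per-iteration descent structure, after which the remaining steps are routine. A secondary technicality is the nonconvexity of the hyper-Laplacian prior, which requires working with limiting subdifferentials and their closedness rather than convex calculus, but this does not change the overall scheme.
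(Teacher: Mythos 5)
Your proof is correct and follows essentially the same route as the paper's: the monitor variable plus the sufficient-decrease inequality
$\Psi(\mathbf{u}^{t+1})\le\Psi(\mathbf{v}^{t+1})-\bigl(\tfrac{1}{2\mu^t}-\tfrac{L}{2}\bigr)\|\mathbf{u}^{t+1}-\mathbf{v}^{t+1}\|^2$
give monotonicity, telescoping gives $\|\mathbf{u}^{t+1}-\mathbf{v}^{t+1}\|\to 0$, and the first-order condition of the proximal subproblem together with closedness of the limiting subdifferential gives criticality. The one place you are slightly less general is the step ``by continuity of $\Psi$, $\Psi(\mathbf{u}^{t_j})\to\Psi(\mathbf{u}^\star)$'': the paper assumes only that $\phi$ is lower semi-continuous (so as to also cover the $\ell_0$ prior used for edge-preserved smoothing) and instead obtains $\phi(\mathbf{u}^{t_j})\to\phi(\mathbf{u}^\star)$ by testing the minimality of $\mathbf{u}^{t_j}$ in the proximal subproblem against $\mathbf{u}^\star$ to get the $\limsup$ bound, with lower semi-continuity supplying the matching $\liminf$ bound.
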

\begin{proof}
Please see \textbf{Appendix}~\ref{app:appendix} for the proof.	
\end{proof}
\begin{remark}
	First, it should be emphasized that Theorem~\ref{theo:convergence} actually reveals that even with the CNN-based Generator, we can still obtain some nice convergence guarantees to GCM. Please notice that our results are even no less than these mathematically designed first-order numerical algorithms in nonconvex optimization areas (e.g., \cite{li2015accelerated,li2017convergence}). 
	
	On the other hand, from the deep learning perspective, our GCM actually provides a simple and generic methodology to guide the design of network architectures to obtain the convergent feedforward variable propagations. Thus our above theoretical results should also provide insights to other related learning, vision and multimedia areas.
\end{remark} 

\begin{algorithm}
	\caption{Generation Collaboration Module (GCM)}\label{alg:deconvolution}
	\begin{algorithmic}[1]
		\REQUIRE The observation $\mathbf{y}$ and necessary parameters.
		\ENSURE Latent image estimation $\mathbf{u}^T$.		
		\STATE Initialization $\mathbf{u}^0 = \mathbf{y}, \gamma, 0<\mu^t < 1/L $;
		\FOR{$t=0,\dots,T-1$}
		\STATE \% \textbf{Generator (i.e., Steps 4-5)}:
		\STATE $\mathbf{u}_0^{t+1} = \arg\min\limits_{\mathbf{u}}  f(\mathbf{u}) + \gamma\|\mathbf{u}-\mathbf{u}^t\|^2$;
		\STATE $\tilde{\mathbf{u}}^{t+1}=\mathcal{N}^t(\mathbf{u}_0^{t+1};\bm{\omega}^t) $;
		\STATE \% \textbf{Corrector (i.e., Steps 7-12)}: 
		\IF {$\Psi(\tilde{\mathbf{u}}^{t+1}) \leq \Psi(\mathbf{u}^{t})$ }	\label{step:criterion-begin}
		\STATE $\mathbf{v}^{t+1} = \tilde{\mathbf{u}}^{t+1}$;
		\ELSE
		\STATE $\mathbf{v}^{t+1} = \mathbf{u}^t $; 
		\ENDIF \label{step:criterion-end}
		\STATE $\mathbf{u}^{t+1} \in \mathtt{prox}_{ \phi}^{\mu^t}\left(\mathbf{v}^{t+1}-\mu^t\nabla f(\mathbf{v}^{t+1})\right)$. \label{step:pg_u_new}
		\ENDFOR	
	\end{algorithmic}
\end{algorithm}

\section{Applications}
Now we demonstrate how to apply our GCM to address blind image deblurring. Thanks to the flexibility of our Generation and Correction modules, GCM indeed can also be applied to address other related multimedia applications.

\subsection{Blind Image Deblurring}
As discussed in Sec.~\ref{sec:related-work}, generating latent image with rich salient edges and sharp structures often plays important role in blind image deblurring. Therefore, we would like to train our Generator in image gradient domain as follows. We first calculate the warm start with the fidelity $f(\mathbf{u}) = \|\mathbf{u}\otimes\mathbf{k}-\mathbf{y}\|^2 $, where $\mathbf{y}$ denotes the observation in gradient domain. Then we train the network architectures $\mathcal{N}$ using the solution of the warm start process (as input) and the gradient of clear images (as output). As for Corrector, we adopt the nonconvex $\ell_{0.8}$-norm as our prior term since it can properly preserve the rich structure information of the latent image.

Then it is natural to nest our GCM based image propagation into the kernel estimation process. Here we just follow the most commonly used strategies (e.g., \cite{xu2010two,pan2014deblurring}) to update $\mathbf{k}$ at $k$-th stage as follows
\begin{equation}
\mathbf{k}^{t+1}=\arg\min\limits_{\mathbf{k}\in\Delta}\|\mathbf{u}^{t+1}\otimes\mathbf{k}-\mathbf{y}\|^2 + \eta\|\mathbf{k}\|^2,\label{eq:solve-k}
\end{equation}
where $\Delta=\{\mathbf{k}|\mathbf{1}^T\mathbf{k}=1,[\mathbf{k}]_i\geq 0\}$ denotes the unit simplex, $\eta$ is a trade-off parameter and $\mathbf{u}^{t+1}$ is the output of the $t$-th Corrector. The most widely used coarse-to-fine strategy~\cite{sun2013edge,pan2014deblurring,pan2016blind} is also adopted to improve the robustness of the deblurring process. Finally, the latent image can be obtained by any given non-blind deconvolution method with the final estimated kernel.

\subsection{Byproduct Applications}
As nontrivial byproducts, we would also like to demonstrate how to apply GCM to other image related vision and multimedia applications.

\textbf{Image Interpolation}: The purpose of this task is to remove corrections (e.g., text, blocking or mask) from the partially invisible observation. When using GCM to deal with this problem, the main difference with deblurring is the fidelity model. That is, we just set the fidelity as $f(\mathbf{u}) = \frac{1}{2} \|\mathbf{u} \odot \mathbf{M} - {\mathbf{y}} \|^2$ for Generator. Here $\odot$ denotes the pixel multiplication, $\mathbf{M}$ is the mask matrix and ${\mathbf{y}}$ is the observation in the image domain. Please notice that the training phase is also performed on the original image domain for this task. 

\textbf{Edge-preserved Smoothing}: This is a fundamental image preprocessing step, which aims to remove the redundant textures and noises while preserving the main structures. Many multimedia applications, such as texture and edge extraction \cite{Xu2012Structure} and pencil sketch rending \cite{Lu2012Combining} are based on the results of edge-preserved smoothing. 
As for this task, we just consider the fidelity as $f(\mathbf{u}) = \frac{1}{2} \|\mathbf{u} - \mathbf{y} \|^2$ for Generator and set $\phi$ as $\ell_{0}$-norm for Corrector.

\section{Discussions}
Here we would like to discuss and highlight some important aspects of GCM. 

\subsection{Theoretically Convergent Ensemble Framework for Image Modeling}
Indeed, our GCM can be viewed as a general framework to integrate deep network architectures (i.e., Generator) and physical principles (i.e., Corrector) to address not only blind image deblurring, but also other related vision and multimedia tasks. The main advantage against existing heuristic ensemble strategies (e.g., \cite{zhang2017learning,Kruse2017Learning}) is that we can strictly prove the convergence of the hybrid propagations in GCM (i.e., Theorem~\ref{theo:convergence}), while till now no theoretical guarantees can be provided for these naive combinations in existing works.

\subsection{Analogy to Adversarial Learning Methodology}
Since there exist two cascaded modules in our GCM (i.e., Generator and Corrector), which is similar to that in the popular adversarial learning methods (i.e., Generator and Discriminator in Generative Adversarial Network, GAN for short), we would like to provide some brief comparisons between these two learning methodologies. First, there exists a network-based Generator in both GCM and GAN. But as for the other module, one may just adopt a heuristic classification network as the Discriminator in GAN, while we design our Corrector based on the physical rules (with optimization issue) for the given task. More importantly, different from the adversarial criterion in GAN, there actually exists an implicit collaborative relationship between our Generator and Corrector. Finally, till now it is also difficult to analyze the intrinsic properties of the feedforward propagation generated by GAN. In contrast, we have strictly proved the convergence of our proposed collaborative GCM image propagation .

\section{Experimental Results}
We first conduct experiments to verify the mechanism of GCM. Then a range of results are demonstrated to evaluate GCM on blind image deblurring. Finally, we show the performance of GCM on other related applications, such as image interpolation and edge-preserved smoothing.

\subsection{Experimental Setup} 
To provide fair comparisons on the blind image deblurring problem, we adopt ~\cite{zoran2011learning} as the final non-blind deconvolution process for all the compared methods. 
We also execute these approaches with their default parameter settings. For the training data of GCM, we randomly select 800 natural images from the training set of the ImageNet database~\cite{deng2009imagenet}. Then we crop these images into small patches with the size $35\times35$ and adopt data augmentation to enhance the generalization ability of our network. 
Finally, Adam ~\cite{kingma2014adam} is performed to train our Generator. As for the other algorithmic parameters of GCM, we empirically set $\gamma=4e-3$, $\mu^{t}=1e-6$, and $L=2$. We perform all the experiments on a PC with 8 cores Intel i7 CPU, NVIDIA GTX 1060 GPU and 32 GB RAM.

\subsection{Model Verification}
\label{sec:model_verification}
To verify the efficiency of our proposed collaborative learning strategies, we first compare the deblurring performance of GCM with different settings on an example image.
In Fig.~\ref{fig:GCM}, we denote the naive cascade of the designed Generator and Corrector as ``Generator" and ``Corrector'', respectively. While ``GCM'' denotes our principled ensemble of these two modules as that in Alg.~\ref{alg:deconvolution}. It can be seen in Fig.~\ref{fig:GCM} (a)-(b) that the Corrector almost failed on the blind deblurring problem (i.e., the lowest quantitative scores). This is because that we may not obtain the intrinsic sharp image structures only using the model-based iterations. The Generator obtained a better performance than the Corrector. However, there always exist oscillations at the first several stages. In contrast, we can observe that the ensemble of Generator and Corrector (i.e., GCM) obtained the best quantitative performance.
The visual comparisons of the finally restored images in Fig.~\ref{fig:GCM} (c)-(e) also verified that our collaborative learning strategy obtain much better results than the naive cascade of either Generator or Corrector.

\begin{figure}[tb]
	\centering \begin{tabular}{c@{\extracolsep{0.2em}}c@{\extracolsep{0.2em}}c}
		\multicolumn{3}{c}{\includegraphics[width=0.45\textwidth]{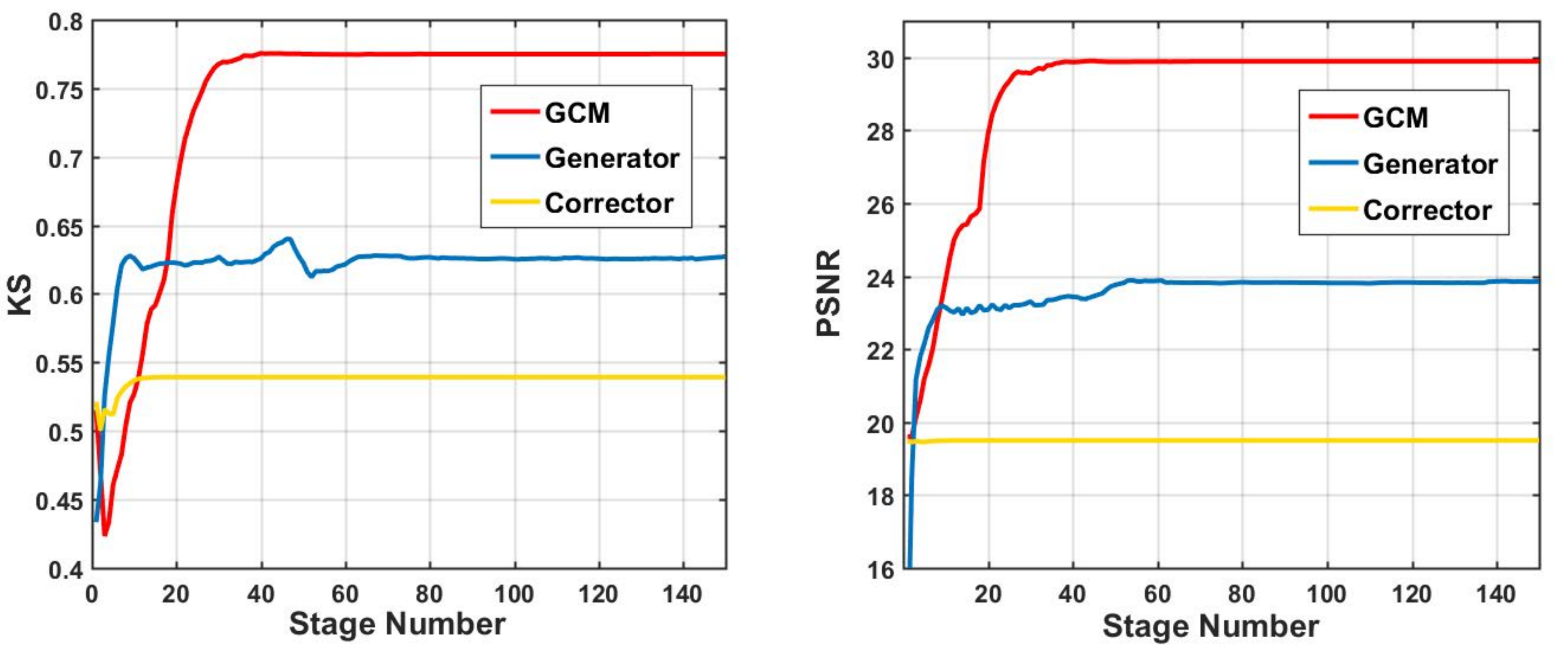}}\vspace{1em}\vspace{-1em}\\
		\multicolumn{3}{c}{\hspace{2em}(a) KS curve\vspace{0.5em} \hspace{6em}(b) PSNR curve}\\
		\includegraphics[width=0.15\textwidth]{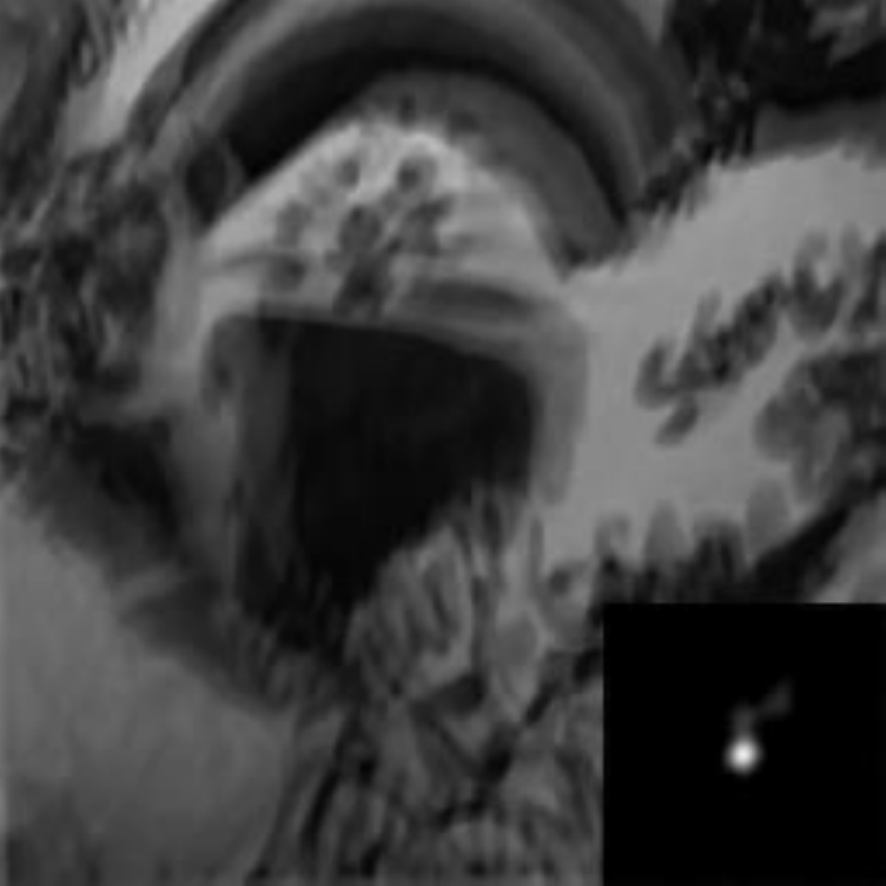}
		&\includegraphics[width=0.15\textwidth]{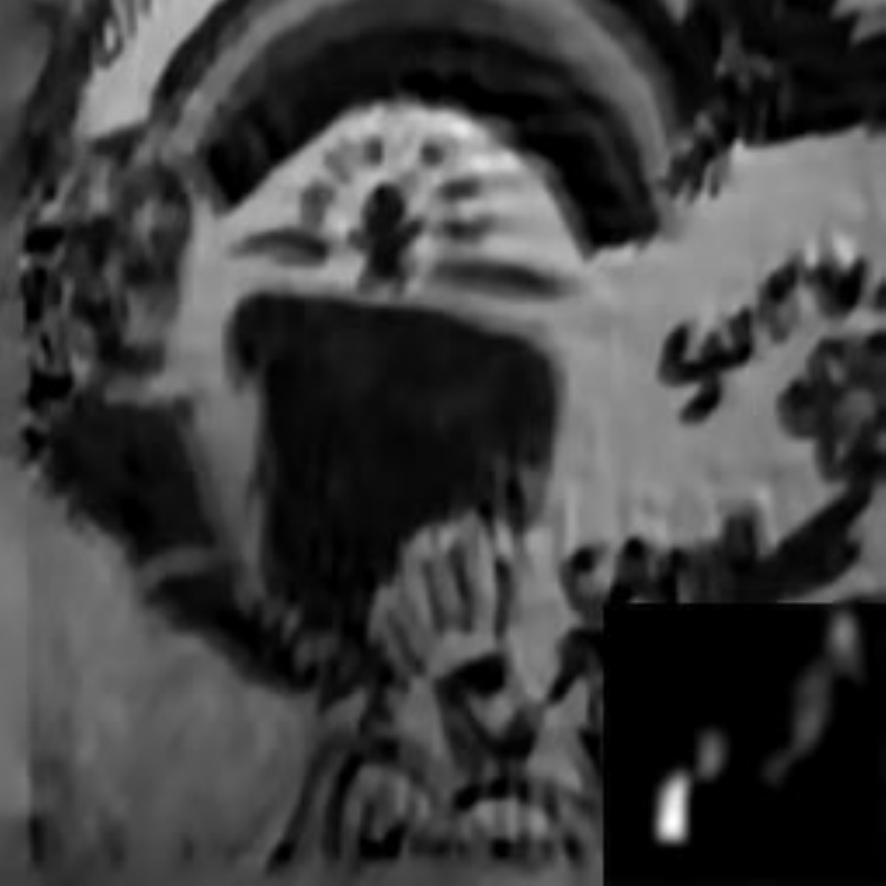}			
		&\includegraphics[width=0.15\textwidth]{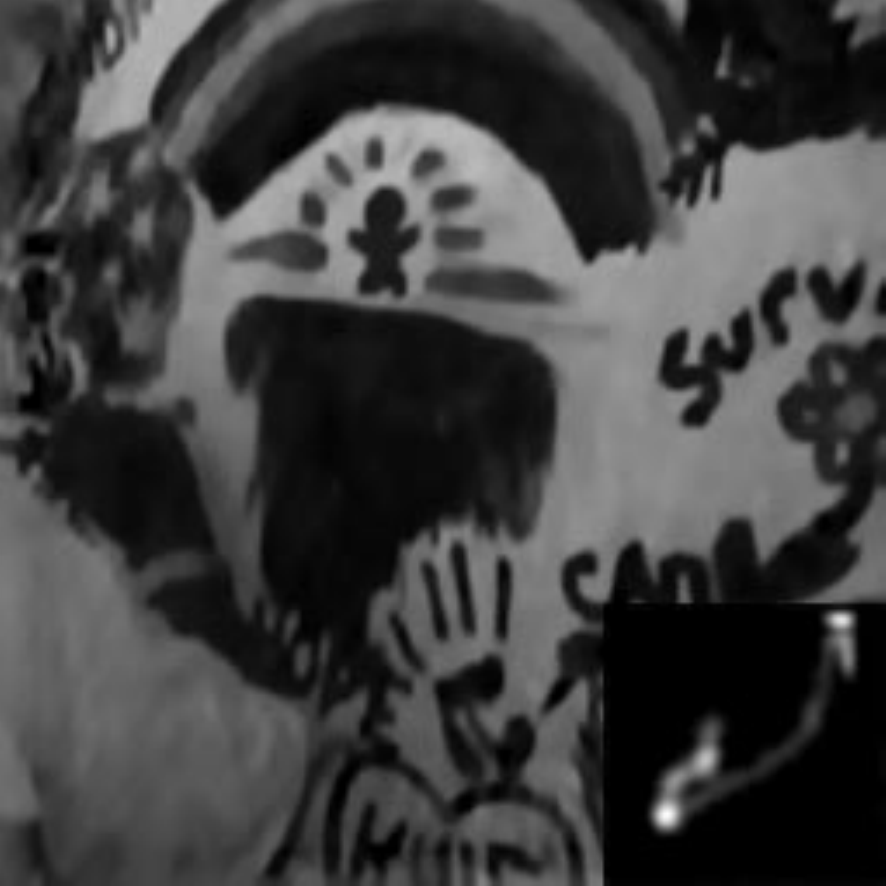}\\
		19.47~/~0.59 &23.34~/~0.71&\textbf{29.62~/~0.90}\\
		(c) Corrector & (d) Generator & (e) GCM  \\
	\end{tabular}
	\caption{Comparisons among Generator, Corrector and their ensemble (i.e., GCM). We plot the curves of Kernel Similarity (KS) and PSNR on subfigures (a) and (b), respectively. The final results of these strategies are also presented in subfigures (c)-(e). Quantitative metrics (PSNR / SSIM) are reported below each result.}
	\label{fig:GCM}
\end{figure}

\begin{figure*}[htb]
	\centering \begin{tabular}{c@{\extracolsep{0.2em}}c@{\extracolsep{0.2em}}c@{\extracolsep{0.2em}}c@{\extracolsep{0.2em}}c}
		\includegraphics[width=0.18\textwidth]{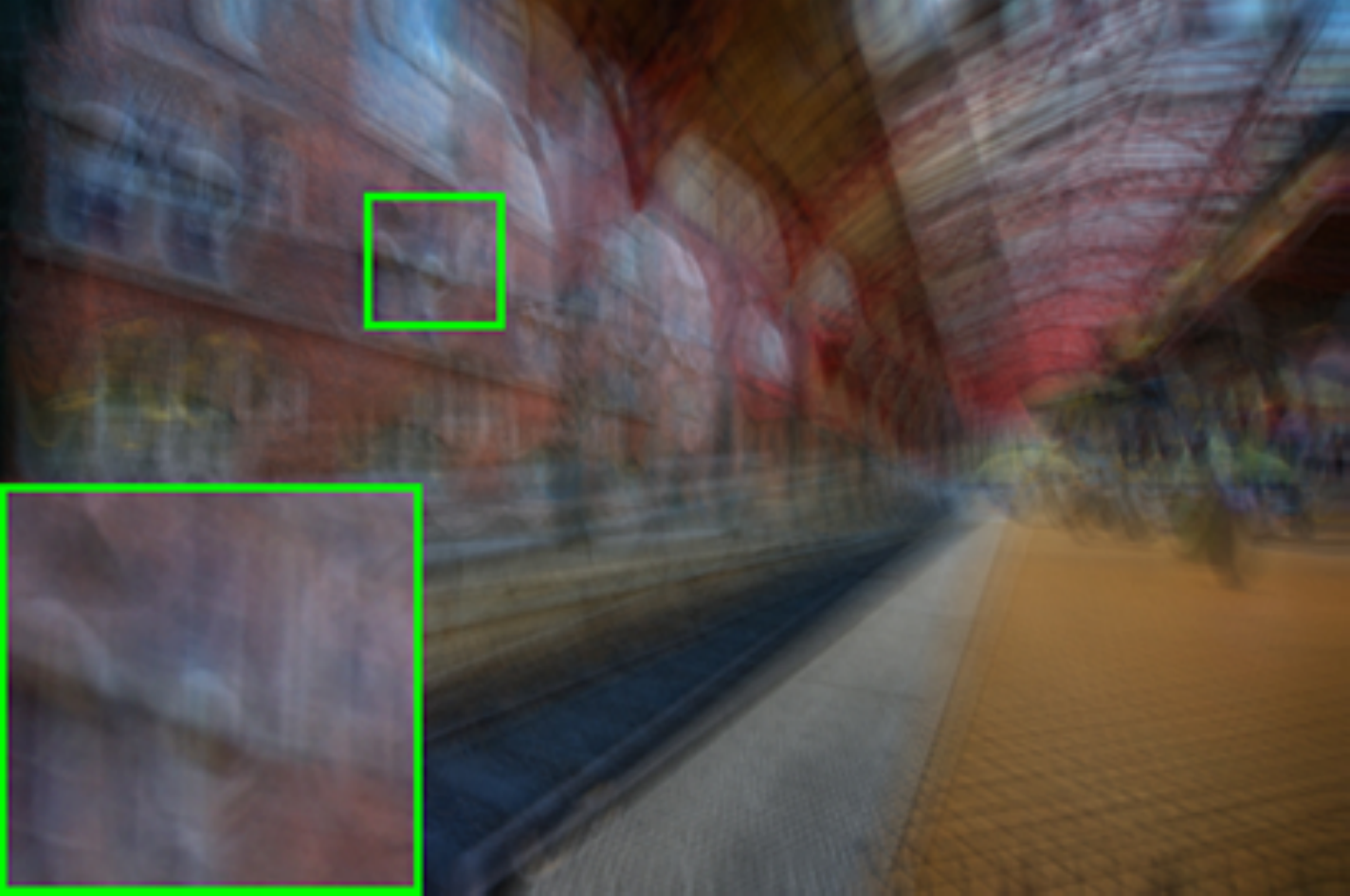}
		&\includegraphics[width=0.18\textwidth]{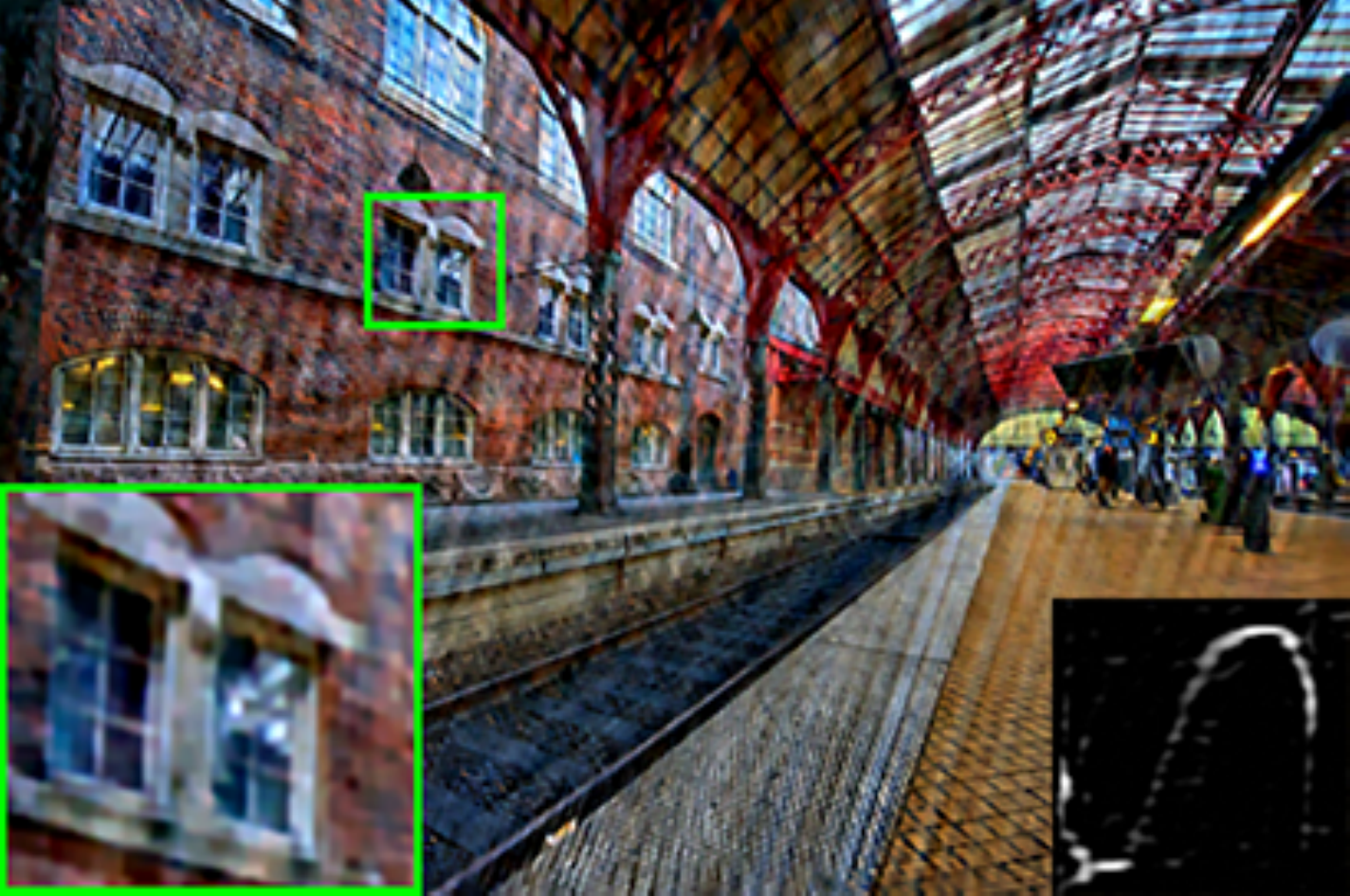}
		&\includegraphics[width=0.18\textwidth]{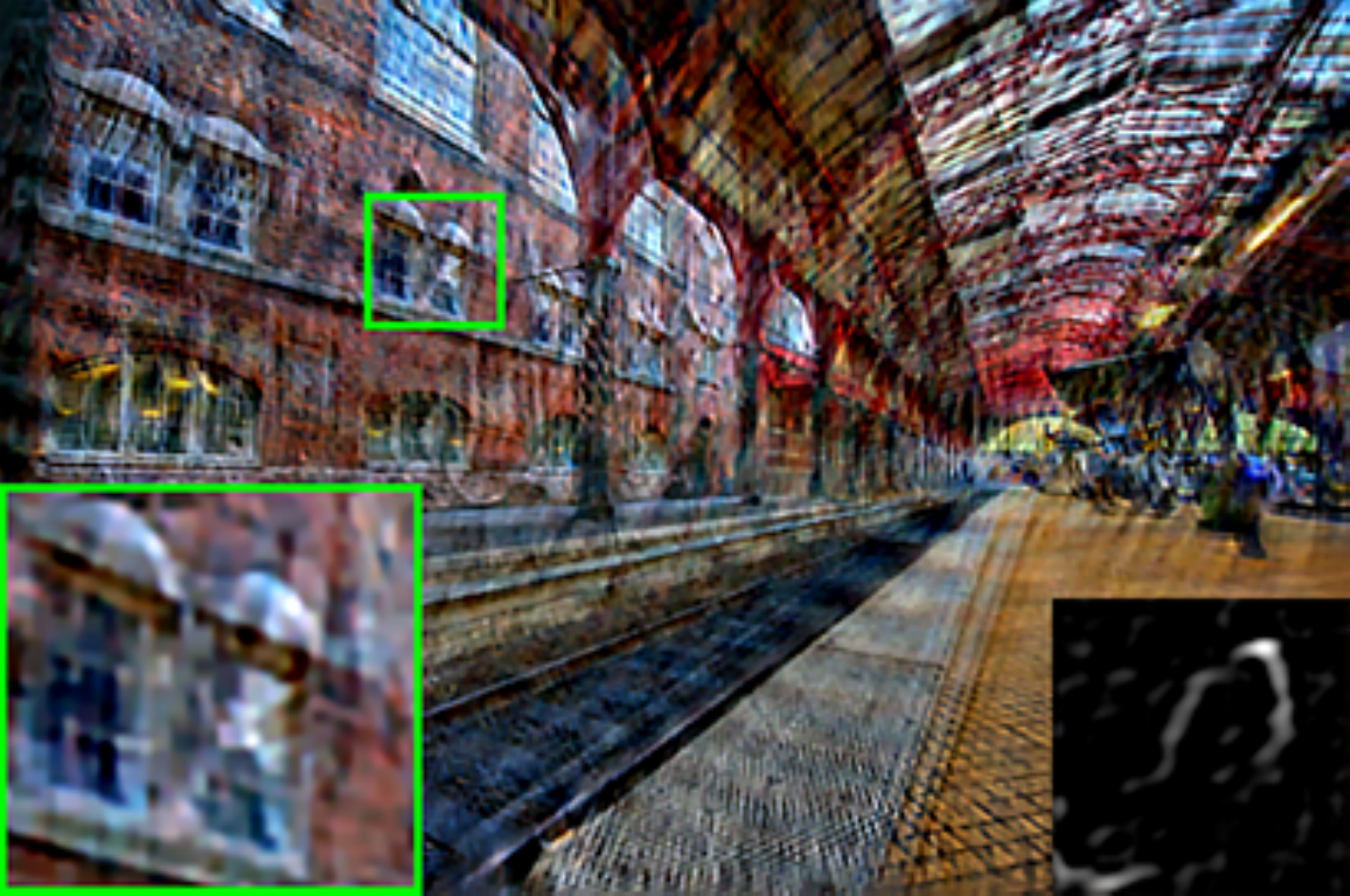}
		&\includegraphics[width=0.18\textwidth]{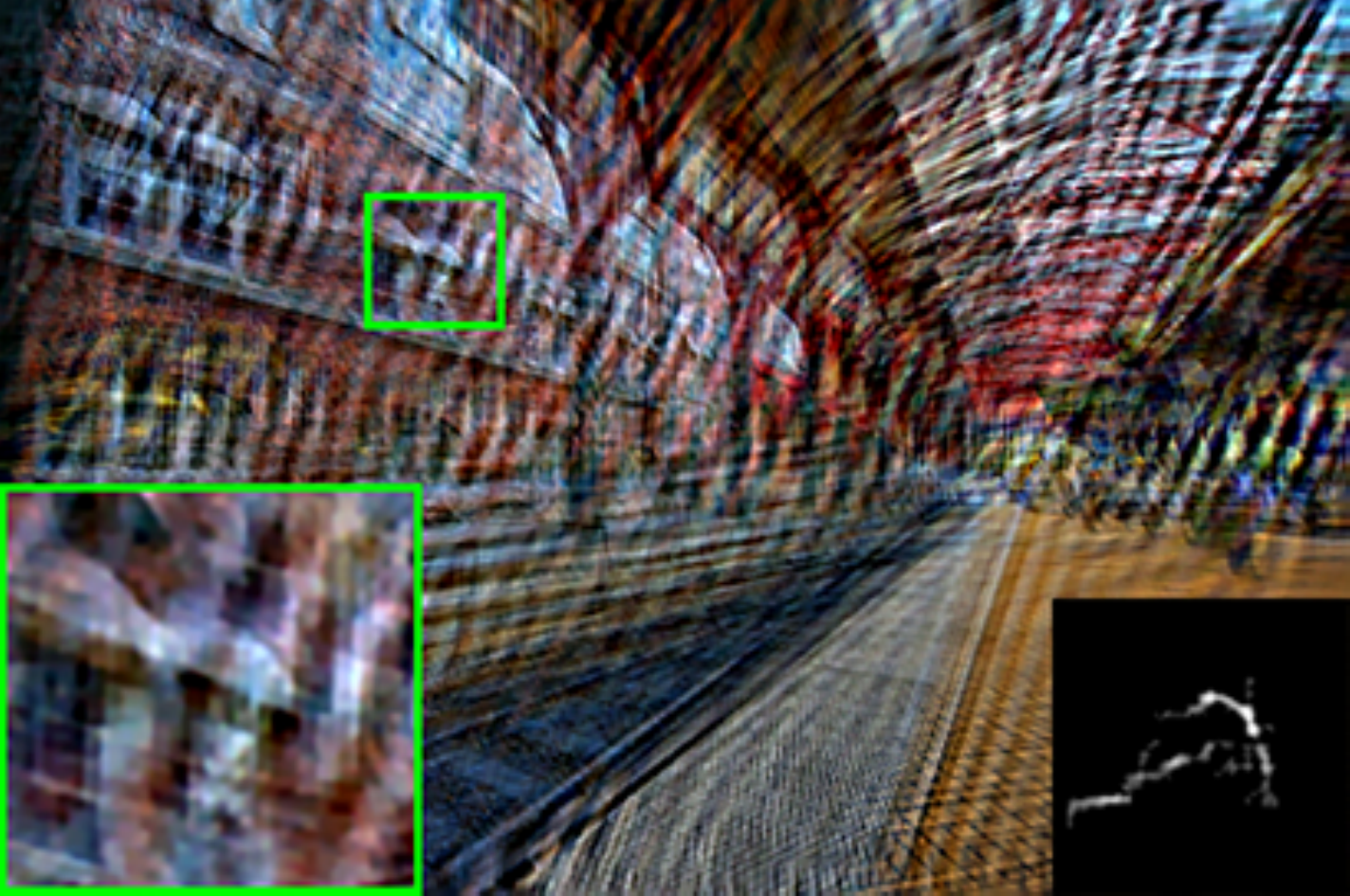}			
		&\includegraphics[width=0.18\textwidth]{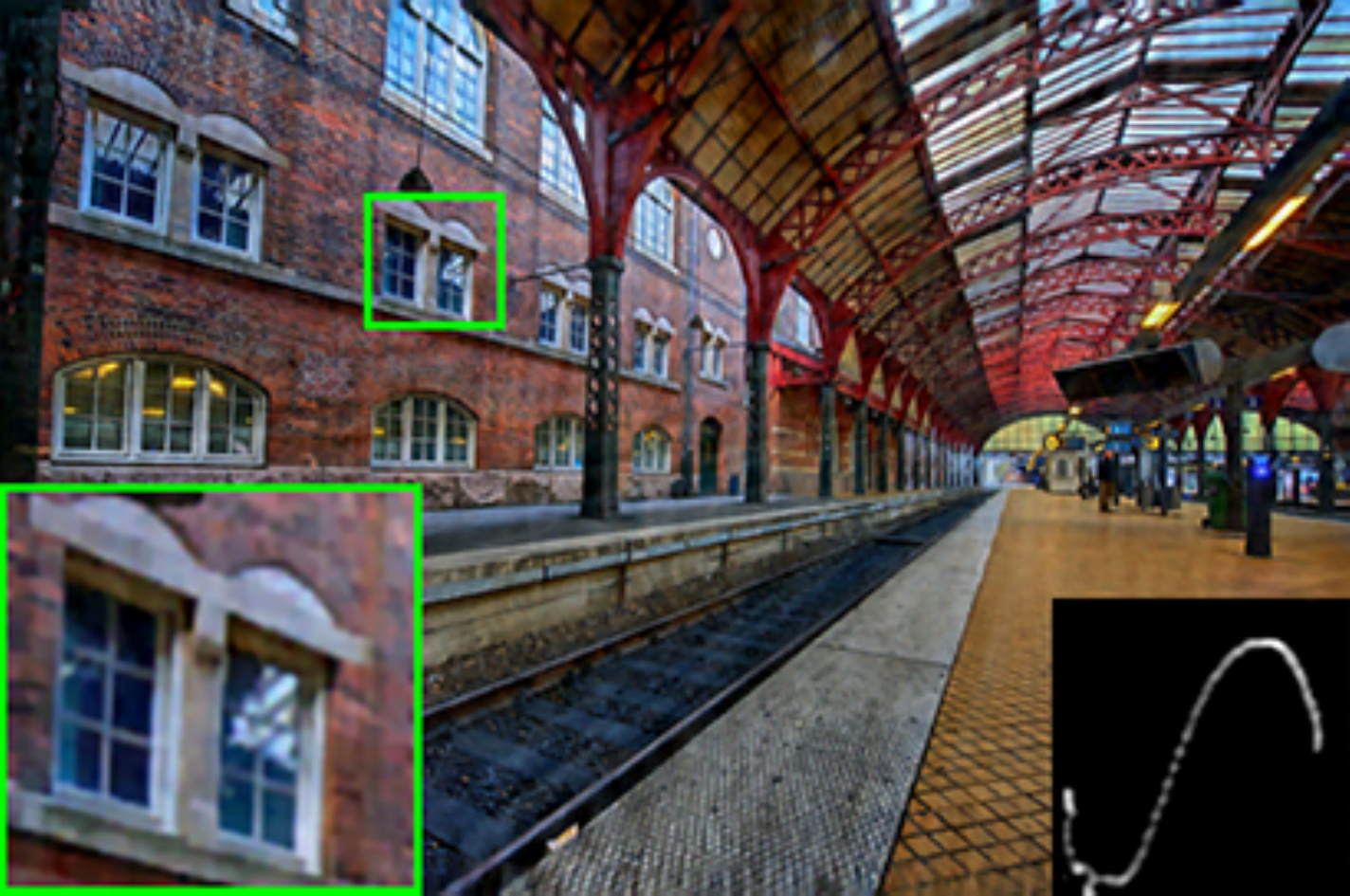}\\
		PSNR~/~SSIM &17.72~/~0.42 &15.80~/~0.27&14.17~/~0.15&\textbf{19.18~/~0.49}\\
		\includegraphics[width=0.18\textwidth]{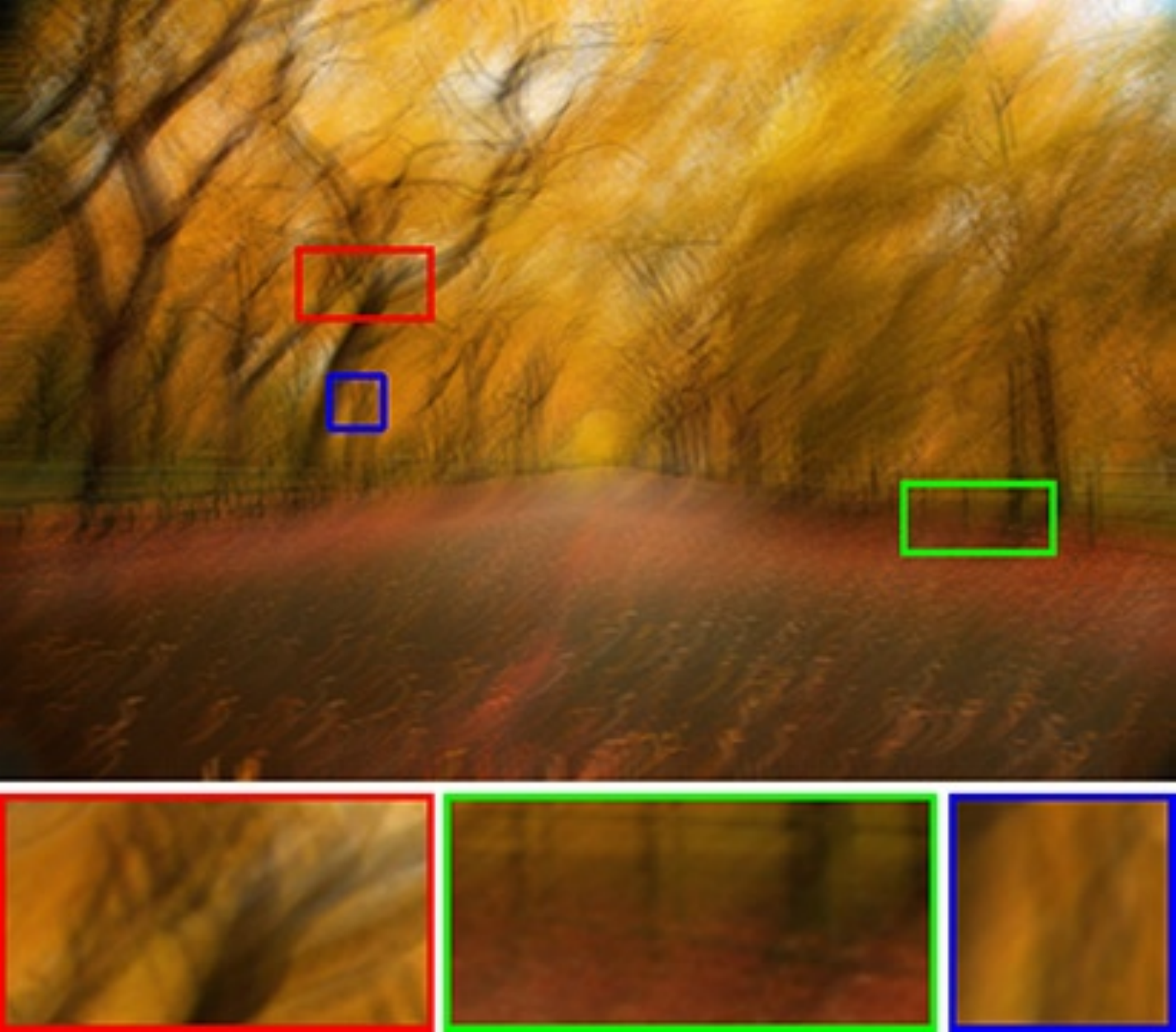}
		&\includegraphics[width=0.18\textwidth]{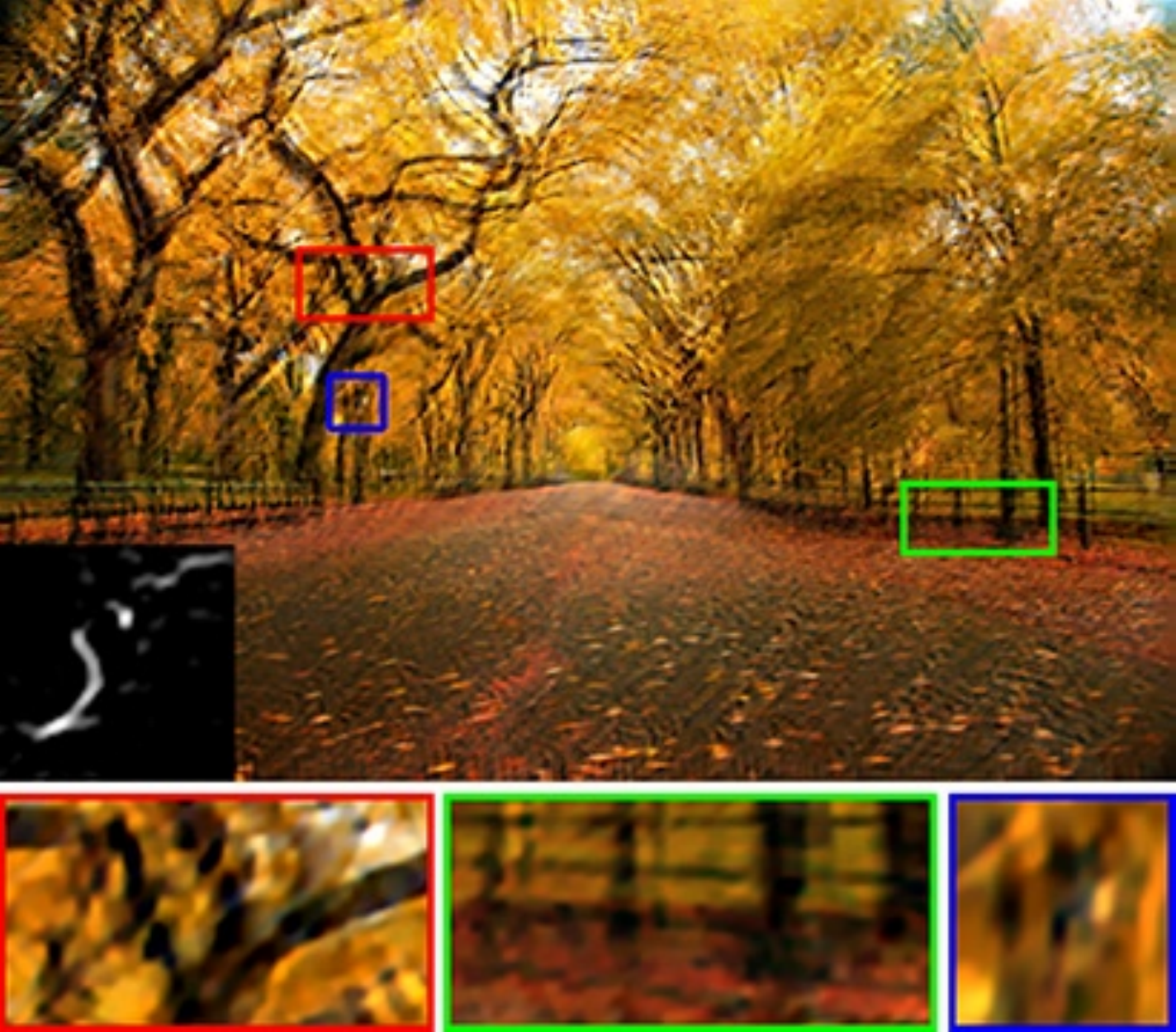}
		&\includegraphics[width=0.18\textwidth]{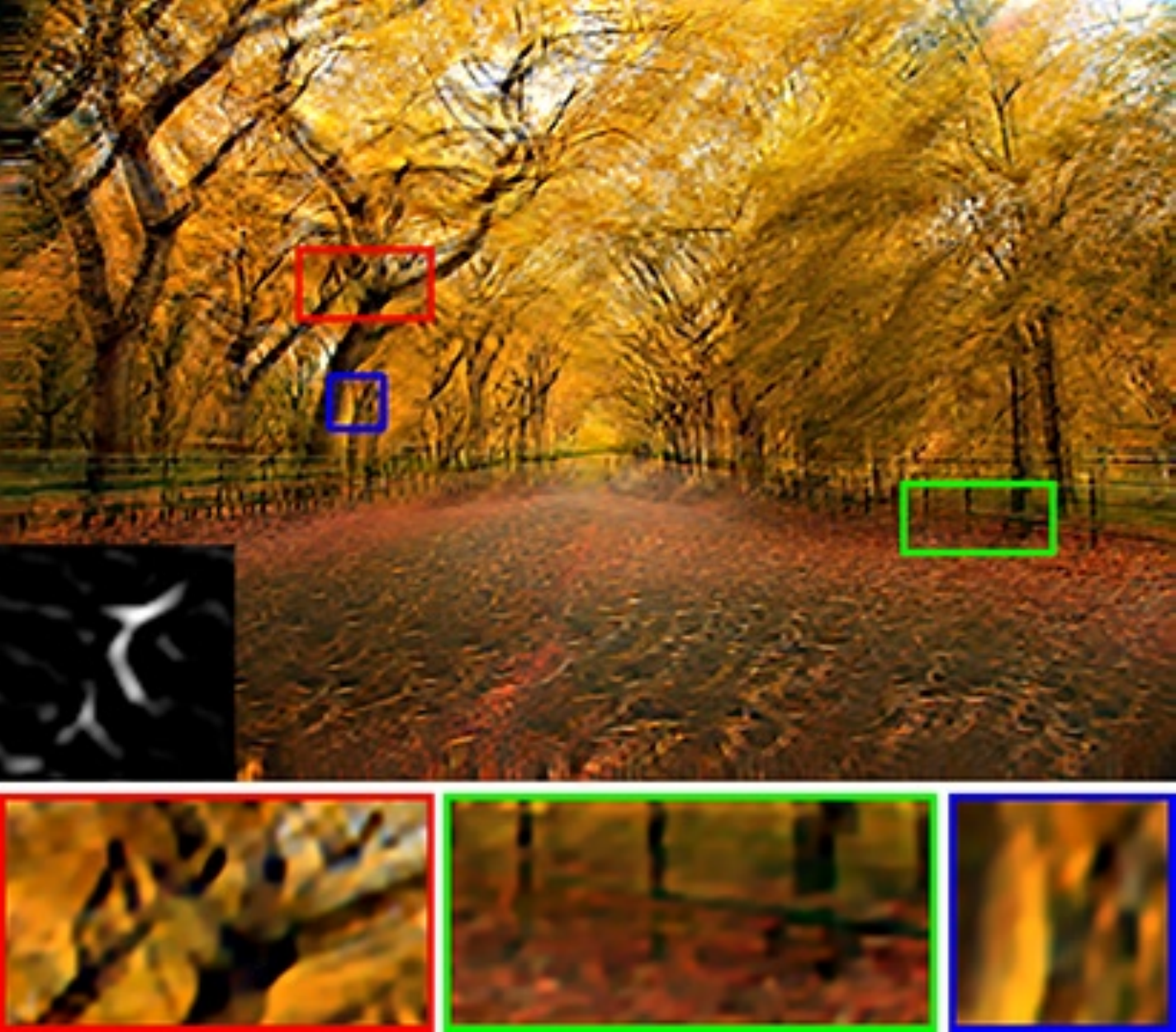}
		&\includegraphics[width=0.18\textwidth]{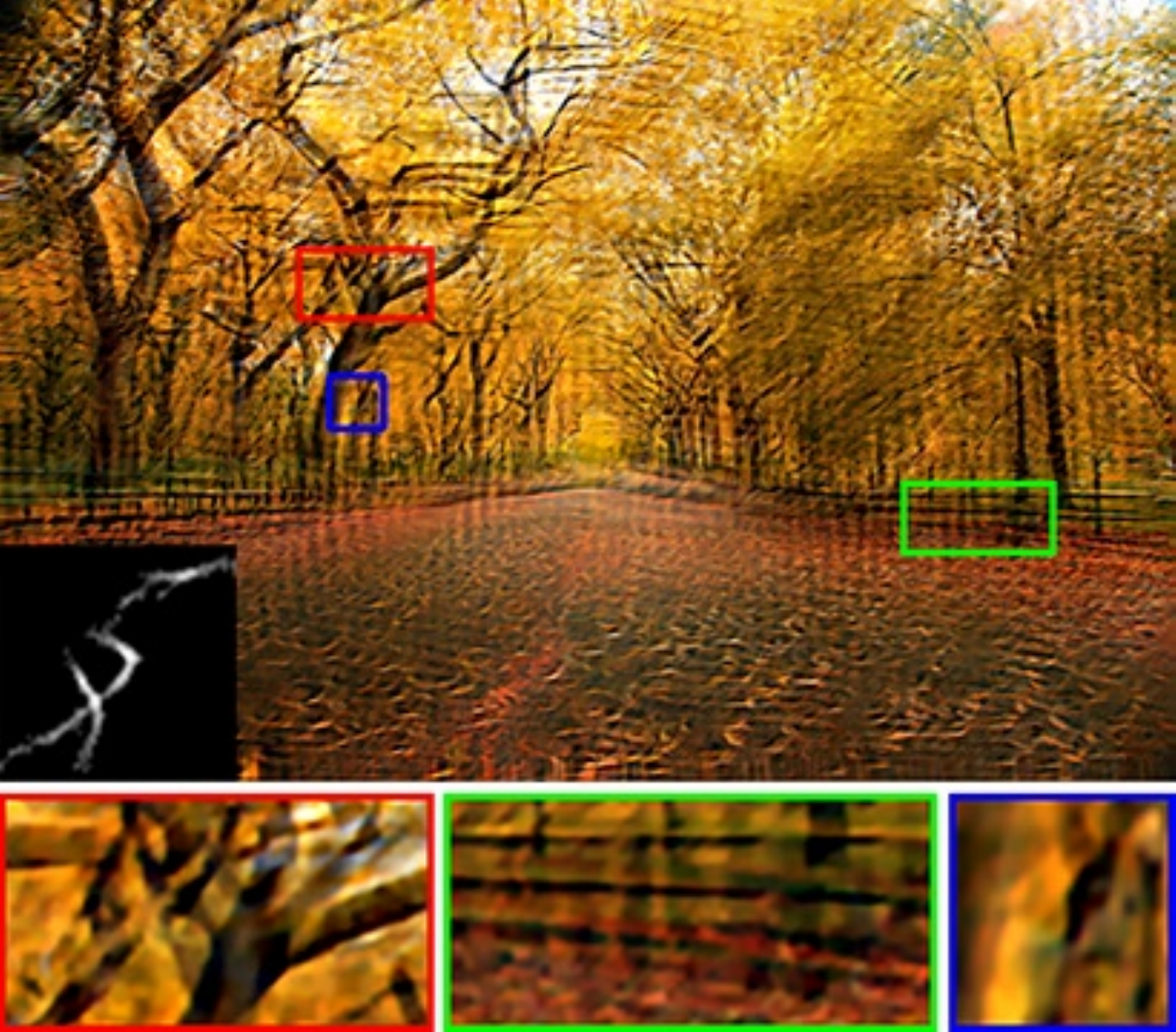}	
		&\includegraphics[width=0.18\textwidth]{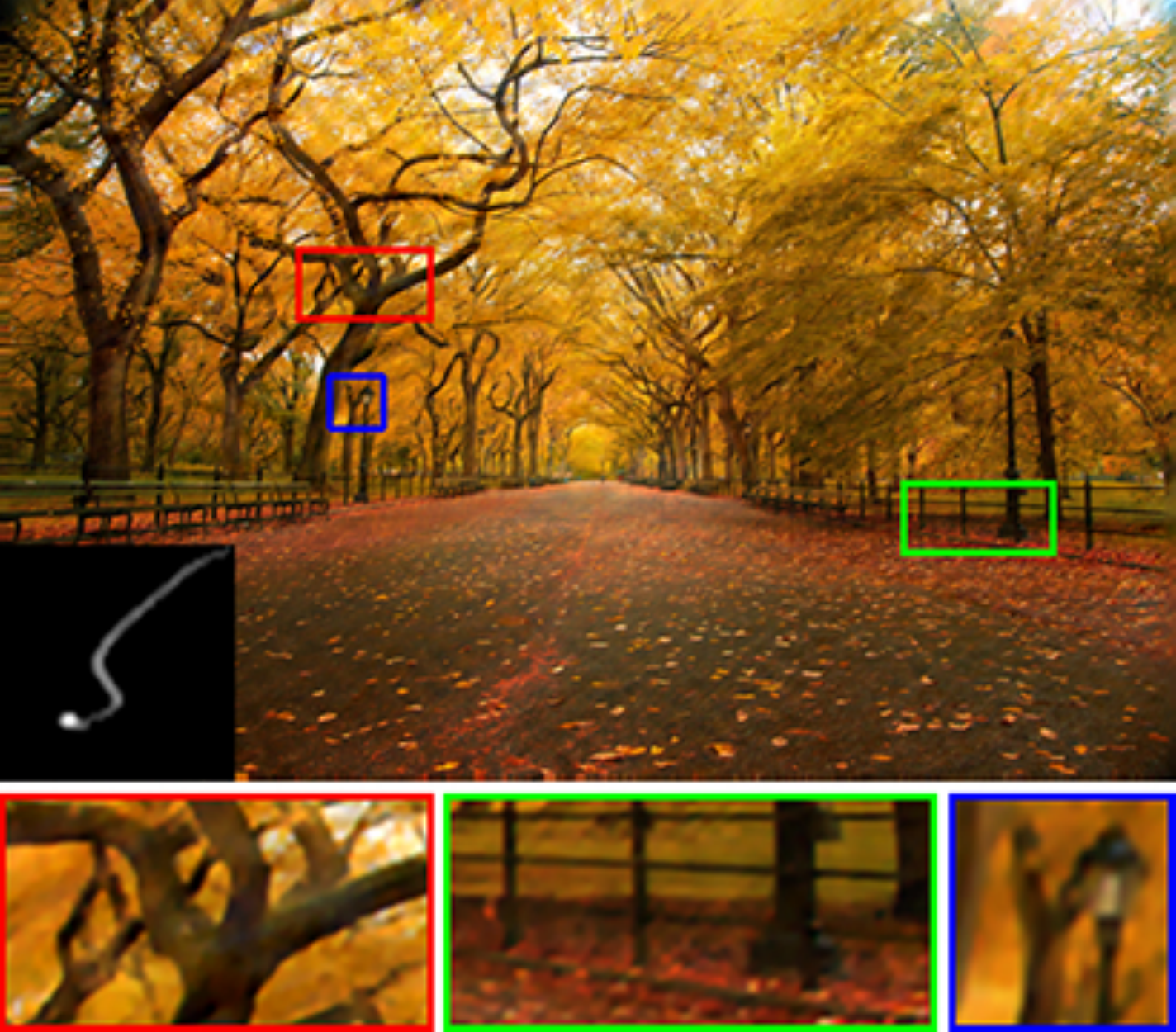}\\
		PSNR~/~SSIM  & 18.36~/~0.35 & 17.12~/~0.34 & 16.38~/~0.33 &\textbf{21.82~/~0.61}\\
		(a) Input & (b) Levin \emph{et al.}~\cite{levin2011efficient} & (c) Perrone \emph{et al.}~\cite{perrone2014total}& (d) Sun \emph{et al.}~\cite{sun2013edge} & (e) Ours \\
	\end{tabular}
	\caption{Comparisons on synthetic blurry images. Quantitative metrics are reported below each result. 
	}
	\label{fig:synthetic_image}
\end{figure*}

\begin{figure*}[htb]
	\centering \begin{tabular}{c@{\extracolsep{0.2em}}c@{\extracolsep{0.2em}}c@{\extracolsep{0.2em}}c@{\extracolsep{0.2em}}c}
		\includegraphics[width=0.18\textwidth]{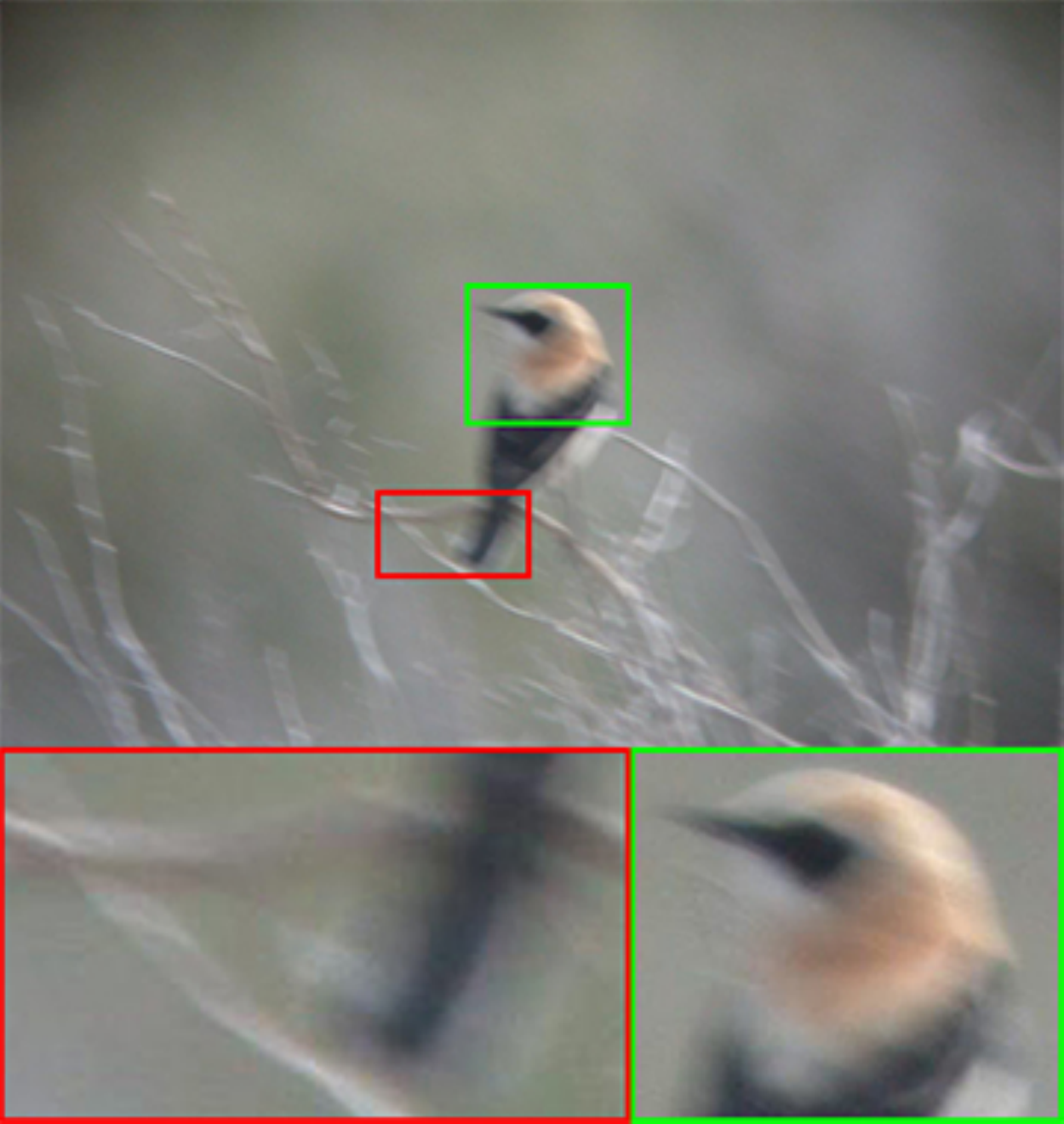}
		&\includegraphics[width=0.18\textwidth]{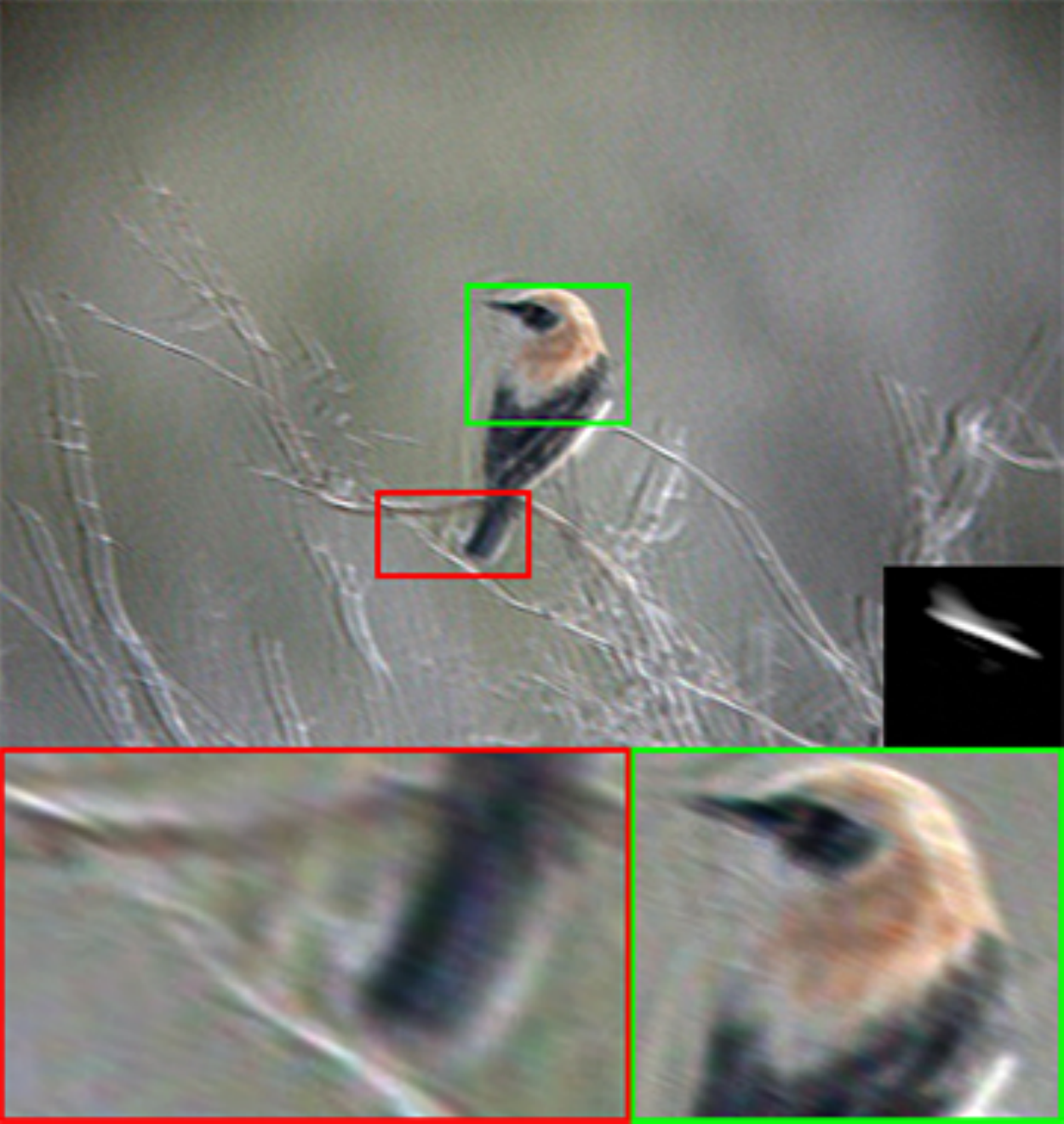}
		&\includegraphics[width=0.18\textwidth]{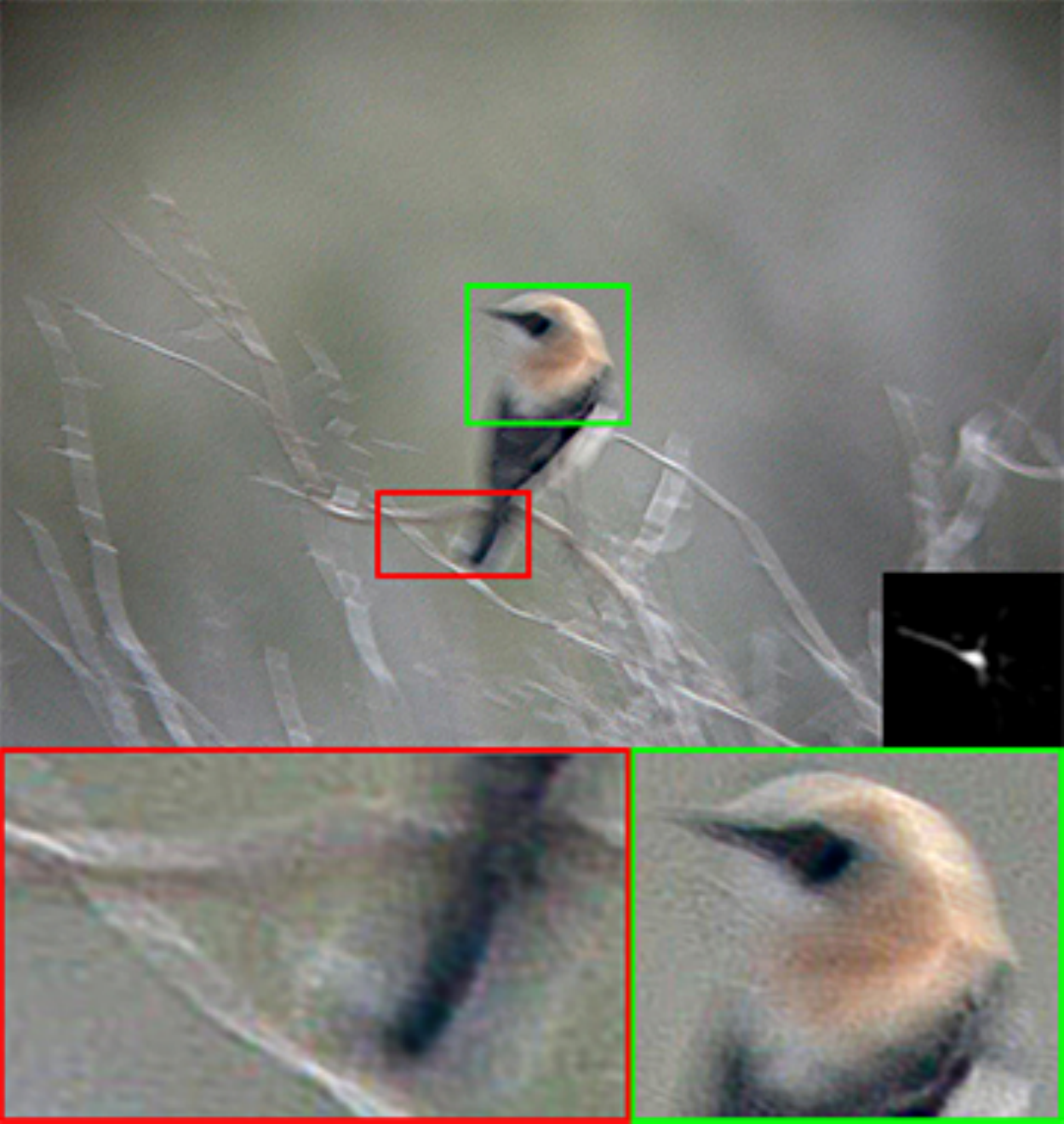}
		&\includegraphics[width=0.18\textwidth]{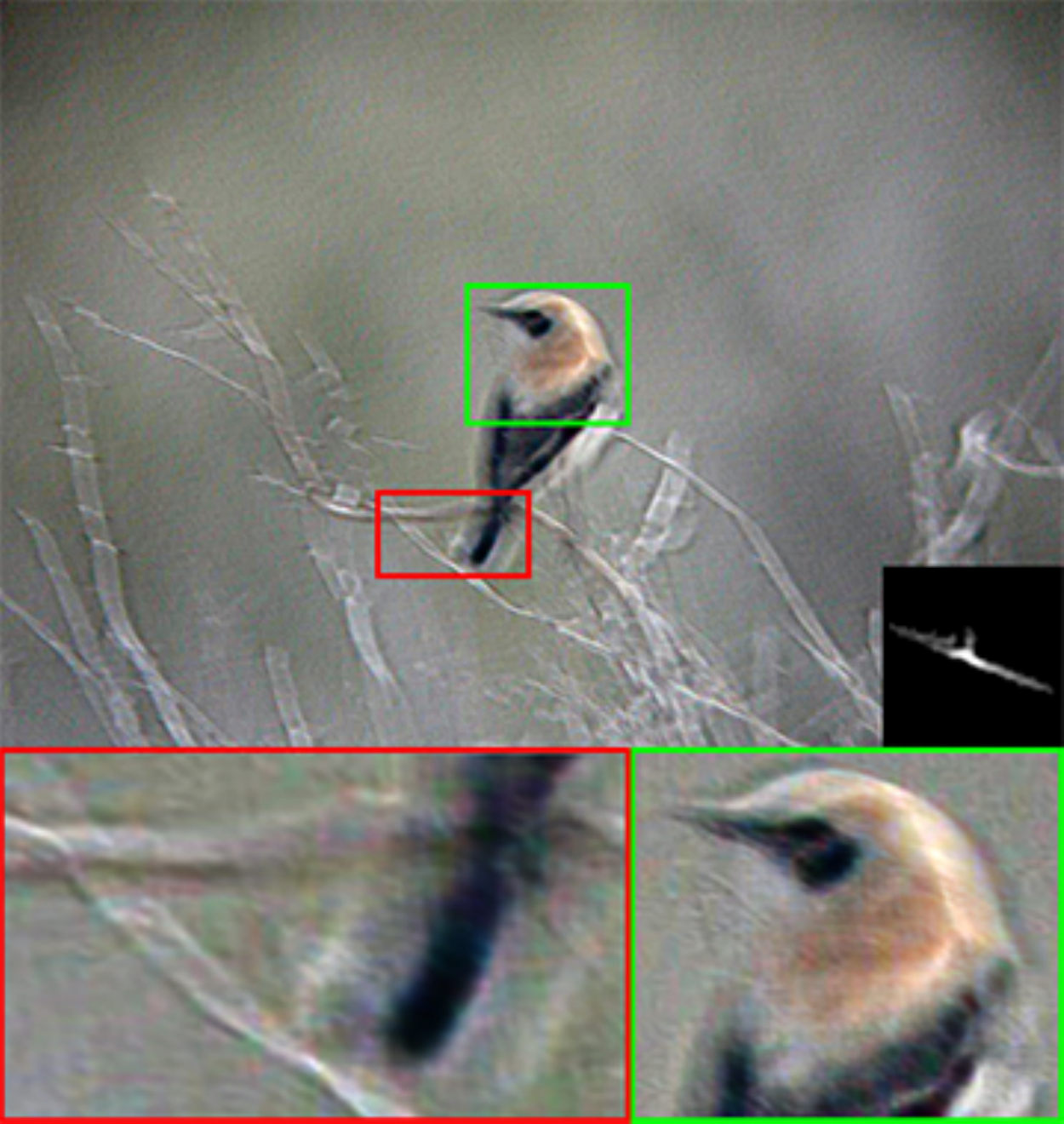}
		&\includegraphics[width=0.18\textwidth]{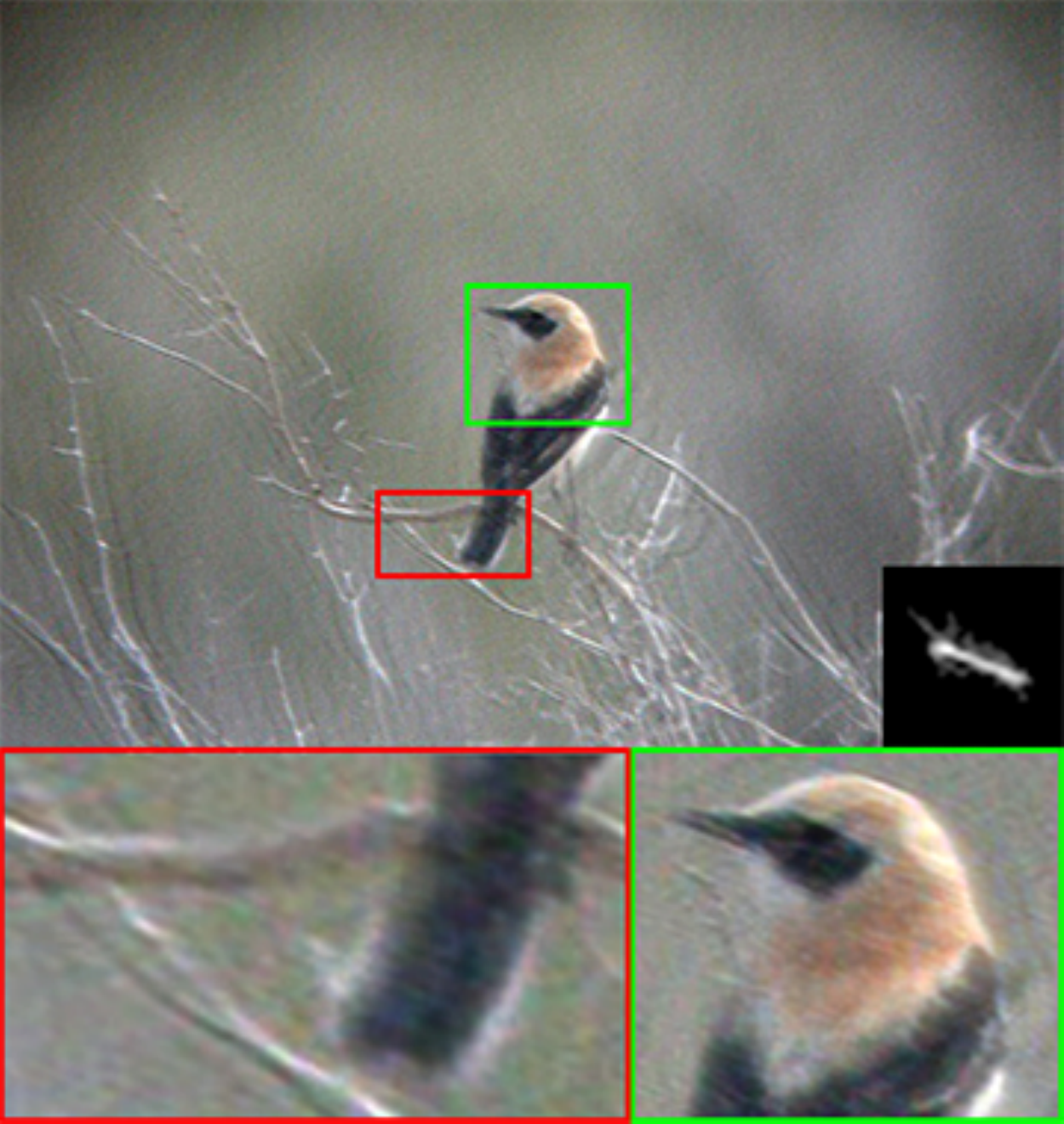}\\
		\includegraphics[width=0.18\textwidth]{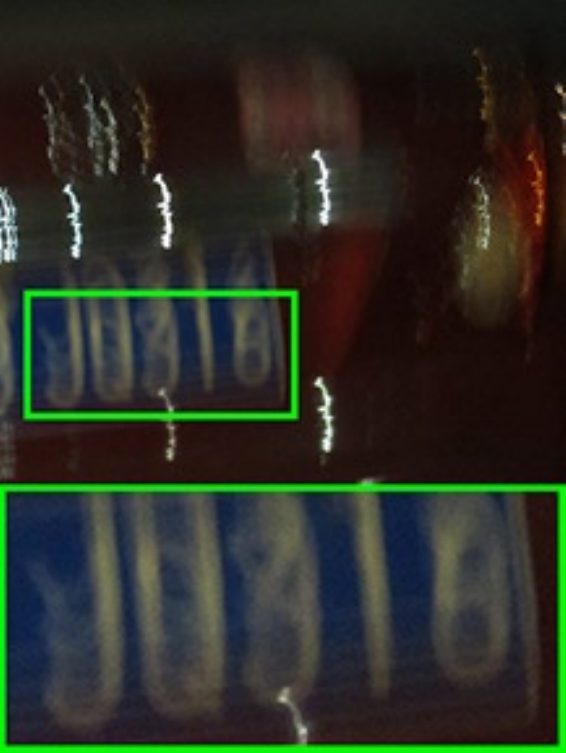}
		&\includegraphics[width=0.18\textwidth]{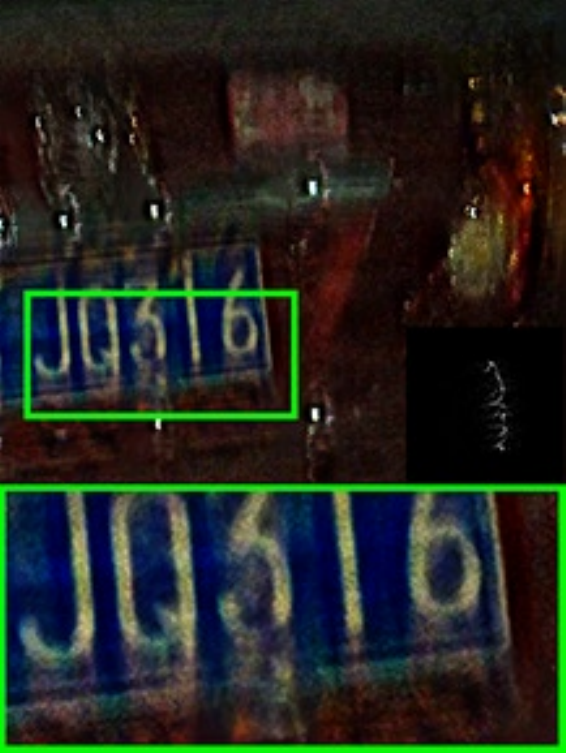}
		&\includegraphics[width=0.18\textwidth]{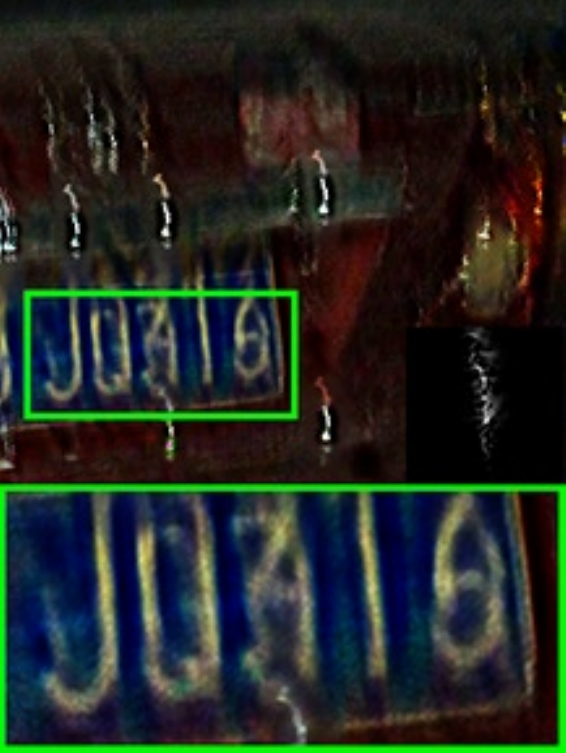}
		&\includegraphics[width=0.18\textwidth]{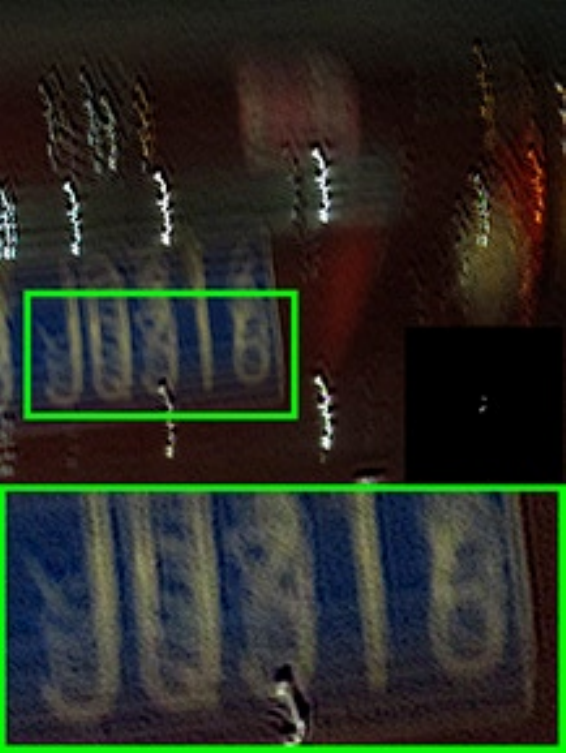}		
		&\includegraphics[width=0.18\textwidth]{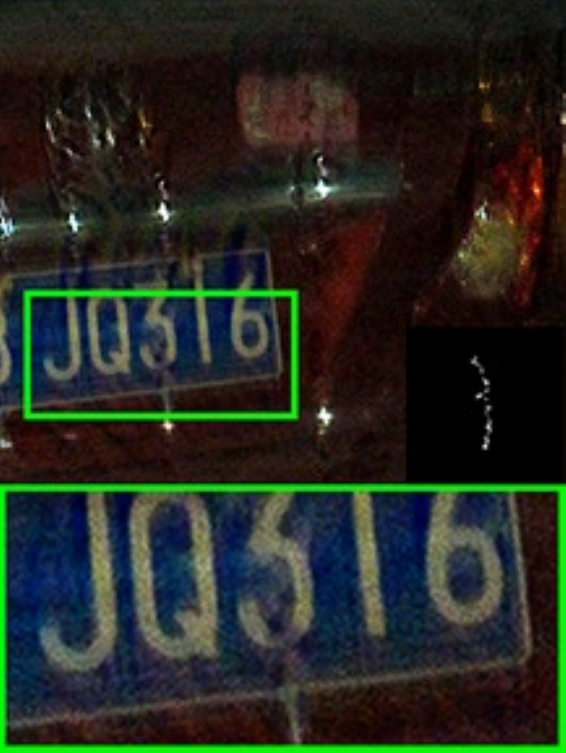}\\ 
		(a) Input & (b) Levin \emph{et al.}~\cite{levin2011efficient}& (c) Perrone \emph{et al.}~\cite{perrone2014total}& (d) Sun \emph{et al.}~\cite{sun2013edge} & (e) Ours \\
	\end{tabular}
	\caption{Visual comparisons on challenging real-world blurry images.}
	\label{fig:real-world_image}
\end{figure*}
\subsection{Blind Image Deblurring}

We then evaluate GCM on blind image deblurring problem.

\subsubsection{Synthetic Dataset}
We first consider synthetic test data and compare GCM with several state-of-the-art blind deblurring methods (e.g., \cite{krishnan2011blind,zhang2013multi,levin2011efficient,perrone2014total,sun2013edge}) on widely used Levin \emph{et al.}'s dataset~\cite{levin2009understanding}, including 32 blurry images with size $255\times255$, which are produced by 4 clear images and 8 blur kernels. 
Quantitative scores (e.g., PSNR, SSIM, Error Ratio (ER) and the average time) are reported in Table.~\ref{tab:data_set_result}. It is easy to see that our GCM achieves the best quantitative performance among all the compared methods. 

In Fig.~\ref{fig:synthetic_image}, we plot the visual results of our method together with the top 3 compared approaches (in Table~\ref{tab:data_set_result}) on two more challenging blurry examples (collated by Lai \emph{et al.}~\cite{lai2016comparative}). Notice that the sizes of these images and the blur kernels are much larger than that in Levin \emph{et al.}'s dataset. We observe that even performed well in Levin \emph{et al.}'s dataset, Sun \emph{et al.}'s method cannot obtain good deblurring results on this experiment. This is mainly because that the patch prior in that method is sensitive to image contents thus with less robustness~\cite{lai2016comparative}. In contrast, our collaborative GCM can successfully extract the latent sharp structure, so that achieves the best quantitative and qualitative results.

\begin{table}
	\caption{Quantitative results on Levin \emph{et al.}'s image set (lower ER is better).}
	\begin{tabular}{c@{\extracolsep{1.6em}}c@{\extracolsep{1.6em}}c@{\extracolsep{1.6em}}c@{\extracolsep{1.6em}}c}
		\toprule
		Method          &~~PSNR~~&~~SSIM~~&~~ER~~&~~TIME~~	  \\
		\midrule
		\cite{krishnan2011blind}     & 24.87  &  0.74  &  2.05  & 23.78 \\
		\cite{zhang2013multi}        & 28.01  &  0.86  &  1.25  & 37.45 \\
		\cite{levin2011efficient}    & 29.03  &  0.89  &  1.40  & 41.77\\
		\cite{perrone2014total}      & 29.27  &  0.88  &  1.35  & 113.70\\
		\cite{sun2013edge}           & 29.71  &  0.90  &  1.32  & 209.47 \\
		Ours            &\textbf{30.36}  &\textbf{0.91} &\textbf{1.21} & \textbf{6.17}\\
		\bottomrule
	\end{tabular}
	\label{tab:data_set_result}
\end{table}
\subsubsection{Real Blurry Images}
We also evaluate the compared methods on real-world blurry images (collected by \cite{lai2016comparative}) in Fig.~\ref{fig:real-world_image}. We can see on the top row that GCM can recover more details (e.g., the tail of bird with legible), compared with other methods. On the bottom row, it is also easy to observe that the numbers in license plate have been successfully recovered by GCM, while the visual quality of other results are bad for recognition. 

\begin{figure}[tb]
	\centering \begin{tabular}{c@{\extracolsep{0.2em}}c}
		\includegraphics[width=0.21\textwidth]{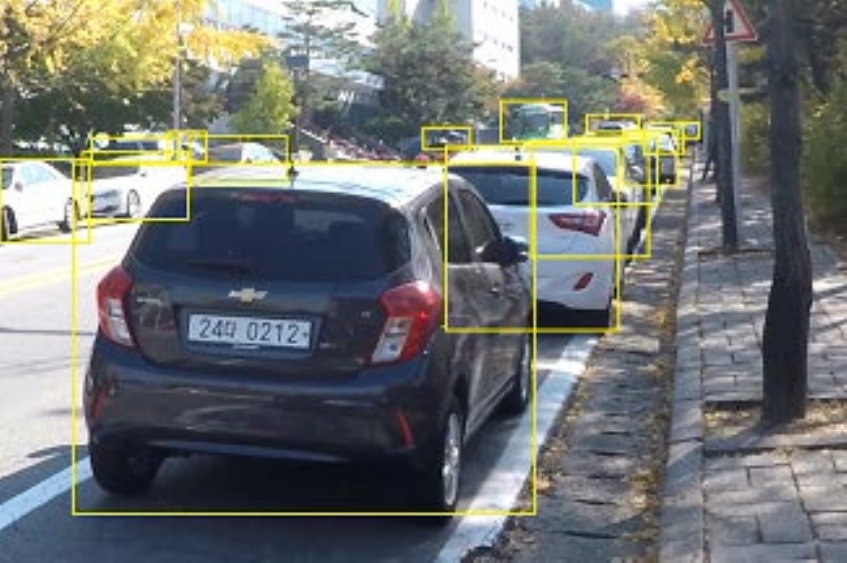}
		&\includegraphics[width=0.21\textwidth]{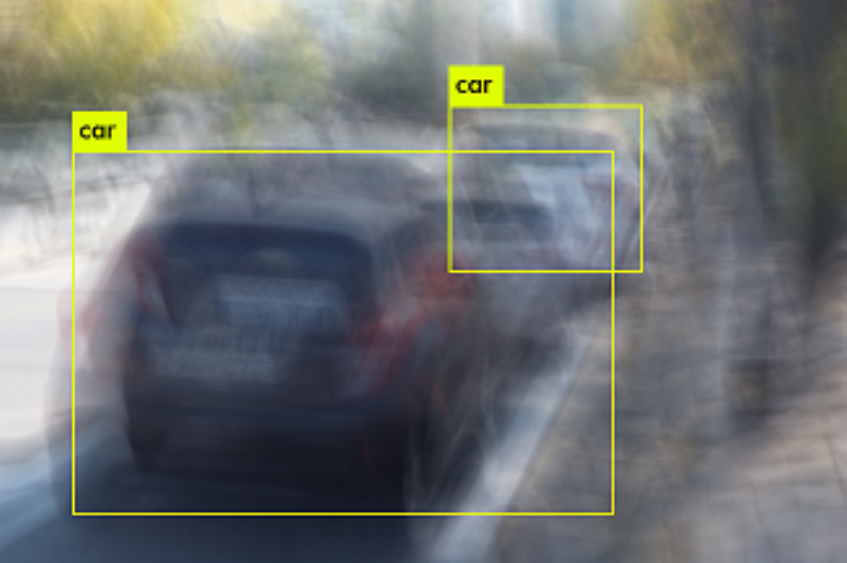}\\
		(a) Ground Truth &(b) Blurry \\
		\includegraphics[width=0.21\textwidth]{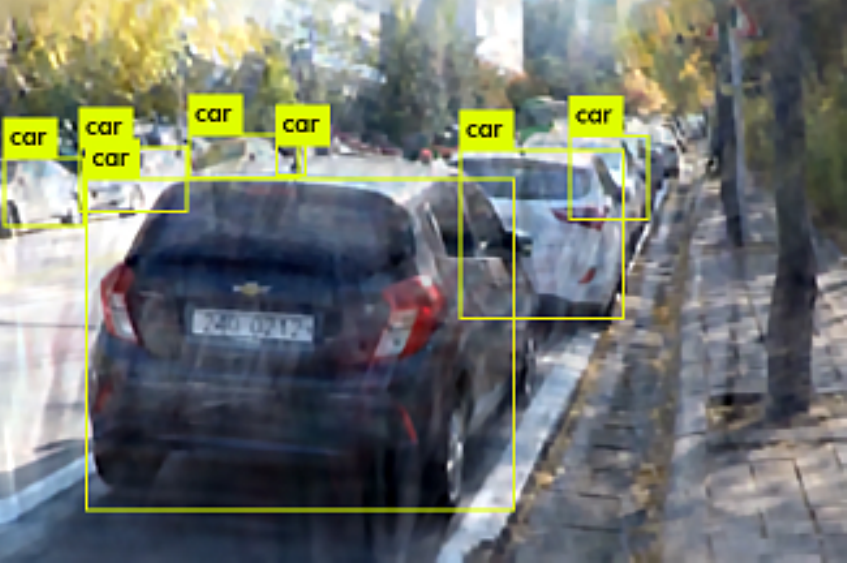}
		&\includegraphics[width=0.21\textwidth]{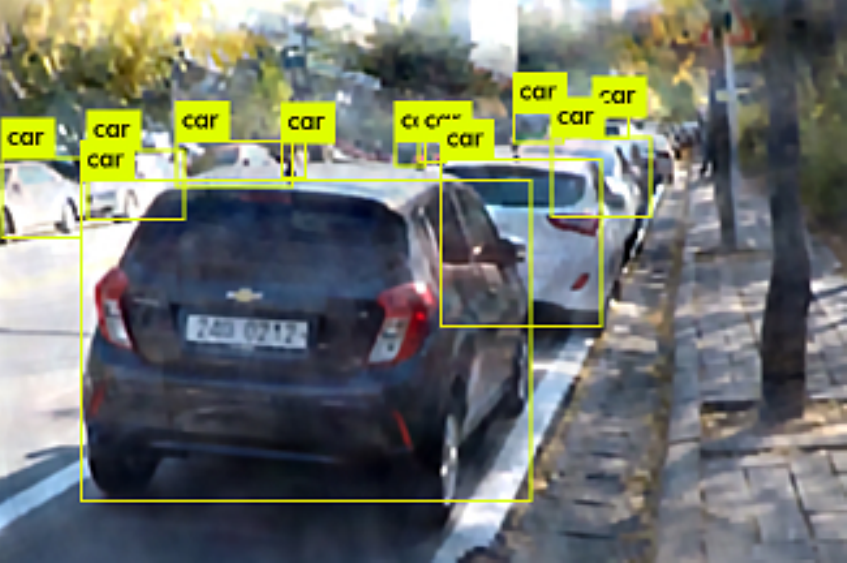}\\
		(c) Levin~\emph{et al.}~\cite{levin2011efficient}  &(d) Perrone~\emph{et al.}~\cite{perrone2014total} \\
		\includegraphics[width=0.21\textwidth]{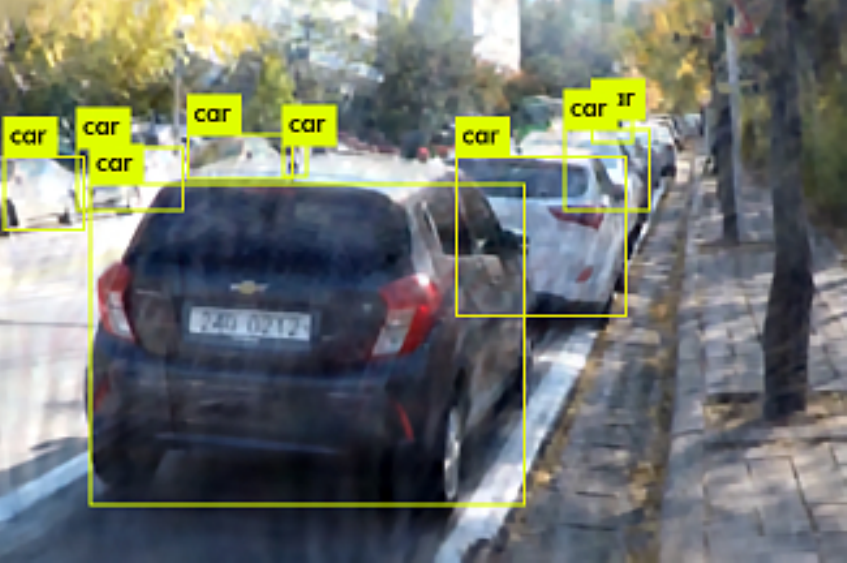}
		&\includegraphics[width=0.21\textwidth]{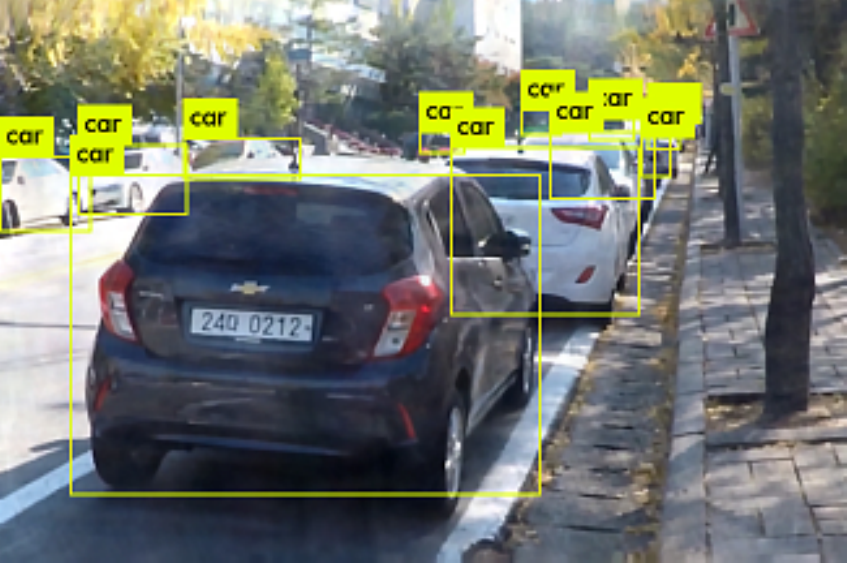}\\ 
		(e) Zhang~\emph{et al.}~\cite{zhang2013multi} & (f) Ours \\	
	\end{tabular}
	\caption{Car detection results on a street image. We illustrate the human labeled ground truth and the results detected on the blurred image in subfigures (a) and (b), respectively.  The detection results based on different deblurring algorithms are illustrated in subfigures (c)-(f). 
	}
	\label{fig:object detection}
\end{figure}

\subsubsection{Detection by Deblurring}
It is known that motion blurs caused by the shaking of capture device often reduce the performance of detection algorithm. See Fig.~\ref{fig:object detection} (b) for an example. Thus a natural strategy to evaluate the effectiveness of deblurring algorithms is just to perform object detection on the restored images. 
In this experiment, we adopt the well-known YOLO object detection system~\cite{redmon2016you} on the test image taken from the GoPro dataset~\cite{Nah2016Deep}. 

Specifically, we first take the manually labeled result on the clear image as ground truth, then treat the blurry image and various deblurring results as the inputs of YOLO. The detection results are visualized in Fig.~\ref{fig:object detection}.
We can observe that some overlapped and partially occluded small-size cars cannot be found in other methods. In contrast,
the most car objects had been detected in our method. This is because GCM can suppress most artifacts and recover more details in the results.
In Table.~\ref{tab:detection_result}, we report the number of detected cars (Detected Cars for short), recall values, precision values and F1 scores as quantitative metrics to measure the performance of these deblurring methods. We can see that our GCM obtains the highest scores in all metrics. 
Since the performance of detection is tightly related to the deblurring performance, this experiment actually indicates that our method has the ability to improve the performance of real-world tasks.

\begin{table}[tb]
	\centering
	\caption{The quantitative results for the car detection experiment in Fig.~\ref{fig:object detection}.}
	\begin{tabular}{c@{\extracolsep{1.5em}} c@{\extracolsep{1.5em}} c@{\extracolsep{1.5em}} c@{\extracolsep{1.5em}} c@{\extracolsep{1.5em}} c}
		\toprule
		Metric&~~Blurred~~&~~\cite{levin2011efficient}~~&~~\cite{perrone2014total}~~&~~\cite{zhang2013multi}~~&~~Ours~~	  \\
		\midrule 
		Detected Cars        & 2  & 6 & 8& 7& \textbf{11} \\   
		Recall        &  0.11  &  0.33 & 0.44 & 0.39 & \textbf{0.61} \\
		Precision         &  \textbf{1.0}   &  0.86 & 0.73 & 0.88 & \textbf{1.0} \\
		F1 Score      &  0.20  &  0.48 & 0.55 & 0.54 & \textbf{0.76} \\
		\bottomrule
	\end{tabular}
	\label{tab:detection_result}
\end{table} 

\subsection{Other Applications}
To verify the flexibility of our collaborative modules, we also express the performance of GCM on other applications, including image interpolation and edge-preserved smoothing. 
\subsubsection{Image Interpolation}
The purpose of image interpolation is to recover an image in which some pixels are lost or deteriorated. 
To evaluate the performance of our method in this task, we compare GCM with other state-of-the-art image interpolation methods~\cite{jin2015annihilating,roth2009fields,he2014iterative,getreuer2012total} on both text and random missing pixels masks. The test images are randomly chosen from ImageNet dataset~\cite{deng2009imagenet}.
As shown in Fig.~\ref{fig:inpaintting result}, our method achieves a better performance than all these compared methods on both visual effects and quantitative metrics (PSNR / SSIM). 
On the top row of Fig.~\ref{fig:inpaintting result}, one can see that ALOHA~\cite{jin2015annihilating} and FoE~\cite{roth2009fields} failed on this test image. ISDSB~\cite{he2014iterative} and TV~\cite{getreuer2012total} can recover most missing regions. But their results are blurred and there still exist some missing pixels. In contrast, our method fills all the missing regions and the result looks more realistic. On the second row of Fig.~\ref{fig:inpaintting result}, we can see that GCM actually obtains a much clearer image with richer details than other compared methods.

\begin{figure*}[htb]
	\centering \begin{tabular}{c@{\extracolsep{0.2em}}c@{\extracolsep{0.2em}}c@{\extracolsep{0.2em}}c@{\extracolsep{0.2em}}c@{\extracolsep{0.2em}}c}
		\includegraphics[width=0.15\textwidth]{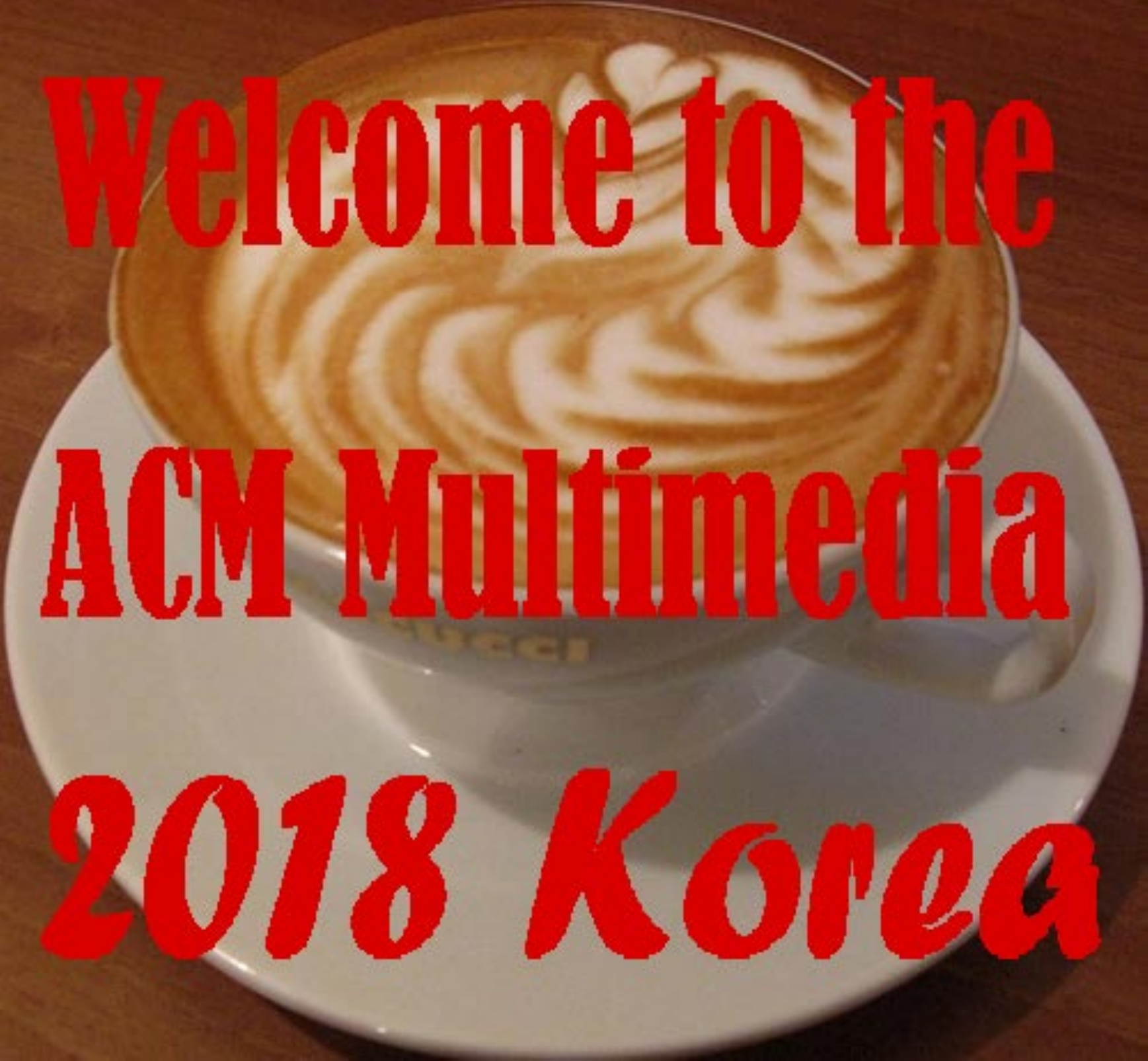}
		&\includegraphics[width=0.15\textwidth]{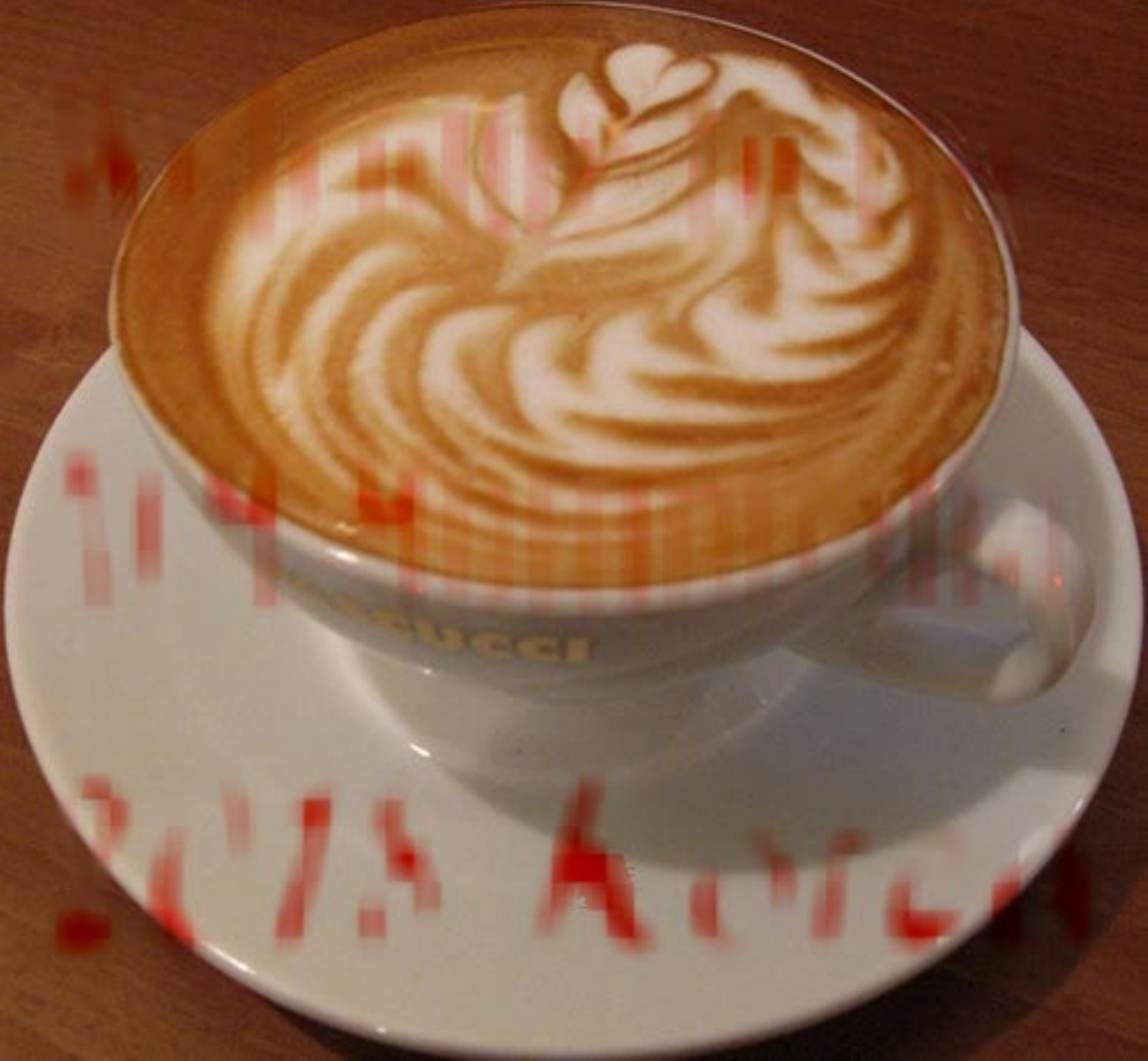}
		&\includegraphics[width=0.15\textwidth]{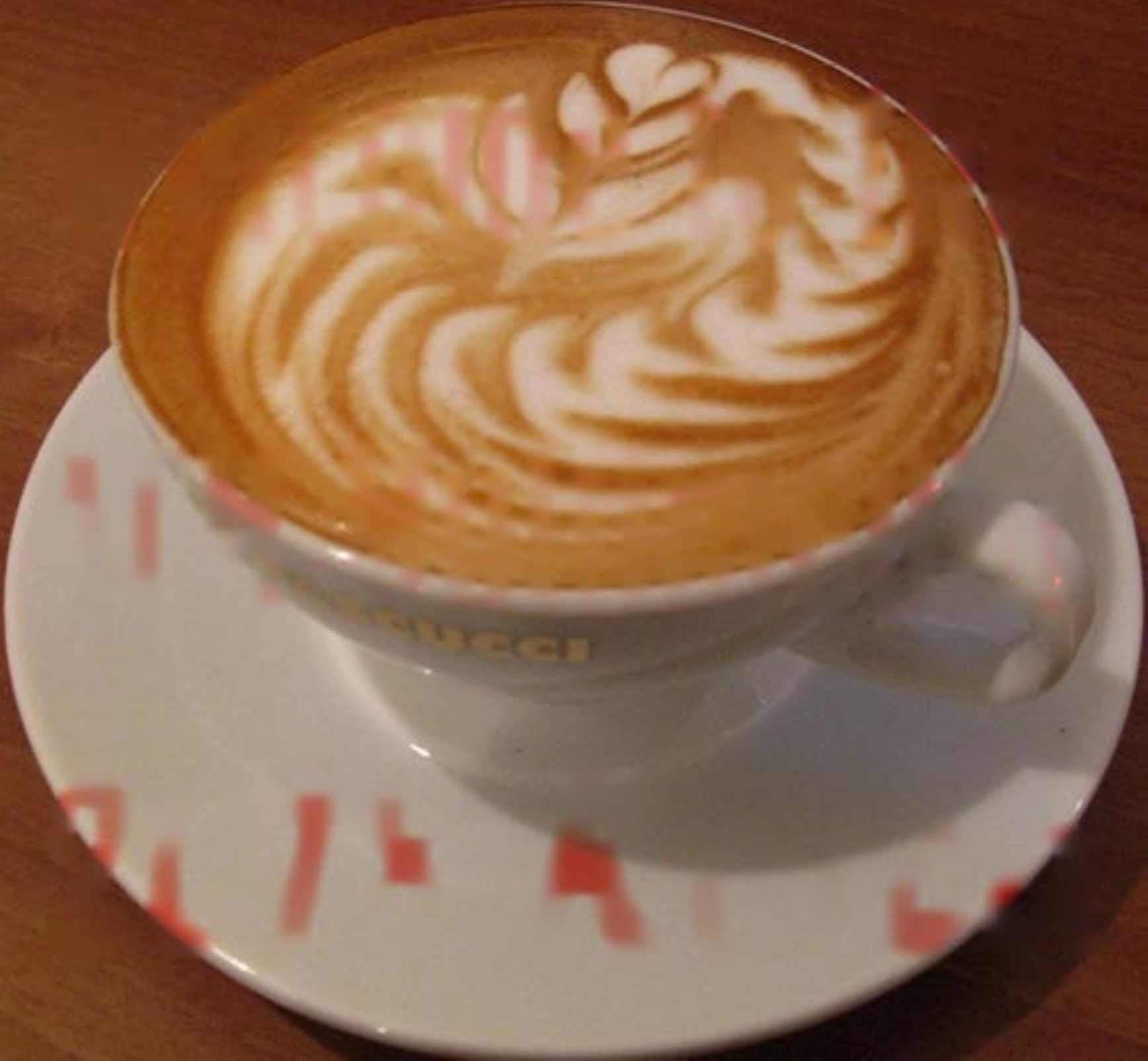}
		&\includegraphics[width=0.15\textwidth]{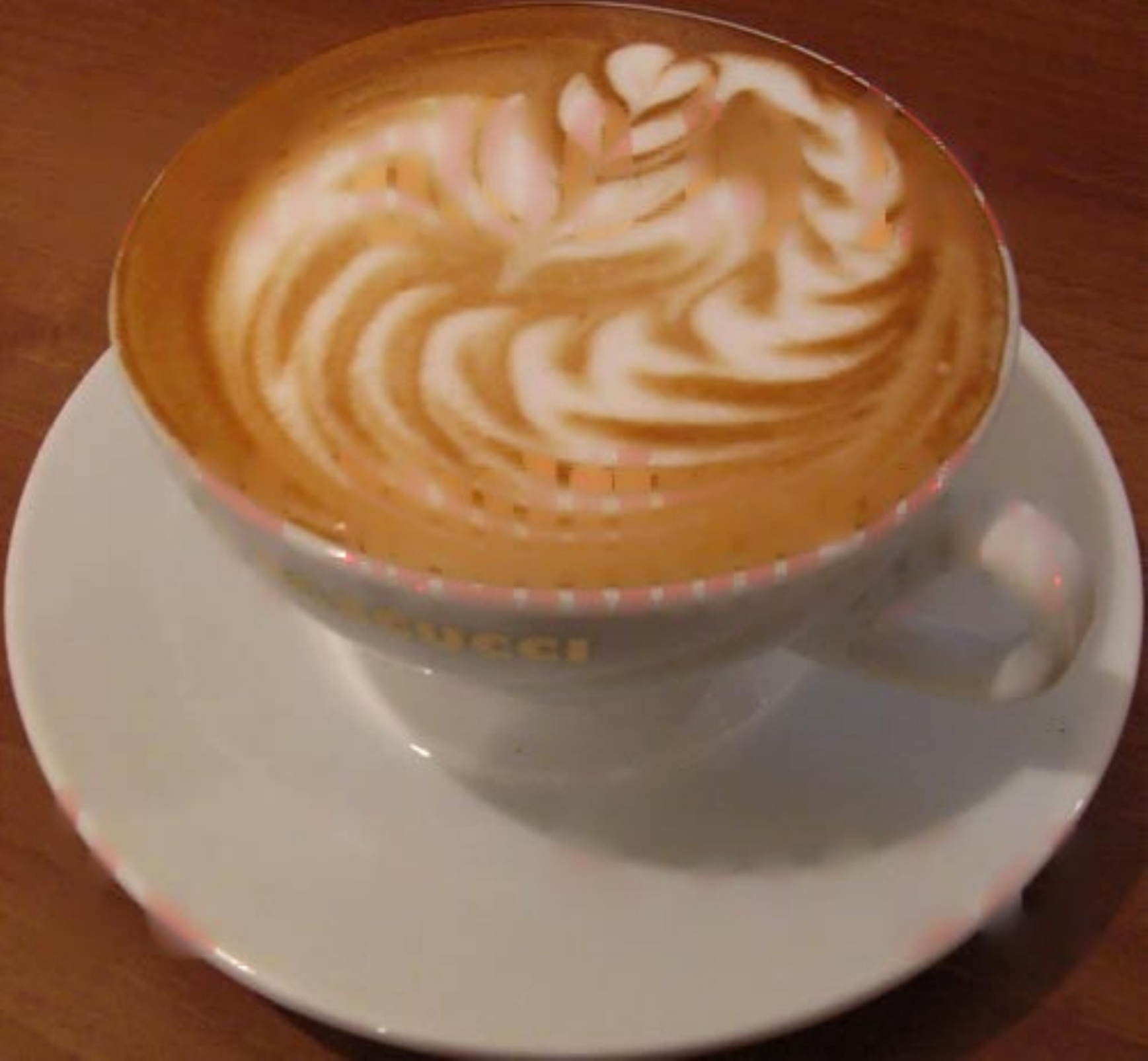}
		&\includegraphics[width=0.15\textwidth]{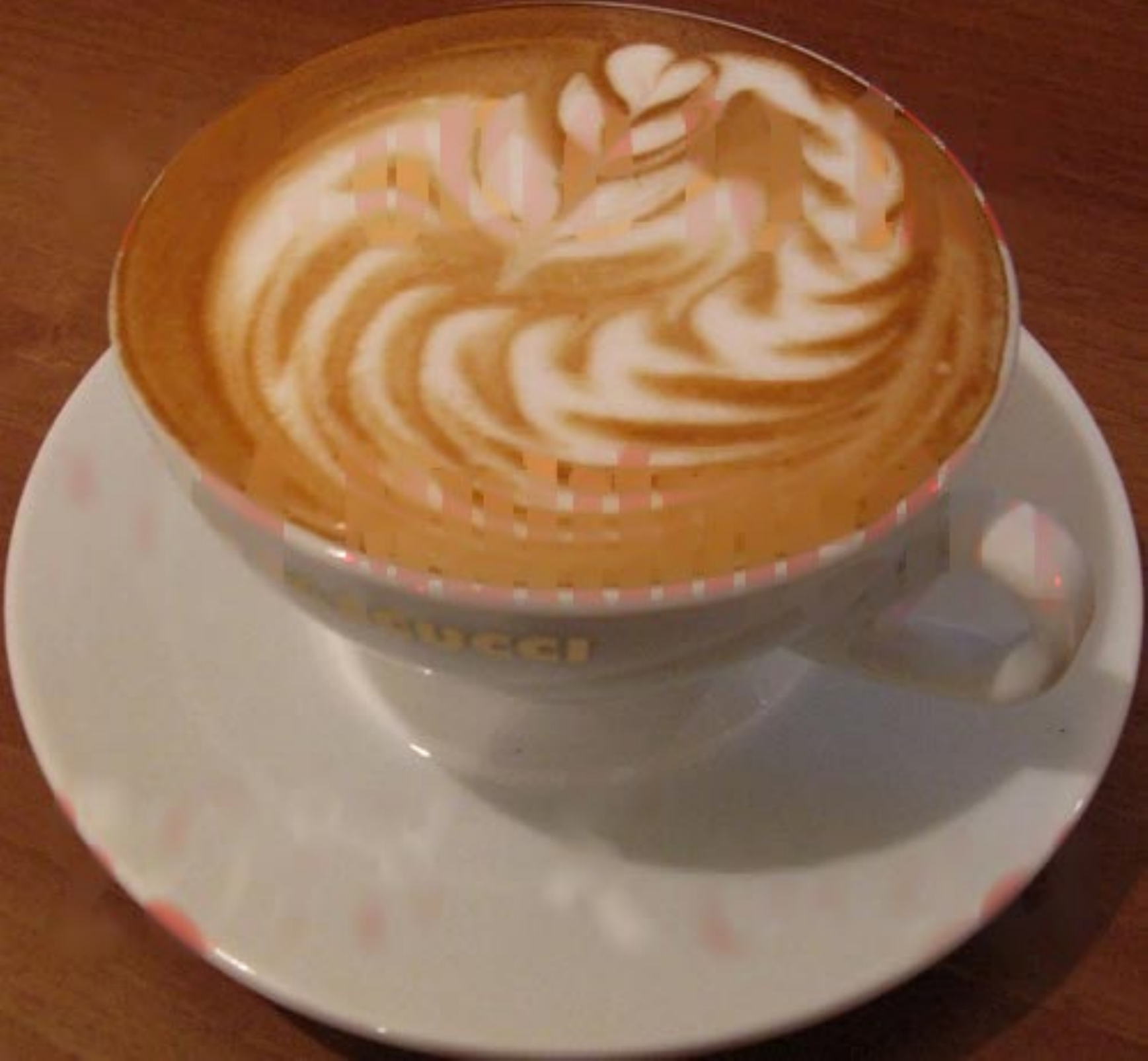}
		&\includegraphics[width=0.15\textwidth]{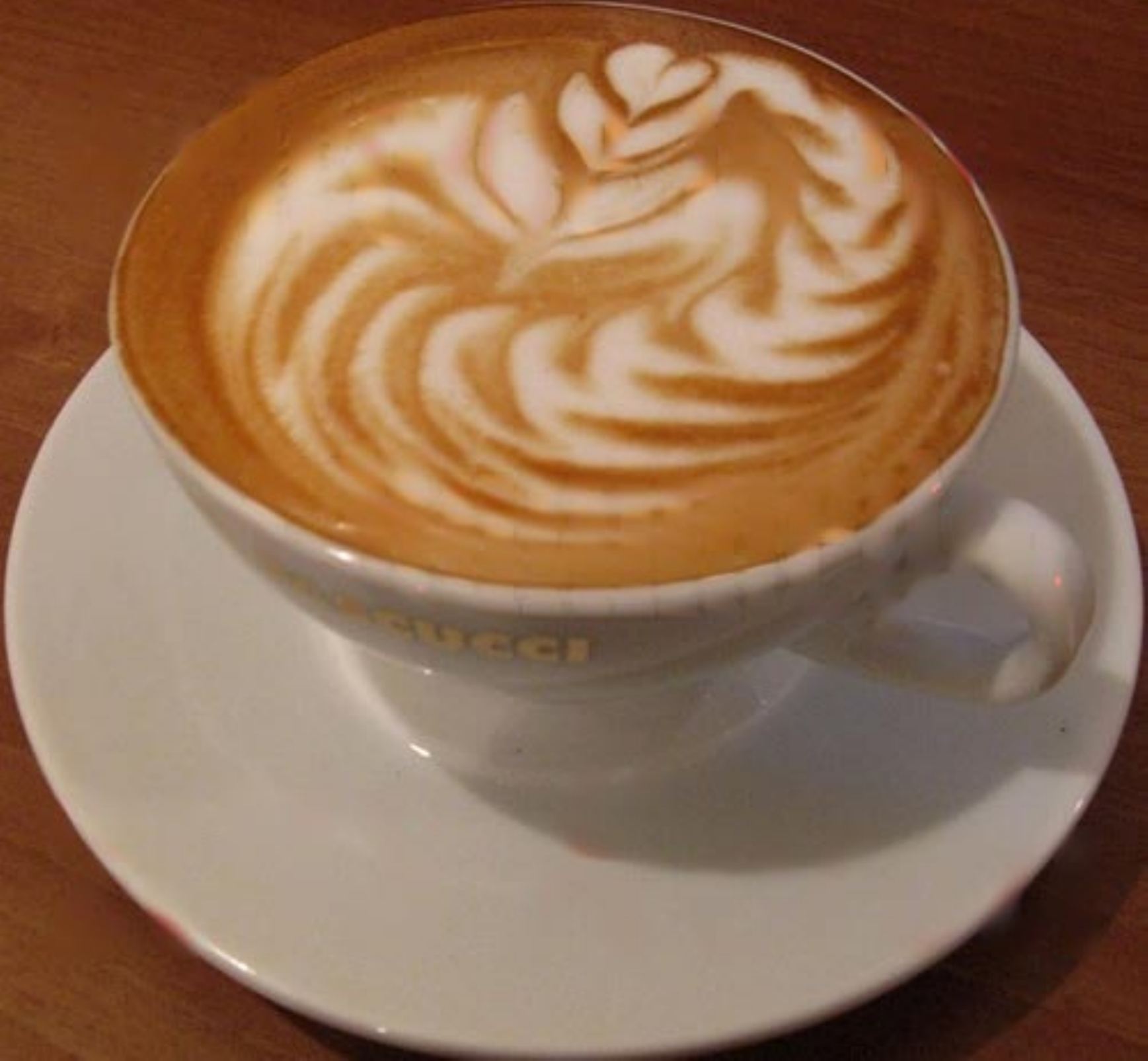}\\
		PSNR~/~SSIM & 24.93~/~0.9700 & 27.83~/~0.9825 & 31.26~/~0.9874 & 31.21~/~0.9884 & \textbf{35.81~/~0.9953} \\
		\includegraphics[width=0.15\textwidth]{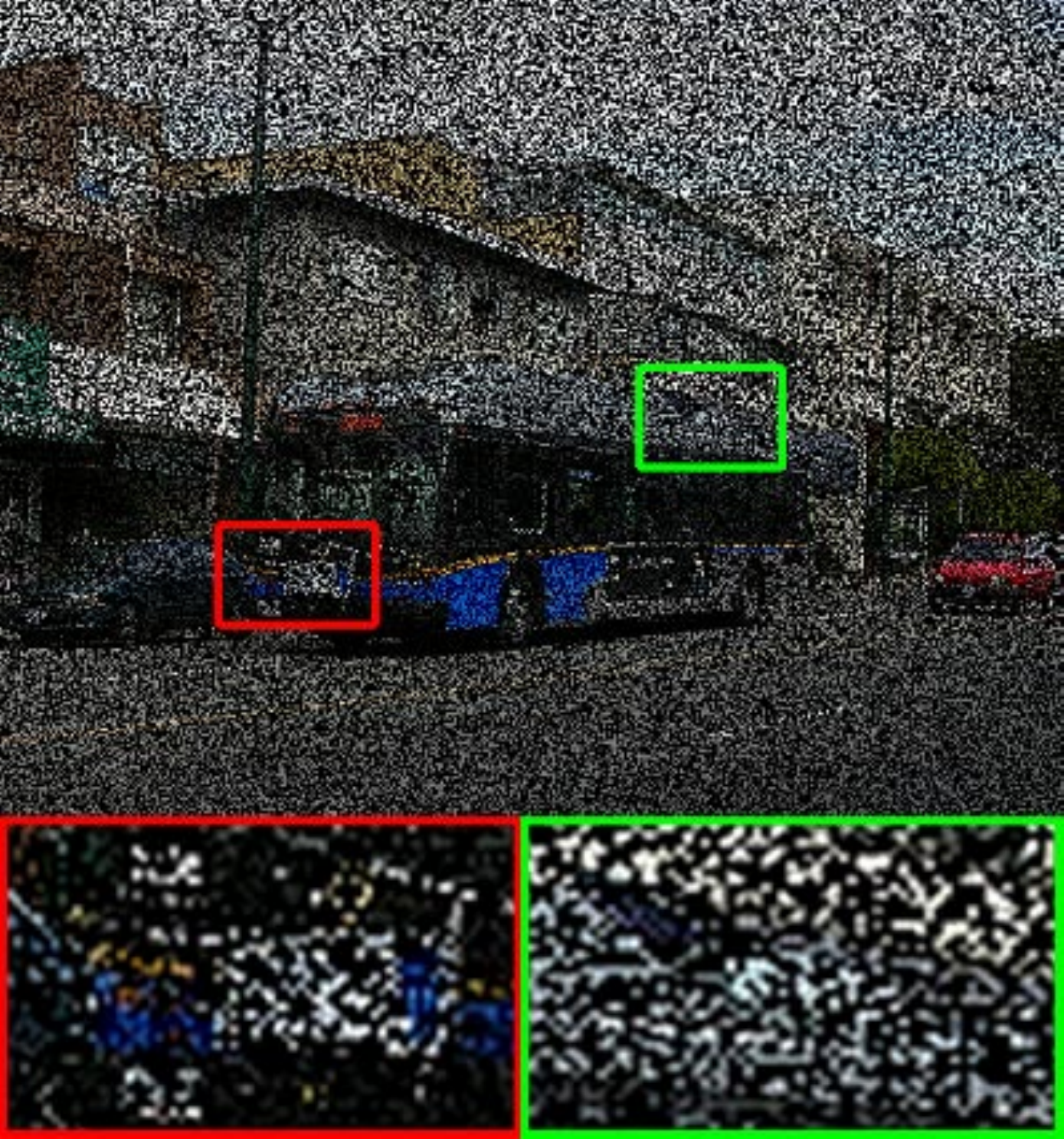}
		&\includegraphics[width=0.15\textwidth]{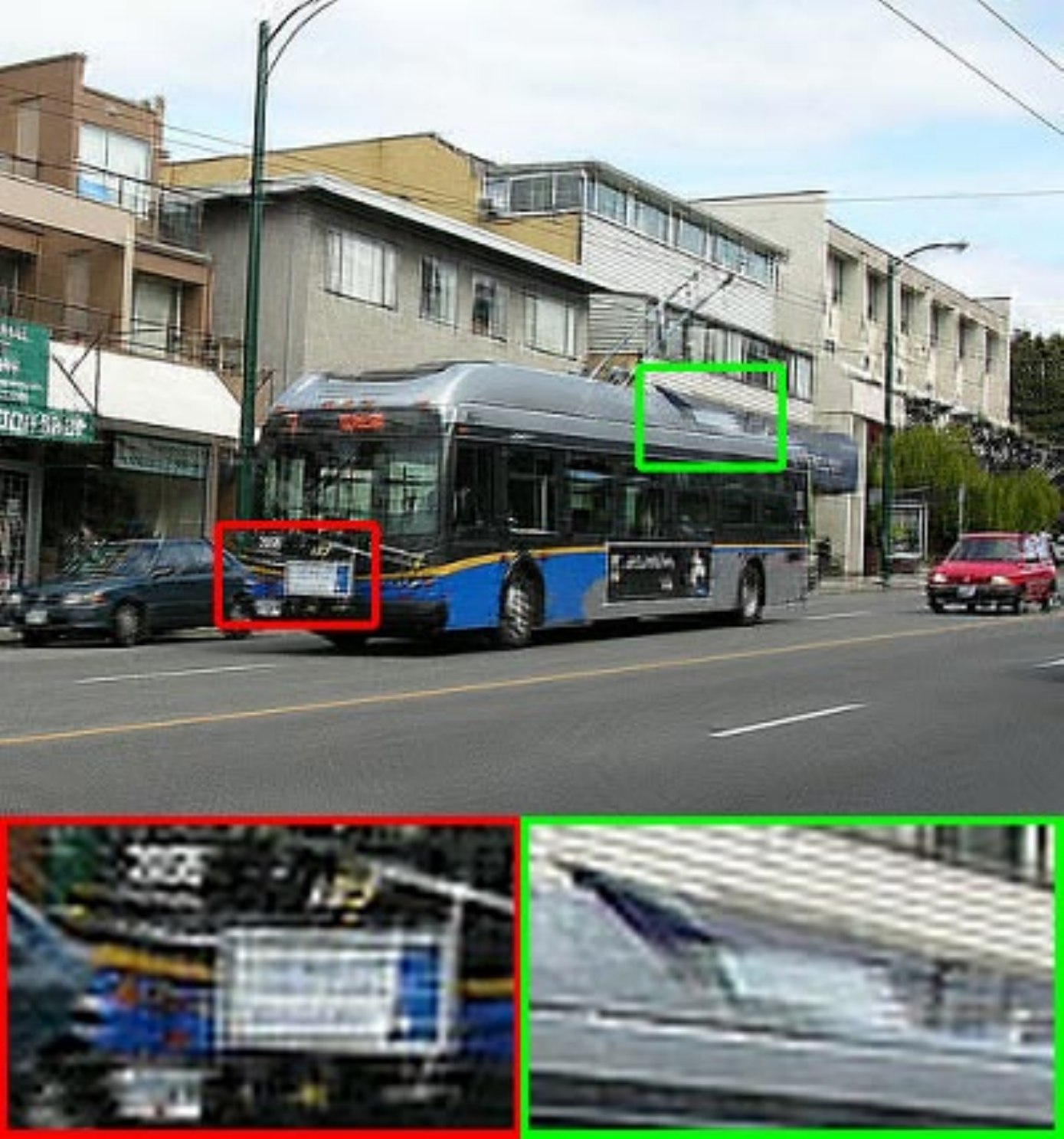}
		&\includegraphics[width=0.15\textwidth]{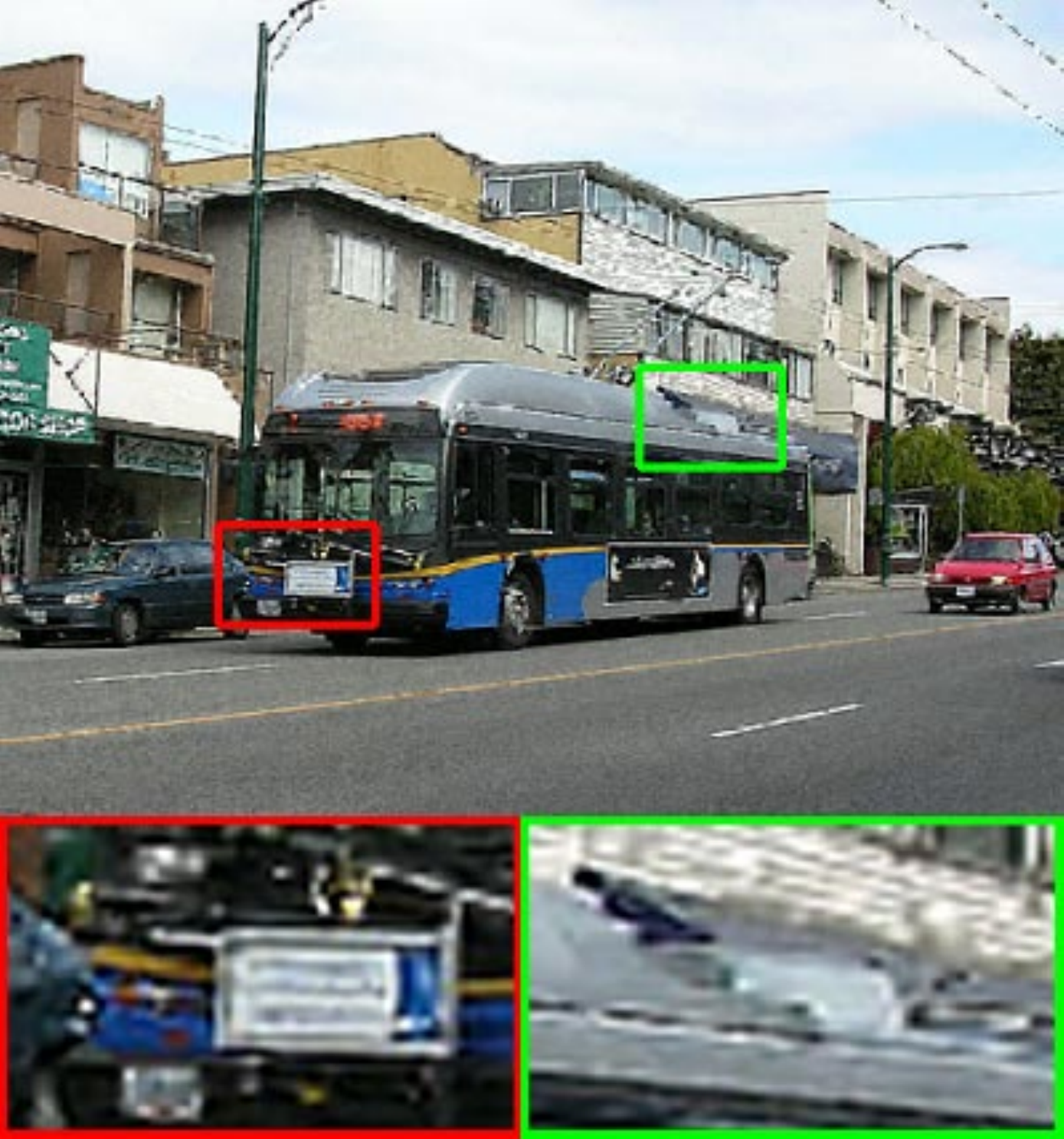}
		&\includegraphics[width=0.15\textwidth]{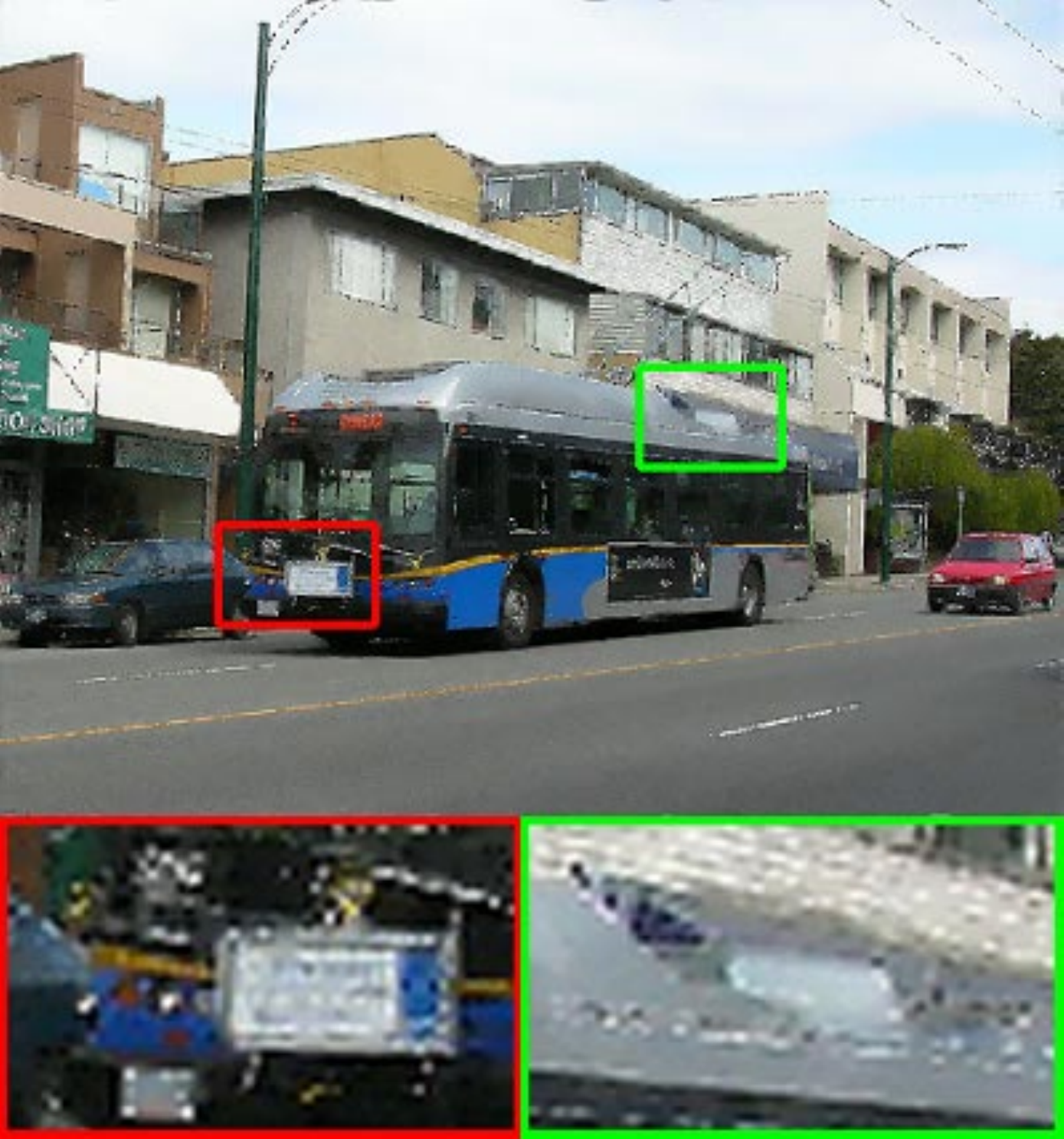}
		&\includegraphics[width=0.15\textwidth]{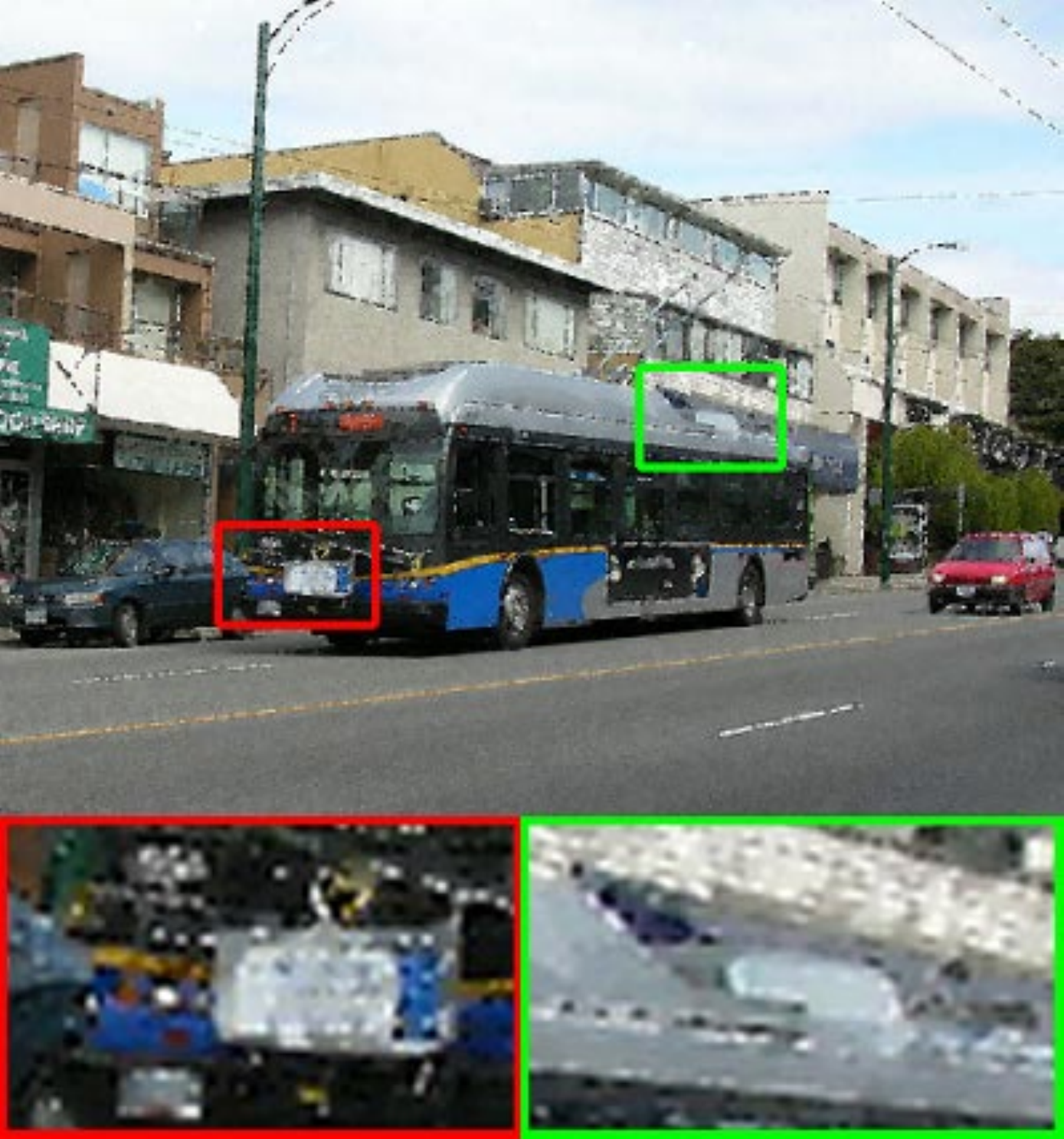}
		&\includegraphics[width=0.15\textwidth]{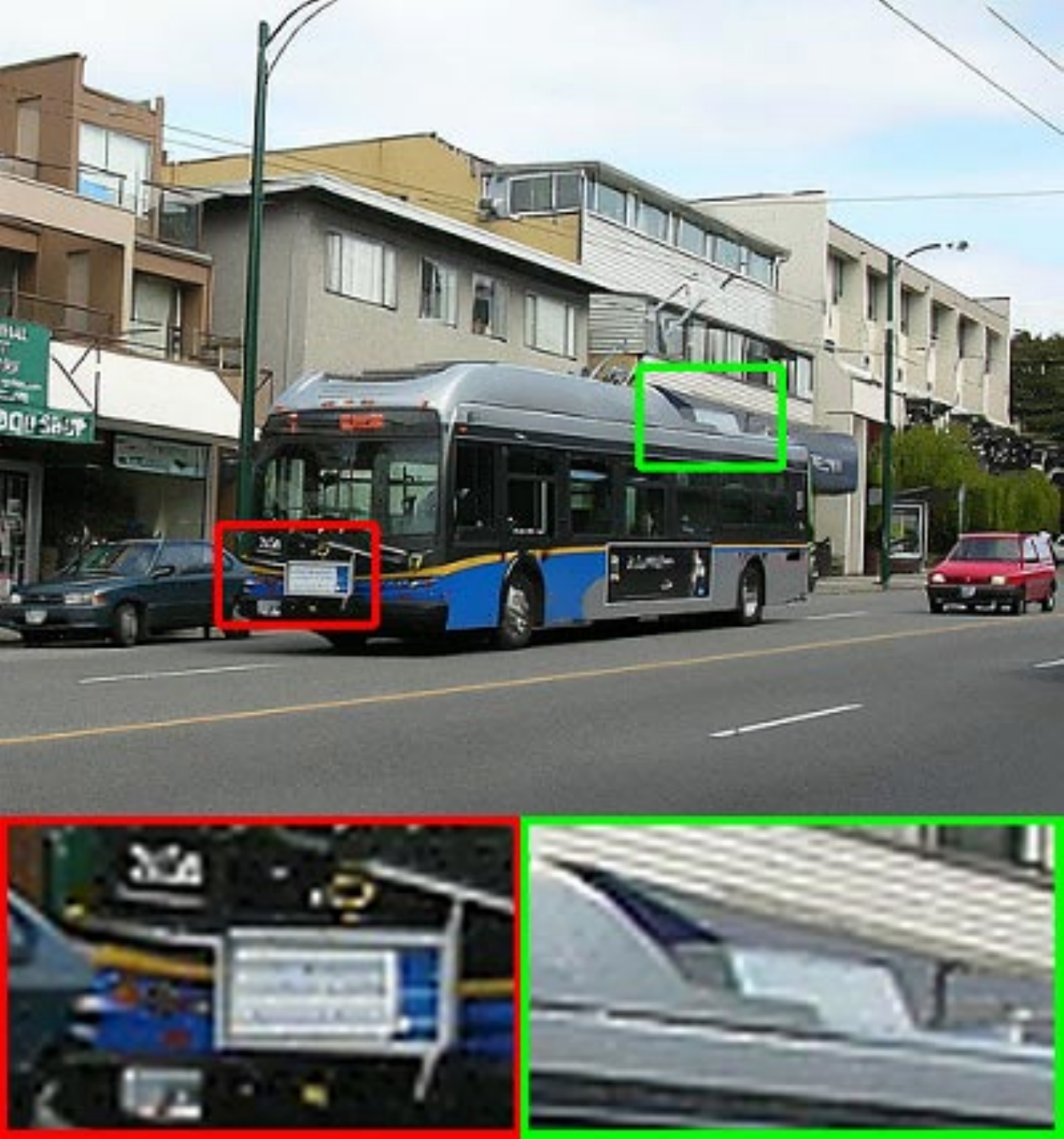}\\
		PSNR~/~SSIM & 24.13~/~0.87 & 22.91~/~0.81 & 21.95~/~0.80 & 21.77~/~0.87 & \textbf{24.45~/~0.89} \\
		(a) Input & (b) ALOHA~\cite{jin2015annihilating} & (c) FoE~\cite{roth2009fields} & (d) ISDSB~\cite{he2014iterative}  & (e) TV~\cite{getreuer2012total} & (f) Ours\\	
	\end{tabular}
	\caption{Image interpolation results with different masks (i.e., text and $\mathbf{60\%}$ random missing pixels). Quantitative metrics are reported below each image.}
	\label{fig:inpaintting result}
\end{figure*}

\begin{figure}[tb]
	\centering \begin{tabular}{c@{\extracolsep{0.2em}}c@{\extracolsep{0.2em}}c@{\extracolsep{0.2em}}c@{\extracolsep{0.2em}}c@{\extracolsep{0.2em}}c}
		\includegraphics[width=0.145\textwidth]{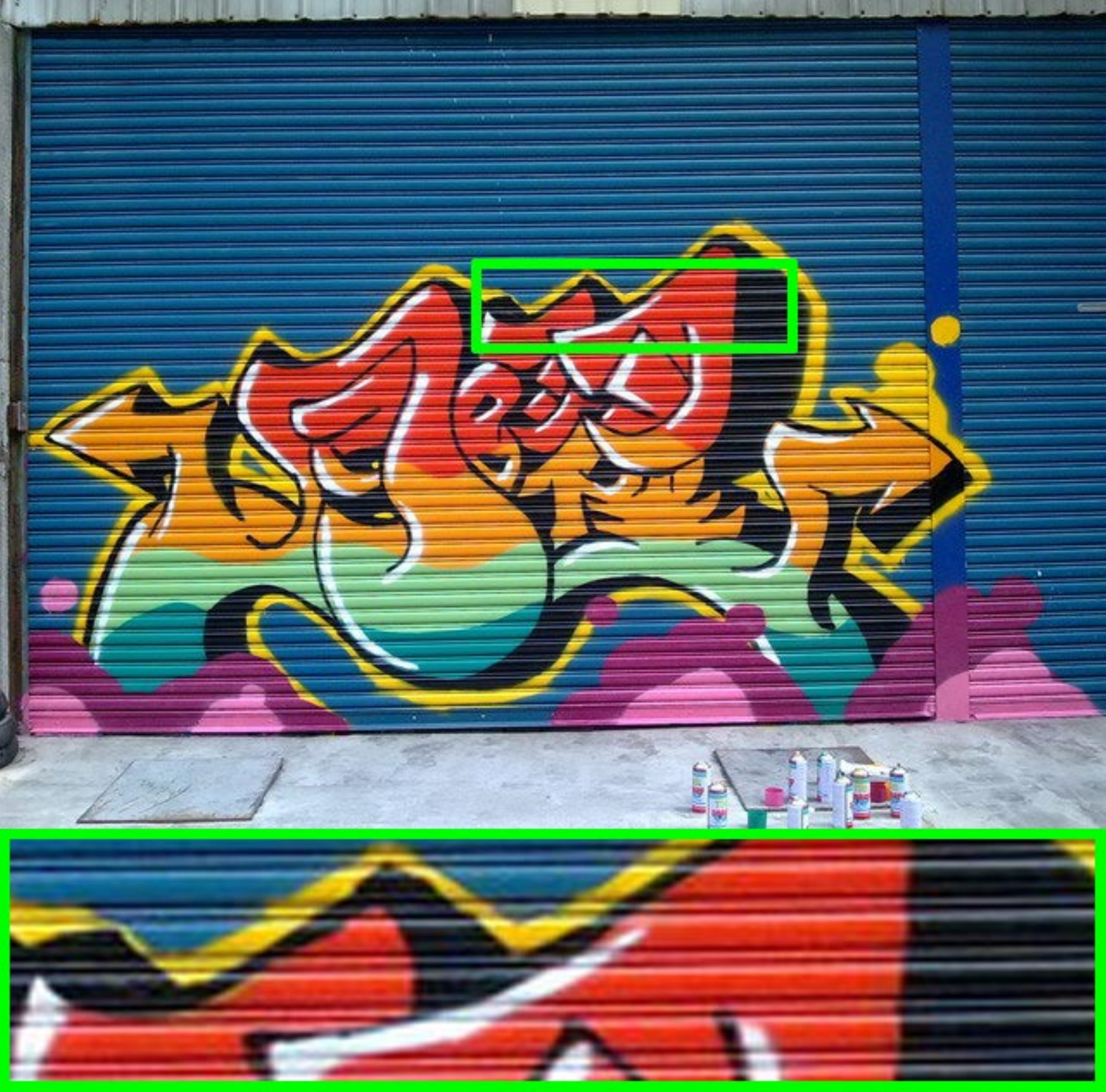}
		&\includegraphics[width=0.145\textwidth]{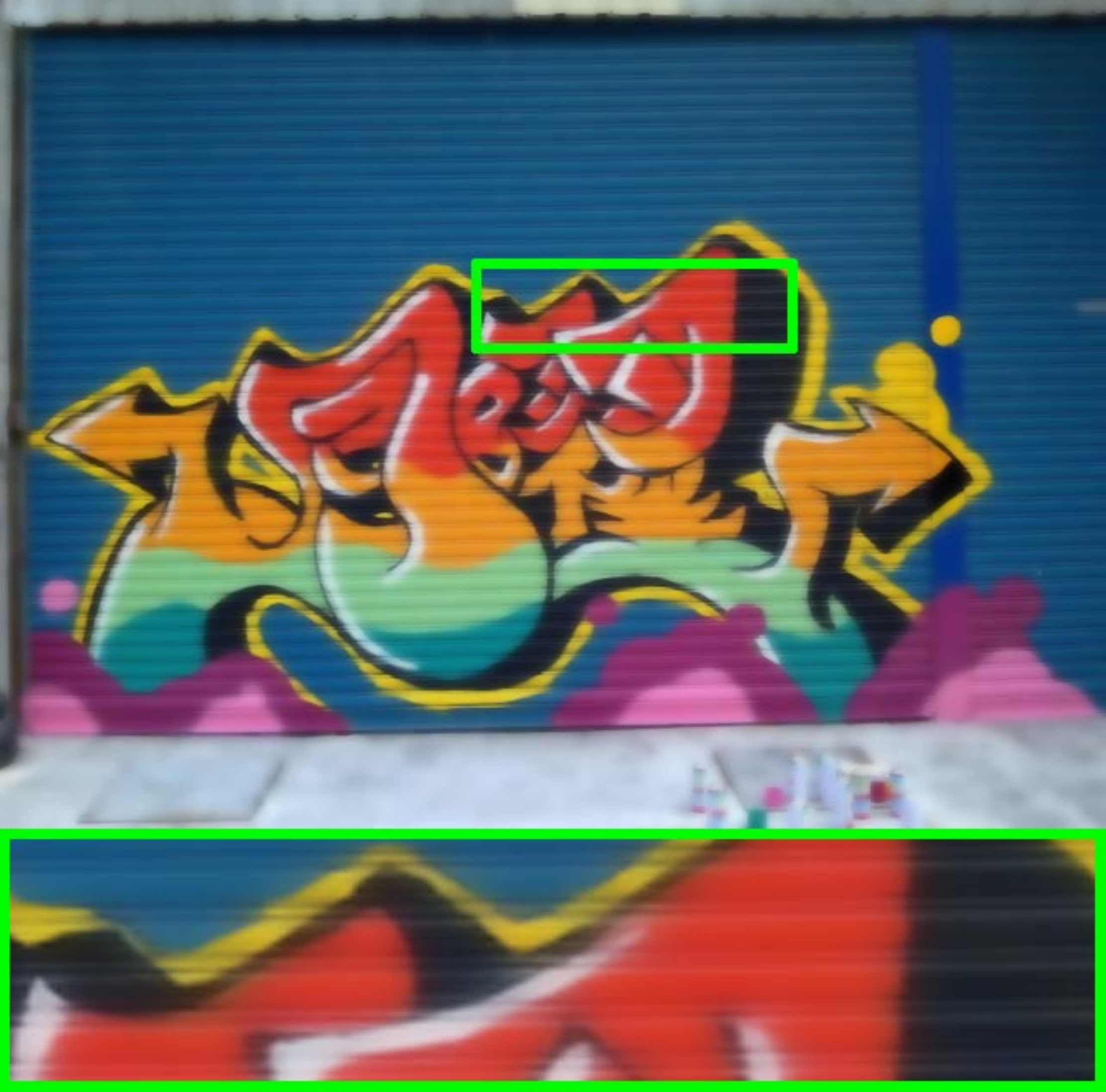}
		&\includegraphics[width=0.145\textwidth]{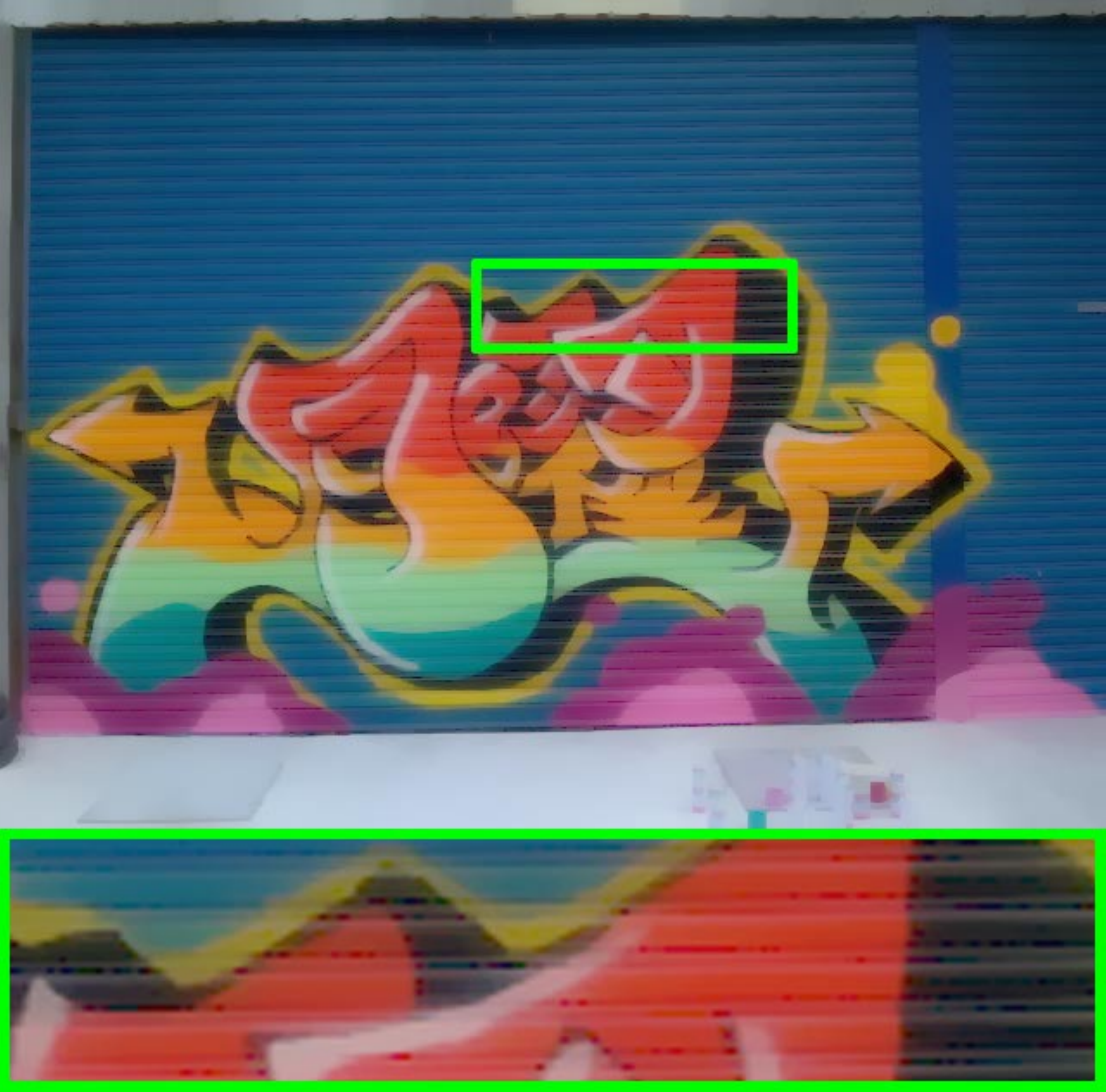}\\
		(a) Input & (b) BLF~\cite{tomasi1998bilateral} & (c) WLS~\cite{farbman2008edge} \\
		\includegraphics[width=0.145\textwidth]{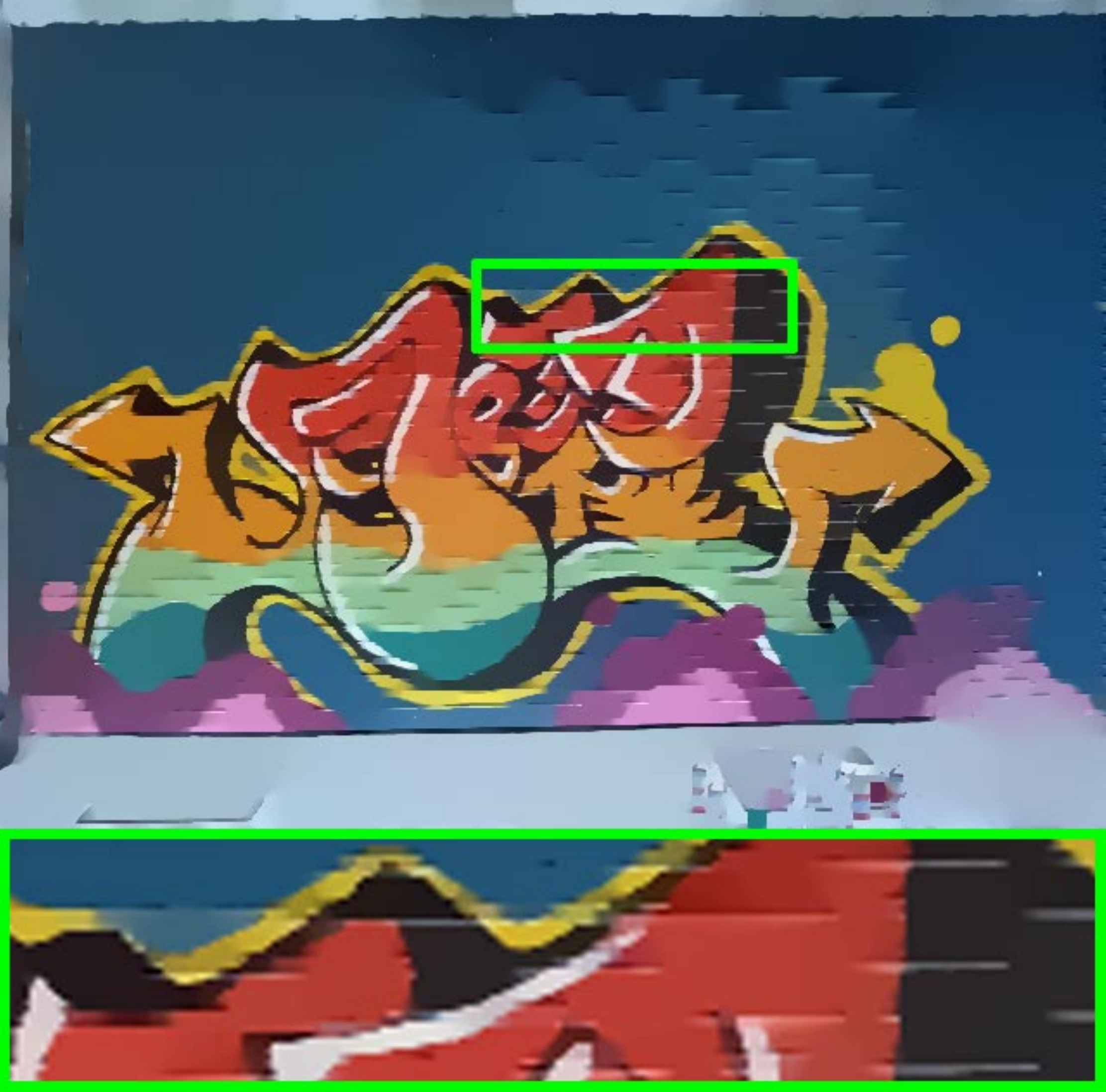}
		&\includegraphics[width=0.145\textwidth]{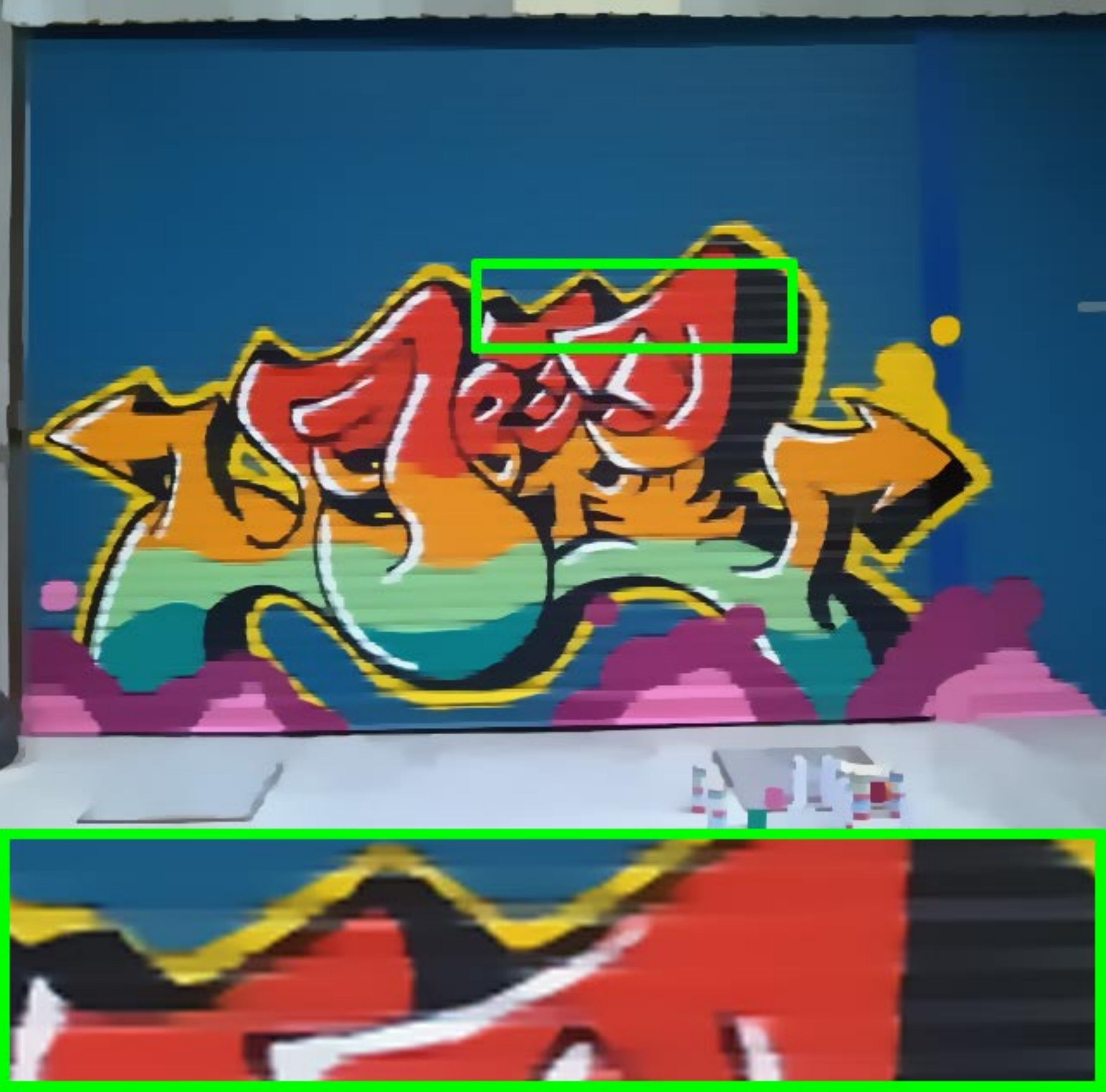}
		&\includegraphics[width=0.145\textwidth]{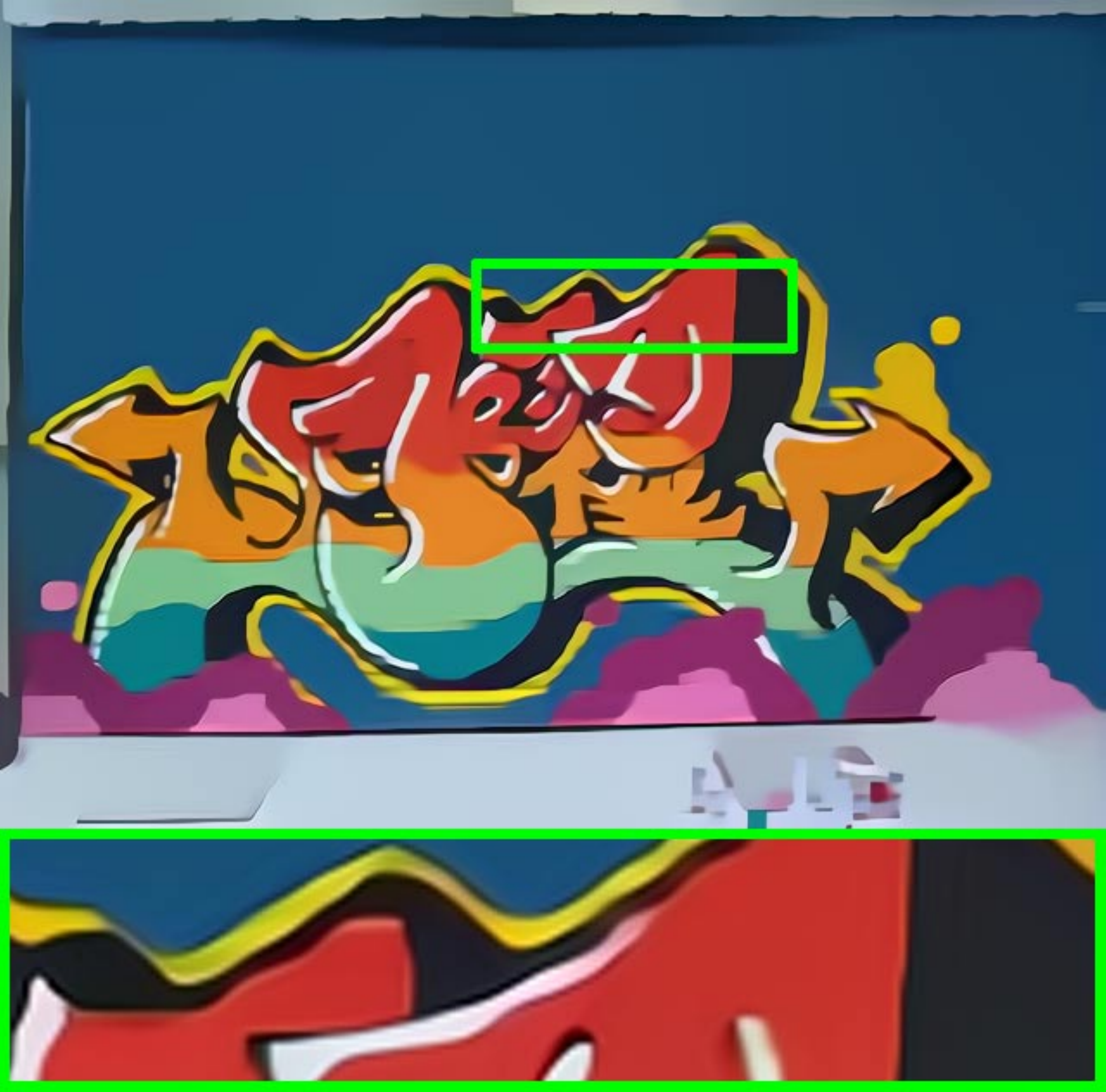}\\
		(d) $L_0$~\cite{xu2011image}  & (e) RTV~\cite{tsmoothing2012} & (f) Ours\\	
	\end{tabular}
	\caption{Edge-preserved smoothing results on an example with abundant textures.}
	\label{fig:smoothing result}
\end{figure}

\subsubsection{Edge-Preserved Smoothing}
Edge-preserved image smoothing is a fundamental tool for image editing and processing, such as pencil sketch rendering~\cite{Lu2012Combining} and cartoon artifact removal~\cite{xu2011image}. 
Here we compare our method with state-of-the-art image smoothing approaches, including
the classic BLF~\cite{tomasi1998bilateral}, WLS~\cite{farbman2008edge} and recently proposed $L_0$~\cite{xu2011image}, RTV~\cite{tsmoothing2012}. Fig.~\ref{fig:smoothing result} illustrates the results on an example image collected by~\cite{tsmoothing2012}. It can be seen that GCM removes most of the horizontal shutter door textures, while there still exists some horizontal lines in the results of other methods.

\begin{figure}[tb]   
	\centering \begin{tabular}{c@{\extracolsep{0.5em}}c@{\extracolsep{0.5em}}c@{\extracolsep{0.5em}}c}							     
		\includegraphics[width=0.11\textwidth]{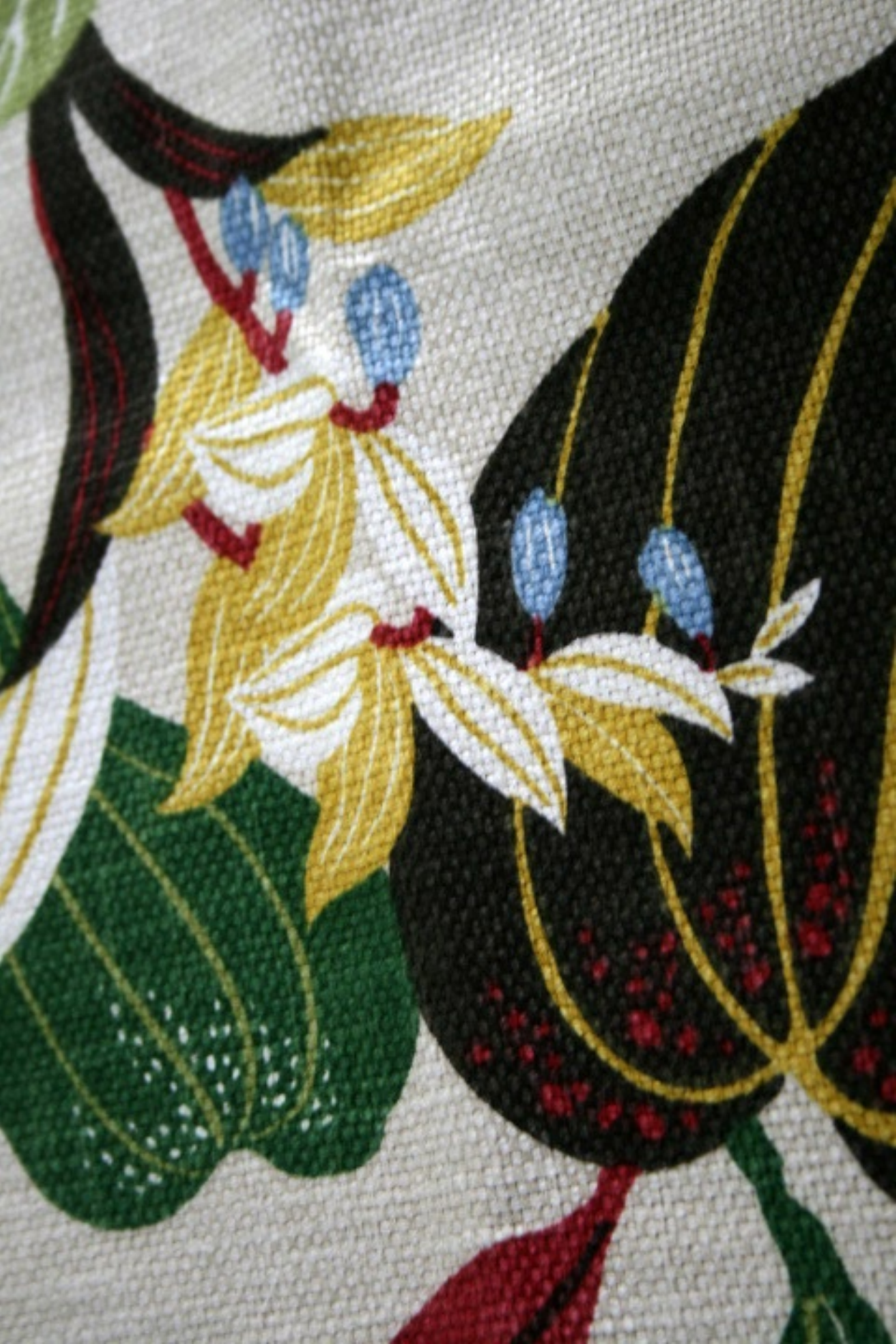}		
		&\includegraphics[width=0.11\textwidth]{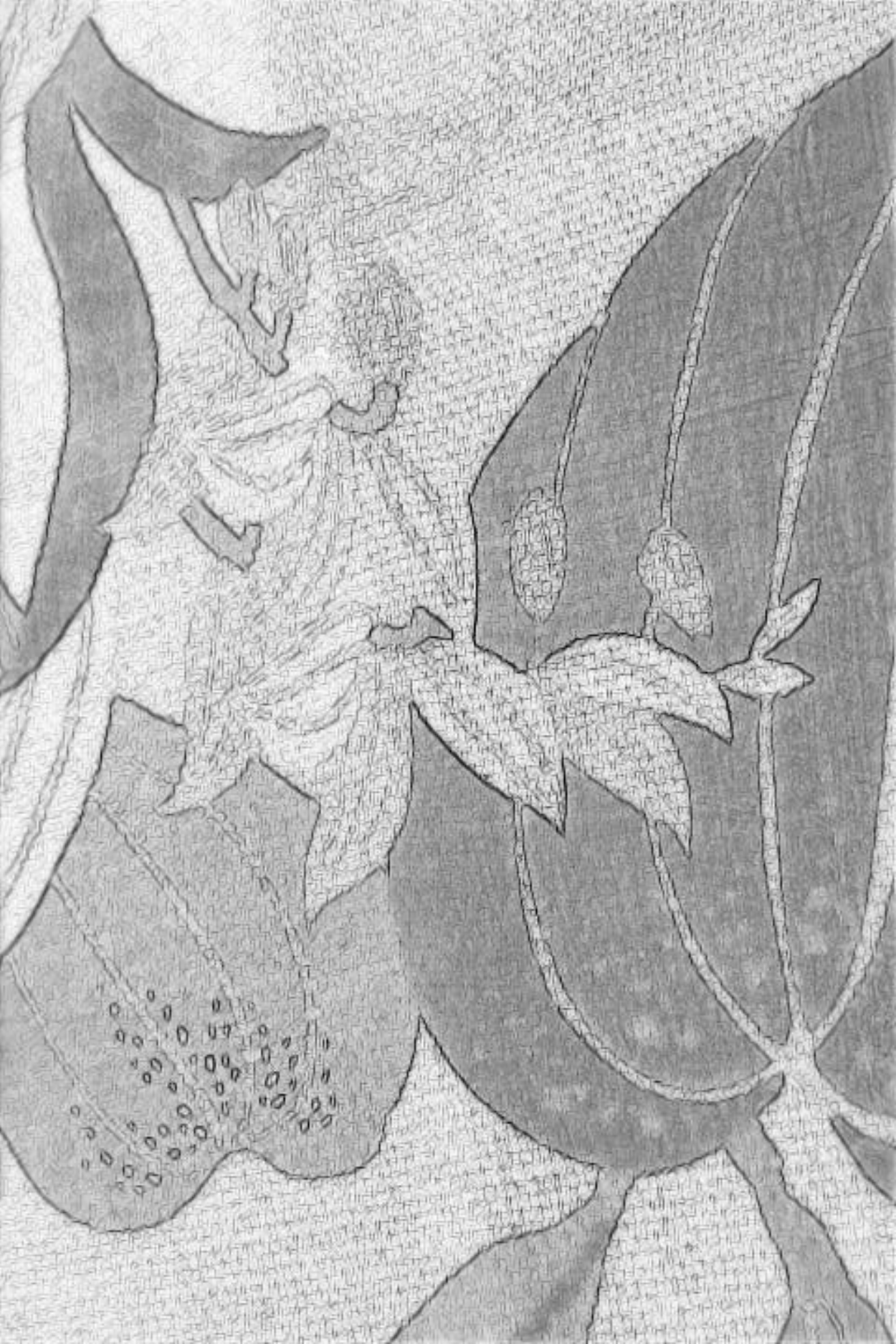}
		&\includegraphics[width=0.11\textwidth]{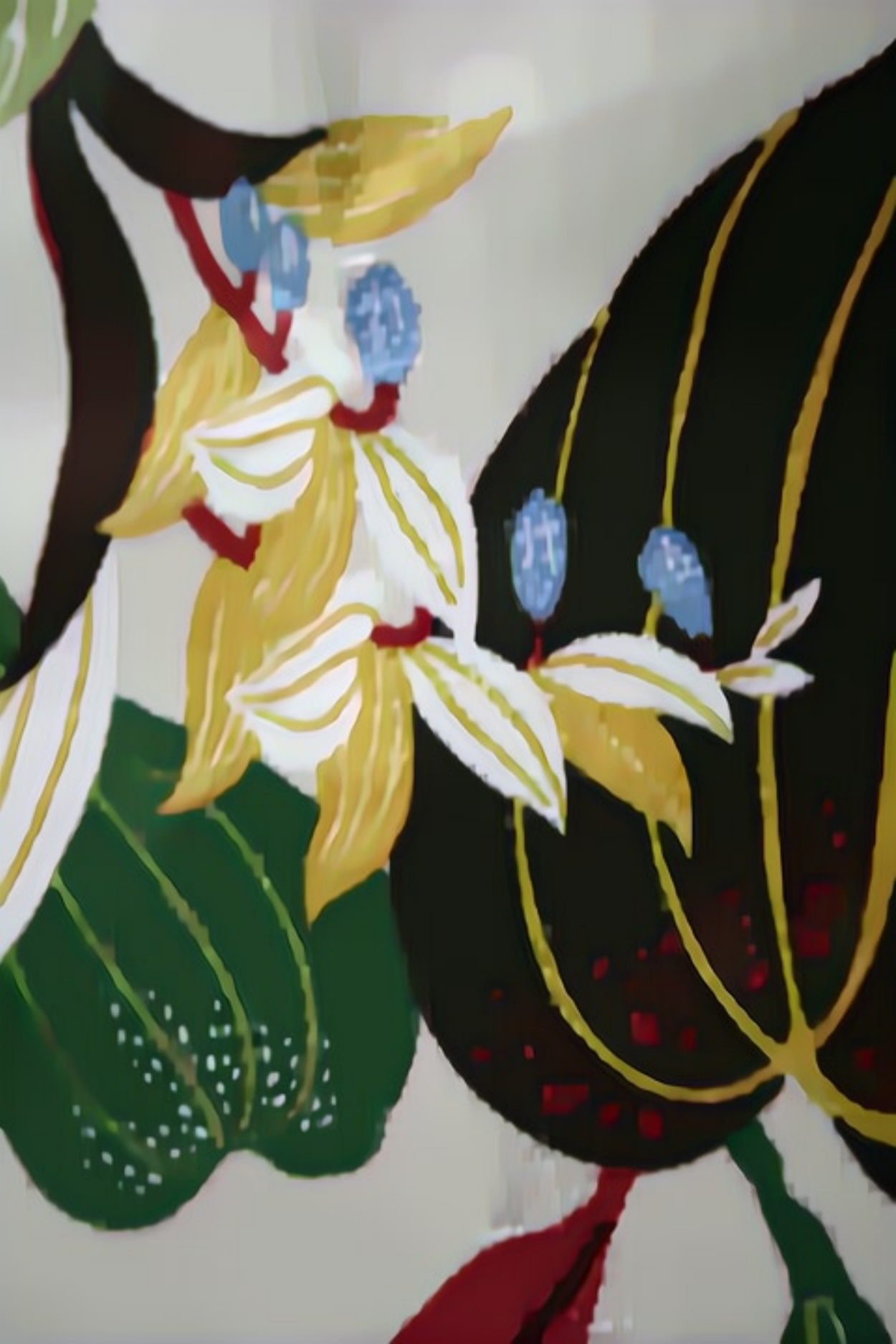}
		&\includegraphics[width=0.11\textwidth]{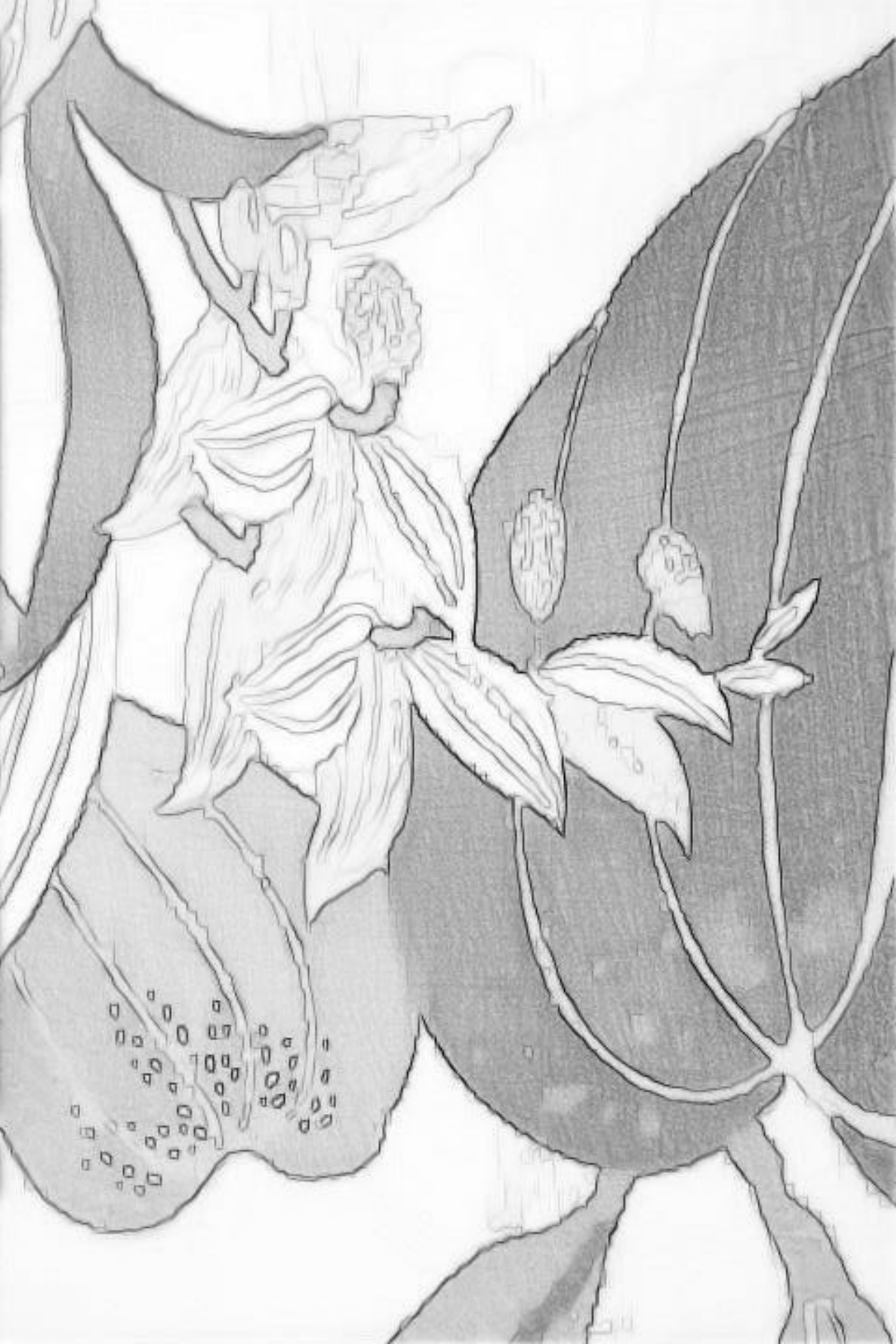}\\
		(a) & (b)  &(c) & (d) \\
	\end{tabular}
	\caption{Pencil sketch rendering results on a non-photorealistic image. The input and its rendered sketch are plotted in sufigures (a) and (b), respectively. The smoothed result of GCM and the corresponding pencil sketch rendering are plotted in subfigures (c) and (d), respectively.}
	\label{fig:pencil sketch result}
\end{figure}
 
To further illustrate the efficiency of our edge-preserved smoothing results, we also employ the method in ~\cite{Lu2012Combining} to perform pencil sketch rendering based on the smoothed results of GCM. In Fig.~\ref{fig:pencil sketch result}, we compare the sketching results based on the original image and our results. It can be seen that GCM suppresses low-amplitude details and enhances high contrast edges, resulting to much better pencil sketch rendering .

\section{Conclusions}
This paper proposed GCM, a collaborative learning framework to estimate the latent image structures. By 
integrating the learnable-architecture-based Generator and the model-driven Corrector in a principled manner, we obtained a convergent image propagation, which can promote kernel estimation for blind image deblurring. As a nontrivial byproduct,
we also extended GCM to address other related vision and multimedia applications. Experimental results demonstrated that GCM achieved better performance than other state-of-the-art approaches on all the test applications.

\appendix
\section{Proof of Theorem~\ref{theo:convergence}}
\label{app:appendix}
To present our proof in a clear manner, here we reorganize the results in Theorem~\ref{theo:convergence} as the following two successive theorems, referring to ``non-increasing properties of the objectives'' (i.e., Theorem~\ref{prop:non-increasing}) and the ``critical point convergence'' (i.e., Theorem~\ref{prop:critical-point}), respectively.
Moreover, our theoretical analysis is based on some mild and widely used function assumptions. That is, we assume that $f$ is Lipschitz smooth, $\phi$ is lower semi-continuous, and $\Psi$ is coercive\footnote{Please follow the references~\cite{rockafellar2009variational,attouch2009convergence} for their formal definitions.}. Fortunately, it is easy to check that all these assumptions are satisfied for the functions considered in this work.

\subsection{Non-increasing Properties of the Objectives}
\begin{thm}\label{prop:non-increasing}
	If $\mu^{t} < 1/L$, 
	both $\{\mathbf{u}^{t}\}$ and $\{\mathbf{v}^{t}\}$ are the sequence generated by GCM, we have the objectives $\{\Psi(\mathbf{u}^{t})\}$ is a non-increasing sequence and satisfied the following relationship:
	\begin{displaymath}
	\Psi(\mathbf{u}^{t+1}) \leq  \Psi(\mathbf{v}^{t+1}) \leq \Psi(\mathbf{u}^{t}), \ \forall \ t \in \{0, 1, 2,\dots\}.
	\end{displaymath}
	
\end{thm}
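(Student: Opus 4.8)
The plan is to establish the chain $\Psi(\mathbf{u}^{t+1}) \leq \Psi(\mathbf{v}^{t+1}) \leq \Psi(\mathbf{u}^{t})$ by handling the two inequalities separately, since they correspond to the two building blocks of the Corrector step in Alg.~\ref{alg:deconvolution}. The second inequality, $\Psi(\mathbf{v}^{t+1}) \leq \Psi(\mathbf{u}^{t})$, is essentially immediate from the definition of the monitor variable $\mathbf{v}^{t+1}$ (Steps 7--11): if the ``if'' branch is taken, then $\mathbf{v}^{t+1} = \tilde{\mathbf{u}}^{t+1}$ and the branching condition itself is exactly $\Psi(\tilde{\mathbf{u}}^{t+1}) \leq \Psi(\mathbf{u}^{t})$; if the ``else'' branch is taken, then $\mathbf{v}^{t+1} = \mathbf{u}^{t}$ and the inequality holds with equality. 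So regardless of the (possibly unreliable, learnable) behaviour of the Generator $\mathcal{N}^t$, the monitor guarantees $\Psi(\mathbf{v}^{t+1}) \leq \Psi(\mathbf{u}^{t})$ unconditionally. This is the step where the ``collaborative'' design pays off: the CNN output is accepted only when it does not hurt the objective.

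For the first inequality, $\Psi(\mathbf{u}^{t+1}) \leq \Psi(\mathbf{v}^{t+1})$, I would invoke the standard sufficient-decrease property of the proximal-gradient step (Step 12), which is where the hypothesis $\mu^{t} < 1/L$ enters. Recall $\Psi = f + \phi$ with $f$ being $L$-Lipschitz smooth and $\phi$ lower semi-continuous. Using the descent lemma for $f$ at $\mathbf{v}^{t+1}$, namely $f(\mathbf{u}) \leq f(\mathbf{v}^{t+1}) + \langle \nabla f(\mathbf{v}^{t+1}), \mathbf{u} - \mathbf{v}^{t+1}\rangle + \tfrac{L}{2}\|\mathbf{u} - \mathbf{v}^{t+1}\|^2$, together with the fact that $\mathbf{u}^{t+1}$ is a minimizer (over $\mathbf{u}$) of the proximal subproblem $\tfrac{1}{2\mu^t}\|\mathbf{u} - (\mathbf{v}^{t+1} - \mu^t \nabla f(\mathbf{v}^{t+1}))\|^2 + \phi(\mathbf{u})$, one compares the objective value at $\mathbf{u}^{t+1}$ against its value at the feasible point $\mathbf{u} = \mathbf{v}^{t+1}$. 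After expanding the squared norms and cancelling the $\langle \nabla f(\mathbf{v}^{t+1}), \cdot\rangle$ terms, this yields $\Psi(\mathbf{u}^{t+1}) \leq \Psi(\mathbf{v}^{t+1}) - \left(\tfrac{1}{2\mu^t} - \tfrac{L}{2}\right)\|\mathbf{u}^{t+1} - \mathbf{v}^{t+1}\|^2$, and the coefficient is positive precisely because $\mu^t < 1/L$. Dropping the nonpositive remainder term gives $\Psi(\mathbf{u}^{t+1}) \leq \Psi(\mathbf{v}^{t+1})$.

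Chaining the two inequalities gives $\Psi(\mathbf{u}^{t+1}) \leq \Psi(\mathbf{v}^{t+1}) \leq \Psi(\mathbf{u}^{t})$ for all $t$, hence $\{\Psi(\mathbf{u}^t)\}$ is non-increasing, which is the assertion of Theorem~\ref{prop:non-increasing}. I expect the only genuinely delicate point to be the bookkeeping in the proximal-gradient estimate --- being careful that $\mathbf{u}^{t+1}$ need only be \emph{a} minimizer (the statement uses ``$\in \mathtt{prox}$'' since $\phi$ is nonconvex and the proximal map may be set-valued), so the comparison-with-$\mathbf{v}^{t+1}$ argument must use only the inequality ``objective at $\mathbf{u}^{t+1}$ $\leq$ objective at $\mathbf{v}^{t+1}$'' for the proximal subproblem and nothing about uniqueness or first-order conditions. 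The Generator step and the warm-start step (Steps 4--5) require no analysis here at all: their effect is entirely absorbed by the monitor variable, which is exactly the architectural trick that makes the monotonicity robust to the learnable, non-optimization-based component.
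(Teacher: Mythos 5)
Your proposal is correct and follows essentially the same route as the paper's own proof: the monitor variable in Steps 7--11 gives $\Psi(\mathbf{v}^{t+1})\leq\Psi(\mathbf{u}^{t})$ by construction, and the sufficient-decrease bound $\Psi(\mathbf{u}^{t+1})\leq\Psi(\mathbf{v}^{t+1})-\bigl(\tfrac{1}{2\mu^t}-\tfrac{L}{2}\bigr)\|\mathbf{u}^{t+1}-\mathbf{v}^{t+1}\|^2$ is obtained exactly as in the paper by comparing the proximal subproblem's objective at $\mathbf{u}^{t+1}$ and at $\mathbf{v}^{t+1}$ and invoking the descent lemma. Your remark that the argument only needs $\mathbf{u}^{t+1}$ to be \emph{a} minimizer of the (possibly set-valued) proximal map is a point the paper leaves implicit, but it changes nothing in the argument.
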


\begin{proof}
	Firstly, from monotony criterion (step~\ref{step:criterion-begin} to step~\ref{step:criterion-end} in Alg.~\ref{alg:deconvolution}), we have $\Psi(\mathbf{v}^{t+1}) \leq \Psi(\mathbf{u}^{t})$ obviously.
	Then considering the proximal operator in step~\ref{step:pg_u_new}, we will prove $\Psi(\mathbf{u}^{t+1}) \leq \Psi(\mathbf{v}^{t+1})$. 
	From step~\ref{step:pg_u_new} in Alg.~\ref{alg:deconvolution}, we have $\mathbf{u}^{t+1}$ is the optimal solution of following energy function:
	\begin{equation}
	\min\limits_{\mathbf{u}} \phi(\mathbf{u}) + \langle \nabla f(\mathbf{v}^{t+1}), \mathbf{u} - \mathbf{v}^{t+1} \rangle + \frac{1}{2\mu^t}\| \mathbf{u} - \mathbf{v}^{t+1}\|^2.\label{eq:pg-energy}
	\end{equation} 
	Thus, it is easy to obtain the inequality:
	\begin{equation}
	\phi(\mathbf{u}^{t+1}) + \langle \nabla f(\mathbf{v}^{t+1}), \mathbf{u}^{t+1} - \mathbf{v}^{t+1} \rangle + \frac{\mu^t}{2}\| \mathbf{u}^{t+1} - \mathbf{v}^{t+1}\|^2 \leq \phi(\mathbf{v}^{t+1}).\label{eq:criterion}
	\end{equation} 
	On the other hand, we can obtain another unequal relationship by Lipschitz smooth of $f$, i.e.,
	\begin{equation}
	f(\mathbf{u}^{t+1}) \leq f(\mathbf{v}^{t+1}) + \langle \nabla f(\mathbf{v}^{t+1}), \mathbf{u}^{t+1} - \mathbf{v}^{t+1} \rangle + \frac{L}{2} \| \mathbf{u}^{t+1} - \mathbf{v}^{t+1}\|^2,\label{eq:proximal-u-new}
	\end{equation}
	where $L$ is the Lipschitz constant.
	Combing Eqs.~\eqref{eq:criterion} and~\eqref{eq:proximal-u-new}, we have
	\begin{equation}
	\Psi(\mathbf{u}^{t+1}) \leq \Psi(\mathbf{v}^{t+1}) - \left( \frac{1}{2\mu^t} - \frac{L}{2} 	\right) \| \mathbf{u}^{t+1} - \mathbf{v}^{t+1}\|^2.\label{eq:value-inequality}
	\end{equation}
	Setting $\mu^t < \frac{1}{L}$, we have $\Psi(\mathbf{u}^{t+1}) \leq \Psi(\mathbf{v}^{t+1})$.
	So far, we get the following relationship of objectives:
	\begin{displaymath}
	\Psi(\mathbf{u}^{t+1}) \leq \Psi(\mathbf{v}^{t+1}) \leq \Psi(\mathbf{u}^{t}), \ \forall \ t \in \{0,1,2,\dots\},
	\end{displaymath}
	which implies $\{\Psi(\mathbf{u}^{t})\}$ is a non-increasing sequence.
\end{proof}

\subsection{Critical Point Convergence}
\begin{thm}\label{prop:critical-point}
	If $\{\mathbf{u}^t\}$ be the image sequence generated by GCM, we have any accumulation point of $\{\mathbf{u}^t\}$ is the critical point of $\Psi$ (i.e., it satisfies the first-order necessary optimal condition of Eq.~\eqref{eq:model-u}). 
\end{thm}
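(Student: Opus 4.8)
The plan is to derive the conclusion from the descent estimate of Theorem~\ref{prop:non-increasing} together with the closedness of the limiting subdifferential. First I record what coercivity buys us: since $\{\Psi(\mathbf{u}^t)\}$ is non-increasing by Theorem~\ref{prop:non-increasing}, every iterate lies in the sublevel set $\{\mathbf{u}:\Psi(\mathbf{u})\le\Psi(\mathbf{u}^0)\}$, which is bounded because $\Psi$ is coercive; hence $\{\mathbf{u}^t\}$ is bounded (so accumulation points exist) and $\{\Psi(\mathbf{u}^t)\}$, being monotone and bounded below (a coercive lower semicontinuous function attains its infimum), converges to a finite limit $\Psi^\star$.

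Next I extract a control on the Generator/Corrector mismatch. Rearranging Eq.~\eqref{eq:value-inequality} and using $\Psi(\mathbf{v}^{t+1})\le\Psi(\mathbf{u}^{t})$ gives
\begin{displaymath}
\left(\frac{1}{2\mu^t}-\frac{L}{2}\right)\|\mathbf{u}^{t+1}-\mathbf{v}^{t+1}\|^2\ \le\ \Psi(\mathbf{u}^{t})-\Psi(\mathbf{u}^{t+1}).
\end{displaymath}
Summing over $t$ telescopes the right-hand side to $\Psi(\mathbf{u}^0)-\Psi^\star<\infty$; since the step sizes stay in a compact subinterval of $(0,1/L)$, the coefficient on the left is bounded below by a positive constant, so $\sum_{t}\|\mathbf{u}^{t+1}-\mathbf{v}^{t+1}\|^2<\infty$ and in particular $\|\mathbf{u}^{t+1}-\mathbf{v}^{t+1}\|\to 0$. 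I then rewrite the proximal step (step~\ref{step:pg_u_new} of Alg.~\ref{alg:deconvolution}, cf.\ Eq.~\eqref{eq:pg-energy}) through its first-order optimality condition, which reads $\tfrac{1}{\mu^t}(\mathbf{v}^{t+1}-\mathbf{u}^{t+1})-\nabla f(\mathbf{v}^{t+1})\in\partial\phi(\mathbf{u}^{t+1})$, so that
\begin{displaymath}
\mathbf{d}^{t+1}:=\frac{1}{\mu^t}(\mathbf{v}^{t+1}-\mathbf{u}^{t+1})+\nabla f(\mathbf{u}^{t+1})-\nabla f(\mathbf{v}^{t+1})\ \in\ \nabla f(\mathbf{u}^{t+1})+\partial\phi(\mathbf{u}^{t+1})=\partial\Psi(\mathbf{u}^{t+1}),
\end{displaymath}
and, by Lipschitz smoothness of $f$ and $\mu^t$ bounded away from $0$, $\|\mathbf{d}^{t+1}\|\le(\tfrac{1}{\mu^t}+L)\|\mathbf{v}^{t+1}-\mathbf{u}^{t+1}\|\to 0$.

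Finally I take any accumulation point $\mathbf{u}^\star$, say $\mathbf{u}^{t_j}\to\mathbf{u}^\star$; then $\mathbf{v}^{t_j}\to\mathbf{u}^\star$ and $\mathbf{d}^{t_j}\to 0$ with $\mathbf{d}^{t_j}\in\partial\Psi(\mathbf{u}^{t_j})$. To pass to the limit and conclude $0\in\partial\Psi(\mathbf{u}^\star)$ by graph-closedness of the limiting subdifferential, I need $\Psi(\mathbf{u}^{t_j})\to\Psi(\mathbf{u}^\star)$. I obtain this by using $\mathbf{u}^\star$ as a competitor in the minimization defining $\mathbf{u}^{t_j}$ (the surrogate in Eq.~\eqref{eq:pg-energy}): this yields an inequality whose right-hand side tends to $\phi(\mathbf{u}^\star)$, since the linear and quadratic terms vanish as $\mathbf{u}^{t_j},\mathbf{v}^{t_j}\to\mathbf{u}^\star$ and $\nabla f$ is continuous, giving $\limsup_j\phi(\mathbf{u}^{t_j})\le\phi(\mathbf{u}^\star)$; together with $\liminf_j\phi(\mathbf{u}^{t_j})\ge\phi(\mathbf{u}^\star)$ from lower semicontinuity and the continuity of $f$, this forces $\Psi(\mathbf{u}^{t_j})\to\Psi(\mathbf{u}^\star)$. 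Closedness of $\partial\Psi$ then gives $0\in\partial\Psi(\mathbf{u}^\star)=\nabla f(\mathbf{u}^\star)+\partial\phi(\mathbf{u}^\star)$, i.e.\ $\mathbf{u}^\star$ satisfies the first-order necessary optimality condition of Eq.~\eqref{eq:model-u}.

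I expect this last step to be the main obstacle: upgrading the $\liminf$ to genuine convergence of the function values along the subsequence when $\phi$ is only lower semicontinuous (as in the $\ell_0$ case), since both the sum rule $\partial\Psi=\nabla f+\partial\phi$ and the closedness of the subdifferential graph are formulated in terms of converging function values, not merely converging iterates. The boundedness argument, the telescoping summability, and the conversion of the proximal step into a vanishing subgradient are comparatively routine.
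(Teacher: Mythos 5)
Your proposal is correct and follows essentially the same route as the paper's proof: boundedness and existence of accumulation points from coercivity plus the monotone objectives, summability of $\|\mathbf{u}^{t+1}-\mathbf{v}^{t+1}\|^2$ by telescoping Eq.~\eqref{eq:value-inequality}, the competitor argument in Eq.~\eqref{eq:pg-energy} combined with lower semicontinuity to force $\Psi(\mathbf{u}^{t_j})\to\Psi(\mathbf{u}^{*})$, and the vanishing subgradient from the prox optimality condition closed off by graph-closedness of $\partial\Psi$. Your explicit remarks that the step sizes must stay in a compact subinterval of $(0,1/L)$ and that closedness of the limiting subdifferential requires convergence of the function values are points the paper uses implicitly, so they are welcome clarifications rather than deviations.
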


\begin{proof}
	In this proof, we first verify the existence of accumulation points, then prove any accumulation point of image sequence is the critical point of $\Psi$.
	From Theorem~\ref{prop:non-increasing}, we obtain the non-increasing sequence $\{\Psi(\mathbf{u}^{t})\}$. Since $\Psi$ is coercive, we have the following important inequalities:
	\begin{equation} 
	-\infty < \inf \Psi \leq  \Psi(\mathbf{u}^{t+1}) \leq \Psi(\mathbf{v}^{t+1}) \leq \Psi(\mathbf{u}^{t}) \leq \Psi(\mathbf{u}^{1}).\label{eq:non-increasing}
	\end{equation}  
	It not only indicates sequences $\{\Psi(\mathbf{u}^{t})\}$ and $\{\Psi(\mathbf{v}^{t})\}$ are bounded, but also means the image sequences $\{\mathbf{u}^{t}\}$ and $\{\mathbf{v}^{t}\}$ have accumulation points.
	From Eq.~\eqref{eq:non-increasing}, we find that $\{\Psi(\mathbf{u}^{t})\}$ and $\{\Psi(\mathbf{v}^{t})\}$ share the same limit value $\Psi^{*}$, i.e.,
	\begin{equation} 
	\lim\limits_{t \to \infty} \Psi(\mathbf{u}^{t}) = \lim\limits_{t \to \infty} \Psi(\mathbf{v}^{t}) = \Psi^{*}.\label{eq:value-limit}
	\end{equation}  
	Considering Eq.~\eqref{eq:value-inequality} and \eqref{eq:non-increasing}, the following inequalities are established:
	\begin{equation} 
	\begin{array}{l}
	\quad \left( \frac{1}{2\mu^t} - \frac{L}{2}\right) \|\mathbf{u}^{t+1} - \mathbf{v}^{t+1}\|^2 \\
	\leq \Psi(\mathbf{v}^{t+1}) - \Psi(\mathbf{u}^{t+1}) \\
	\leq \Psi(\mathbf{u}^{t}) - \Psi(\mathbf{u}^{t+1}).
	\end{array}
	\end{equation}   
	Then sum over $t$ to obtain
	\begin{equation} 
	\min\limits_{t}\left\{ \frac{1}{2\mu^t} - \frac{L}{2}  \right\} \sum_{t=0}^{\infty}  \|\mathbf{u}^{t+1} - \mathbf{v}^{t+1}\|^2  \leq \Psi(\mathbf{u}^{0}) - \Psi^{*} < \infty,
	\end{equation} 
	which implies $ \|\mathbf{u}^{t+1} - \mathbf{v}^{t+1}\| \to 0 $ when $t \to \infty$. Thus $\{\mathbf{u}^{t}\}$ and $\{\mathbf{v}^{t}\}$ share the same accumulation points. Assuming the set of accumulation points is $\Omega$ and $\mathbf{u}^{*}$ is one of its elements, i.e., $\mathbf{u}^{t_j} \to \mathbf{u}^{*}$ when $j \to \infty$. 
	Using Eq.~\eqref{eq:pg-energy}, we have the similar inequality with Eq.~\eqref{eq:criterion} as following:
	\begin{equation} 
	\begin{array}{l}
	\quad \phi(\mathbf{u}^{t+1}) + \langle \nabla f(\mathbf{v}^{t+1}), \mathbf{u}^{t+1} - \mathbf{v}^{t+1} \rangle + \frac{\mu^t}{2}\| \mathbf{u}^{t+1} - \mathbf{v}^{t+1}\|^2 \\
	\leq \phi(\mathbf{u}^{*}) + \langle \nabla f(\mathbf{v}^{t+1}), \mathbf{u}^{*} - \mathbf{v}^{t+1} \rangle + \frac{\mu^t}{2}\| \mathbf{u}^{*} - \mathbf{v}^{t+1}\|^2
	\end{array}
	\end{equation} 
	Let $t_j = t+1$, we have $ \lim\sup\limits_{j \to \infty} \phi(\mathbf{u}^{t_j}) \leq \phi(\mathbf{u}^{*})$ by taking $\lim\sup$ on both side of above inequality when $j \to \infty$. Furthermore, since $\phi$ is lower semi-continuous and $\mathbf{u}^{t_j} \to \mathbf{u}^{*}$, which follows $ \lim\sup\limits_{j \to \infty} \phi(\mathbf{u}^{t_j}) \geq \phi(\mathbf{u}^{*})$. Thus $\lim\limits_{j\to\infty} \phi(\mathbf{u}^{t_j}) = \phi(\mathbf{u}^{*}) $ is successful. 
	Note that Lipschitz smooth of $f$ implies $f$ is continuity, which yields $\lim\limits_{j\to\infty} f(\mathbf{u}^{t_j}) = f(\mathbf{u}^{*})$. Thus we conclude 
	\begin{equation} 
	\lim\limits_{j\to\infty} \Psi(\mathbf{u}^{t_j}) = \Psi(\mathbf{u}^{*}).\label{eq:value-Psi}
	\end{equation} 
	Recall that $\lim\limits_{j\to\infty} \Psi(\mathbf{u}^{t+1}) = \Psi^{*} $ in Eq.~\eqref{eq:value-limit}, we have 
	$$\Psi(\mathbf{u}^{*}) = \Psi^{*} .$$
	By first-order necessary optimal condition of Eq.~\eqref{eq:pg-energy} and $t_j = t+1$, we have
	\begin{displaymath}
	\begin{array}{l}
	\quad 0 \in \partial \phi(\mathbf{u}^{t_j}) + \nabla f(\mathbf{v}^{t_j}) + \frac{1}{\mu^{t}} (\mathbf{u}^{t_j} - \mathbf{v}^{t_j} ) \\
	\Leftrightarrow \nabla f(\mathbf{u}^{t_j}) - \nabla f(\mathbf{v}^{t_j}) - \frac{1}{\mu^{t}} (\mathbf{u}^{t_j} - \mathbf{v}^{t_j} ) \in \partial \Psi(\mathbf{u}^{t_j})\\
	\Rightarrow \| \nabla f(\mathbf{u}^{t_j}) - \nabla f(\mathbf{v}^{t_j}) - \frac{1}{\mu^{t}} (\mathbf{u}^{t_j} - \mathbf{v}^{t_j} ) \| \\
	\quad \leq (L +\frac{1}{\mu^{t}} ) \|\mathbf{u}^{t_j} -\mathbf{v}^{t_j} \| \to 0, \ \text{as} \ j \to \infty.
	\end{array}
	\end{displaymath}
	This together with Eq.~\eqref{eq:value-Psi} concludes that 
	\begin{displaymath}
	0 \in \partial \Psi(\mathbf{u}^{*}), \ \forall \ \mathbf{u}^{*} \in \Omega.
	\end{displaymath}
	Therefore, we have that the accumulation point $\mathbf{u}^{*}$ satisfies first-order necessary optimal condition and thus is the critical point of $\Psi$.
	
\end{proof}

\begin{acks}
	This work is partially supported by the National Natural Science Foundation of China (Nos. 61672125, 61733002, 61572096, 61432003 and 61632019), and the Fundamental Research Funds for the Central Universities.
\end{acks}

	\bibliographystyle{ACM-Reference-Format}
	\bibliography{reference}
	
\end{document}